\definecolor{navy}{RGB}{0,0,128}
\newtheorem{proposition}{Proposition}[section]
\newtheorem{lemma}{Lemma}[section]
\newcommand{\cmark}{\ding{51}}  
\newcommand{\xmark}{\ding{55}}  
\title{Multi-Output Robust and Conjugate Gaussian Processes}
\author[1]{Joshua Rooijakkers}
\author[2]{Leiv R{\o}nneberg}
\author[3]{Fran\c{c}ois-Xavier Briol}
\author[3]{Jeremias Knoblauch}
\author[3]{Matias Altamirano}
\affil[1]{\normalsize \textit{Econometric Institute, Erasmus University Rotterdam, The Netherlands}}
\affil[2]{\normalsize \textit{Department of Mathematics, University of Oslo, Norway}}
\affil[3]{\normalsize \textit{Department of Statistical Science, University College London, United Kingdom}}
\begin{document}
\maketitle

\begin{abstract}
  Multi-output Gaussian process (MOGP) regression allows modelling dependencies among multiple correlated response variables. Similarly to standard Gaussian processes, MOGPs are sensitive to model misspecification and outliers, which can distort predictions within individual outputs. 
  This situation can be further exacerbated by multiple anomalous response variables whose errors propagate due to correlations between outputs. To handle this situation, we extend and generalise the robust and conjugate Gaussian process (RCGP) framework introduced by \cite{altamirano2024robustGP}. This results in the multi-output RCGP (MO-RCGP)---a provably robust MOGP that is conjugate, and jointly captures correlations across outputs. We thoroughly evaluate our approach through applications in finance and cancer research.
\end{abstract}

\section{Introduction}

Gaussian processes (GPs; \citealp{williams2006gaussian}) provide a flexible Bayesian model for latent functions which is widely used in statistics and machine learning.
While GPs are usually formulated as having a single response variable (and hence a single output), many real-world applications require simultaneous predictions of multiple interacting response variables, leading to the natural extension to \textit{multi-output Gaussian processes} (MOGPs; \citealp{bonilla2007multi, alvarez2012kernels}). By modelling these correlated variables jointly, MOGPs are well-suited for a wide variety of tasks, such as multi-task regression \citep{alvarez2012kernels}, causal inference \citep{ alaa2017bayesian,dimitriou2024data}, spatio-temporal modelling \citep{genton2015cross}, emulation \citep{conti2010bayesian}, numerical integration \citep{Xi2018MultiOutput,Sun2021} and multi-task Bayesian optimisation \citep{swersky2013multi, maddox2021bayesian}.

\begin{figure}[t]
    \centering
    \includegraphics[width=0.6\textwidth]{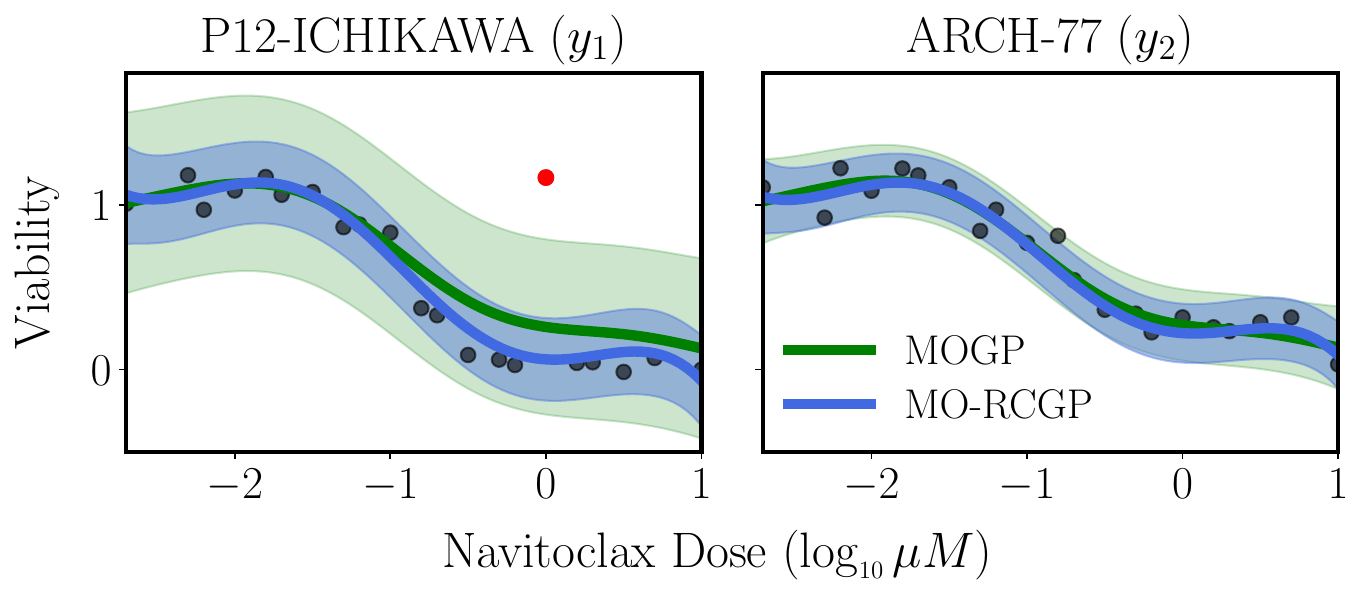}
    \caption{\textit{Navitoclax Dose-Response with Outlier.} We show the viability of two cancer cell lines exposed to varying doses of the drug Navitoclax (data available from the \href{https://www.cancerrxgene.org/}{GDSC Database}). 
    The \textcolor{Green}{\textbf{MOGP}} predictions are sensitive to the outlier (\textcolor{Red}{red dot}), while the \textcolor{RoyalBlue}{\textbf{MO-RCGP}} predictions are robust. 
    }
    \vspace*{-0.4cm}
    \label{fig:cisplatin_MOGP}
\end{figure}

Similarly to standard GPs, MOGPs offer an exact Bayesian closed-form posterior under the assumption of a Gaussian likelihood \citep{alvarez2012kernels}. This assumption is computationally convenient, but also makes MOGPs vulnerable to outliers, model misspecification, or non-Gaussian noise. As a result, predictions and uncertainty estimates may become unreliable. In the multi-output setting, however, this issue is even more pronounced: contaminated observations in one output can not only distort predictions for that output but also propagate errors to correlated outputs. This is a common problem in various applications, including in cancer research, where MOGPs are used to model how the viability of different correlated cancer cell samples respond to varying drug doses \citep{ronneberg2021bayesynergy, ronneberg2023dose}. Such experiments often produce extremely noisy measurements due to the heterogeneity of cell growth and technical issues with the experimental assays, such as evaporation or contamination---which in turn distorts MOGP predictions. Figure \ref{fig:cisplatin_MOGP} illustrates this for the drug Navitoclax, where such outliers cause the MOGP predictions to lie above the observed dose-response values, with increased uncertainty. If unaccounted for, this can bias summary statistics commonly used to link genetic markers to drug efficacy, and ultimately lead to incorrect treatments and poor health outcomes.

To address these challenges, prior work on GPs has replaced Gaussian noise  with heavier-tailed alternatives such as the Student-$t$, Laplace, and mixture distributions \citep{jylanki2011robust, huang2024robust, kuss2006gaussian, lu2023robust}. For MOGPs, \cite{jankowiak2019neural} use a neural likelihood, while \cite{yu2007robust, chen2020multivariate} introduced a Student-$t$ process prior over the latent function. While these approaches offer greater robustness to outliers, they also break conjugacy, and therefore require approximate inference and additional non-convex optimisation routines with overall computational costs that far exceeds their conjugate counterparts. This is already problematic for standard GPs, but even more consequential for MOGPs: even conjugate inference in MOGPs scales cubically in both the number of samples \textit{and} the number of outputs. 

To remedy this challenge in the context of standard GP regression, and to provide robustness and conjugacy at the same time, \cite{altamirano2024robustGP} introduced \textit{robust and conjugate GPs} (RCGPs), which use generalised Bayesian inference \citep{bissiri2016general, knoblauch2022optimization}. Though setting hyperparameters for RCGPs initially posed some challenges \citep[see][]{ament2024robust}, recent work has convincingly  addressed these concerns \citep{laplante2025robust}. 
Critically however, neither the original RCGP framework of \citet{altamirano2024robustGP} nor its improvements in \citet{laplante2025robust} allow for the multi-output case.

In this paper, we fill this research gap by extending and generalising RCGPs to the multi-output case, leading to \textit{multi-output RCGPs} (MO-RCGPs). Our method jointly models correlated response variables, is fully conjugate for regression, and inherits the robustness properties of scalar RCGPs. Furthermore, we propose a novel hyperparameter optimisation procedure which mitigates the issues which a naive implementation of RCGPs would face in the multi-output setting. We show that MO-RCGPs are flexible, and that they can naturally accommodate so-called \textit{heterotopic data}---a common scenario in multi-output problems where we do not observe data for some of the output channels. Empirically, we demonstrate that MO-RCGPs achieve performance comparable to other robust MOGPs at a fraction of their computational cost. In summary, we make three key contributions:
\begin{enumerate}
    \item We adapt the RCGP framework to the multi-output case and derive the associated closed forms for the posterior and predictive distributions.
    \item We formally establish robustness to outliers. 
    \item We develop hyperparameter optimisation methods well-suited for the multi-output case.
\end{enumerate}

\section{Background} \label{sec:background}

\paragraph{GP Regression}

Let $\mathcal{D} = \{X, \boldsymbol{y}\}$ be a dataset of $N$ observations, where $X = [\mathbf{x}_1, \dots, \mathbf{x}_{\scriptscriptstyle N}]^\top$ and $\boldsymbol{y} = [y_1, \dots, y_N]^\top$, with $\mathbf{x}_i \in \mathcal{X} \subseteq \mathbb{R}^d$ and $y_i \in \mathbb{R}$ for all $i \in \{1,\ldots,N\}$. GP regression \citep{Stein1990,williams2006gaussian,Garnett2021} models these observations as noisy evaluations of a latent function $f: \mathcal{X} \rightarrow \mathbb{R}$ with additive noise $\varepsilon_i \in \mathbb{R}$ via
\begin{align*}
    y_i = f(\mathbf{x}_i) + \varepsilon_i, \qquad \forall i = 1, \dots, N. \label{eq:test}
\end{align*}
Our prior beliefs about the unknown function $f$ are expressed through a GP prior $f \sim \mathcal{GP}(m, \kappa)$ for $m: \mathcal{X} \rightarrow \mathbb{R}$ and $\kappa: \mathcal{X} \times \mathcal{X} \rightarrow \mathbb{R}$ denoting \textit{scalar-valued} mean and covariance (or kernel) functions, respectively. 
Here, $m$ and $\kappa$ capture our prior beliefs about properties such as smoothness and sparsity of $f$, and typically depend on hyperparameters $\bm{\theta} \in \Theta \subseteq \mathbb{R}^Q$ which are suppressed from notation for legibility. 
Under the GP prior, the corresponding latent function vector $\boldsymbol{f} = [f(\mathbf{x}_1), \dots, f(\mathbf{x}_N)]^\top$ follows an $N$-dimensional Gaussian distribution with density $p(\boldsymbol{f}|X) = \mathcal{N}(\boldsymbol{f}; \boldsymbol{m}, K)$, where $\boldsymbol{m} = [m(\mathbf{x}_1), \dots, m(\mathbf{x}_{\scriptscriptstyle N})]^\top$ and $K_{ij} = \kappa(\mathbf{x}_i, \mathbf{x}_j)$.

If $\bm{\varepsilon} = [\varepsilon_1, \dots, \varepsilon_N]^\top \in \mathbb{R}^N$ is assumed to be Gaussian so that $\bm{\varepsilon} \sim \mathcal{N}(0, \sigma^2 I_N)$, where $I_N$ is the $N$-dimensional identity matrix, the posterior predictive density for $f_\star = f(\mathbf{x}_\star)$ at $\mathbf{x}_\star \in \mathcal{X}$ has the closed form
\begin{equation}
\begin{aligned}
    &p(f_\star|\mathbf{x}_\star, \boldsymbol{y}, X) = \mathcal{N}(f_\star; \mu_\star^{\scriptscriptstyle \text{GP}}, \Sigma^{\scriptscriptstyle \text{GP}}_\star),\\
    &\mu_\star^{\scriptscriptstyle \text{GP}} = m_\star + k_\star^\top (K + \sigma^2\textcolor{Green}{I_{\scriptscriptstyle N}})^{-1}(\boldsymbol{y} - \textcolor{Green}{\boldsymbol{m}}),\\
    &\Sigma^{\scriptscriptstyle \text{GP}}_\star = k_{\star\star} - k_\star^\top (K + \sigma^2\textcolor{Green}{I_{\scriptscriptstyle N}})^{-1}k_\star,
\end{aligned}
\label{eq:gp_predictive}
\end{equation}
where $k_\star = [\kappa(\mathbf{x}_\star, \mathbf{x}_1), \dots, \kappa(\mathbf{x}_\star, \mathbf{x}_N)]^\top$, $k_{\star\star} = \kappa(\mathbf{x}_\star, \mathbf{x}_\star)$ and $m_\star = m(\mathbf{x}_\star)$. The green colour is later used to highlight  differences between GPs and RCGPs.

\paragraph{Robust and Conjugate GPs}

The closed-form predictive relies on the assumption that the observation noise $\bm{\varepsilon}$ is Gaussian.
If this assumption is accurate, a Bayesian approach constitutes a way of learning from data which is in some sense optimal \citep{zellner1988optimal}.
Critically however, this optimality breaks down in the presence of model misspecification, such as in the case of heavy-tailed noise or outliers when working with GPs. 
To address this issue, recent work has explored generalised Bayesian (GB) inference  (e.g. \citealp{grunwald2012safe, bissiri2016general, ghosh2016robust, miller2019robust, knoblauch2022optimization, dellaporta2022robust,fong2023martingale, wild2023rigorous, mclatchie_predictive_2025, shenprediction2025}).
Most of these generalisations are Gibbs posteriors:
\begin{align}
    p_{\scriptscriptstyle\text{GB}}(\boldsymbol{f}|\boldsymbol{y}, X) \propto p(\boldsymbol{f}|X) \exp (- N \mathcal{L}(\boldsymbol{f}, \boldsymbol{y}, X)),\label{eq:gb_update}
\end{align}
where $\mathcal{L}$ denotes an empirical loss function measuring the disagreement between the statistical model and the observed data, and different loss functions are chosen to provide robustness or computational efficiency (see e.g. \citealp{knoblauch2018doubly, futami2018variational, boustati2020generalised, matsubara2022robust, matsubara2024generalized, altamirano2023robust, duran2024outlier}). 

Recently, \citet{altamirano2024robustGP}, \citet{sinaga2024computation}, and  \citet{laplante2025robust} applied generalised Bayesian inference to GPs by using a \textit{weighted Fisher divergence} as the loss function \citep{hyvarinen2005estimation, barp2019minimum, xu2022generalized}, given by
\begin{equation}
\begin{aligned}
    D_w(f) = \mathbb{E}_{\tilde{\mathbf{x}} \sim p_{0, x}}\bigl[\mathbb{E}_{\tilde{{y}}\sim p_0(\cdot|\tilde{\mathbf{x}})}\big[\|w(\tilde{\mathbf{x}}, \tilde{{y}}) (s_{\text{model}}(\tilde{\mathbf{x}}, \tilde{{y}}) - s_{\text{data}}(\tilde{\mathbf{x}}, \tilde{{y}}))\|^2_2 \big]\bigr]
\end{aligned}
\label{eq:DSM_divergence}
\end{equation}
where $s_{\text{model}}$ and $s_\text{data}$ are the scores of the respective posited and true conditional likelihoods $p_\theta(y|\mathbf{x})$ and $p_0(y|\mathbf{x})$, i.e. $s_{\text{model}} = \nabla_y \log p_\theta(y|\mathbf{x})$. Importantly, $w: \mathcal{X}\times \mathbb{R} \rightarrow \mathbb{R}\backslash\{0\}$ is a continuously differentiable weight function that downweights extreme observations. When positing a Gaussian model, \cite{altamirano2024robustGP} show that the GB posterior with the above loss function is conjugate, and its closed-form predictive is referred to as the RCGP predictive:
\begin{equation}
\begin{aligned}
    &p_w(f_\star|x_\star, \mathbf{x}, \boldsymbol{y}) = \mathcal{N}(f_\star; \mu_\star^{\scriptscriptstyle \text{RCGP}}, \Sigma^{\scriptscriptstyle \text{RCGP}}_\star),\\
    &\mu_\star^{\scriptscriptstyle \text{RCGP}} = m_\star + k_\star^\top (K + \sigma^2\textcolor{RoyalBlue}{\boldsymbol{J}_{\mathbf{w}}})^{-1}(\boldsymbol{y} - \textcolor{RoyalBlue}{\boldsymbol{m}_\mathbf{w}}),\\
    &\Sigma^{\scriptscriptstyle \text{RCGP}}_\star = k_{\star\star} - k_\star^\top (K + \sigma^2\textcolor{RoyalBlue}{\boldsymbol{J}_{\mathbf{w}}})^{-1}k_\star,
\end{aligned}
\label{eq:rcgp_predictive}
\end{equation}
where the matrix $\textcolor{RoyalBlue}{\boldsymbol{J}_{\mathbf{w}}} =\text{diag}(\sigma^2 \mathbf{w}^{-2}/2) \in \mathbb{R}^{N \times N}$, the vector $\textcolor{RoyalBlue}{\boldsymbol{m}_\mathbf{w}} = \boldsymbol{m} + \sigma^2 \nabla_y \log (\mathbf{w}^2) \in \mathbb{R}^N$, and $\mathbf{w} = [w(\mathbf{x}_1, y_1), \dots, w(\mathbf{x}_N, y_N)]^\top \in \mathbb{R}^N$. 
While this closely resembles the results of standard GPs in Equations \eqref{eq:gp_predictive}, there are two key differences highlighted in \textcolor{RoyalBlue}{\textbf{blue}}.
Note further that the above is a strict generalisation of GPs: the specific choice $w(\mathbf{x}, y) = \sigma/\sqrt{2}$ recovers standard GPs.

\paragraph{Multi-Output GPs}

In many applications, we are not interested in modelling a scalar-valued function, but a vector-valued latent function $\mathbf{f}: \mathcal{X} \rightarrow \mathbb{R}^T$, where each component $f_t: \mathcal{X} \rightarrow \mathbb{R}$ corresponds to one of $T$ outputs such that $\mathbf{f} = [f_1, \dots, f_T]^\top$. MOGPs, also known as co-kriging, extend standard GPs to this setting by also capturing the correlations across output dimensions \citep{alvarez2012kernels}. From here on out, we focus exclusively on MOGPs, and all notation and definitions should be interpreted in this context. Let $Y = [\mathbf{y}_1, \dots, \mathbf{y}_{\scriptscriptstyle N}]^\top$ and $\mathcal{E} = [\bm{\varepsilon}_{1}, \dots, \bm{\varepsilon}_{\scriptscriptstyle N}]^\top$ denote $N \times T$ matrices where $\mathbf{y}_i = [y_{i, 1}, \dots, y_{i, T}]^\top$ and $\bm{\varepsilon}_i = [\varepsilon_{i, 1}, \dots, \varepsilon_{i, T}]^\top$.

We place a GP prior on the latent function $\mathbf{f} \sim \mathcal{GP}(\mathbf{m}, \mathcal{K})$, where, in contrast to scalar GPs, the mean $\mathbf{m}: \mathcal{X} \rightarrow \mathbb{R}^{T}$ is now \textit{vector-valued}, and the kernel function $\mathcal{K}: \mathcal{X} \times \mathcal{X} \rightarrow \mathbb{R}^{T\times T}$ is \textit{matrix-valued} \citep{Micchelli2004}. In line with this, the $T$ elements in $\mathbf{m} = [m_1, \dots, m_T]^\top$ denote the prior means for the $T$ outputs.
Additionally, the kernel $\mathcal{K}(\mathbf{x}, \mathbf{x}')$ is a positive semidefinite matrix-valued function, whose entries $[\mathcal{K}(\mathbf{x}, \mathbf{x}')]_{t,t'}$ encode the covariance between the outputs $f_t(\mathbf{x})$ and $f_{t'}(\mathbf{x'})$ and capture both the smoothness within the same output with respect to inputs $\mathbf{x}$ and $\mathbf{x}'$ and the correlations across outputs $t$ and $t'$. Common approaches include the intrinsic coregionalisation model (ICM; \citealp{goovaerts1997geostatistics}), the linear model of coregionalisation \citep{alvarez2013linear}, process convolutions \citep{etde_5214736} and spectral mixtures \citep{parra2017spectral, altamirano2022nonstationary}. For example, in the ICM, the kernel is assumed to be separable, such that it can be written as the product $[\mathcal{K}(\mathbf{x}, \mathbf{x}')]_{t,t'} = B_{t,t'}\kappa(\mathbf{x}, \mathbf{x}')$ with a coregionalisation matrix $B \in \mathbb{R}^{T \times T}$, which represents the covariance matrix of the latent functions across outputs, and a scalar-valued kernel $k(\mathbf{x}, \mathbf{x}')$ representing properties of individual outputs.

When assuming that observations follow a Gaussian distribution and are independent across outputs such that $\text{vec}(\mathcal{E}) \sim \mathcal{N}(0, \mathbf{\Sigma})$, where $\mathbf{\Sigma} = \Sigma \otimes I_N \in \mathbb{R}^{NT \times NT}$ with $\Sigma = \mathrm{diag}(\sigma_1^2, \dots, \sigma_T^2) \in \mathbb{R}^{T \times T}$ denoting the variances in each output, the MOGP posterior and predictive are available in closed form. Here, $\textup{vec}(\cdot)$ denotes vectorisation of a matrix by stacking its columns, and $\otimes$ denotes the Kronecker product. We define $\mathbf{y}_{\mathrm{vec}} = \mathrm{vec}(Y) \in \mathbb{R}^{NT}$ and $\mathbf{m}_{\mathrm{vec}} = \mathrm{vec}(M)  \in \mathbb{R}^{NT}$, where $M = [\mathbf{m}_1, \dots, \mathbf{m}_{\scriptscriptstyle{N}}]^\top$ with $\mathbf{m}_i = \mathbf{m}(\mathbf{x}_i)$. In addition, the $NT \times NT$ block-structured kernel matrix is $\mathbf{K} = \left[\mathcal{K}_{tt'}(X,X)\right]_{t,t' = 1}^T$, where each $\mathcal{K}_{tt'}(X, X)$ is the Gram matrix between inputs for outputs $t$ and $t'$. Finally, let $F = [\mathbf{f}_1, \dots, \mathbf{f}_{\scriptscriptstyle{N}}]^\top$ be the $N \times T$ matrix of latent function evaluations, with $\mathbf{f}_i = \mathbf{f}(\mathbf{x}_i) = [f_1(\mathbf{x}_i), \dots, f_T(\mathbf{x}_i)]^\top$ and $\mathbf{f}_{\mathrm{vec}} = \mathrm{vec}(F)$. Then, the closed-form MOGP predictive density for $f_\star = [f_1(\mathbf{x}_\star), \dots, f_T(\mathbf{x}_\star)]^\top$ is
\begin{equation}
\begin{aligned}
    &p(f_\star | \mathbf{x}_\star, Y, X) = \mathcal{N}(f_\star ; \bm{\mu}_\star^{\scriptscriptstyle \text{MOGP}}, \bm{\Sigma}_\star^{\scriptscriptstyle \text{MOGP}}),\\
    &\bm{\mu}_\star^{\scriptscriptstyle \text{MOGP}} = \mathbf{m}_\star + \mathbf{k}_\star^\top(\mathbf{K} + \mathbf{\Sigma} \,\textcolor{Green}{I_{\scriptscriptstyle NT}})^{-1}(\mathbf{y}_{\text{vec}} - \textcolor{Green}{\mathbf{m}_{\text{vec}}}),\\
    &\bm{\Sigma}_\star^{\scriptscriptstyle \text{MOGP}} = \mathbf{k}_{\star\star} - \mathbf{k}_\star^\top(\mathbf{K} + \mathbf{\Sigma} \,\textcolor{Green}{I_{\scriptscriptstyle NT}})^{-1}\mathbf{k}_\star,
\end{aligned}
\label{eq:mogp_predictive}
\end{equation}
where $\mathbf{k}_\star = [\mathcal{K}(\mathbf{x}_\star, \mathbf{x}_1), \dots, \mathcal{K}(\mathbf{x}_\star, \mathbf{x}_{\scriptscriptstyle N})]^\top \in \mathbb{R}^{NT \times T}$, $\mathbf{k}_{\star\star} = \mathcal{K}(\mathbf{x}_\star, \mathbf{x}_\star) \in \mathbb{R}^{T \times T}$ and $\mathbf{m}_\star = \mathbf{m}(\mathbf{x}_\star) \in \mathbb{R}^T$. We note that it is straightforward to apply the expressions above in the case of heterotopic data---where only $T_i < T$ outputs are observed for input $\mathbf{x}_i$, by simply restricting $\mathbf{y}_\textup{vec}, \mathbf{m}_\textup{vec}$ and the corresponding rows and columns of $\mathbf{K}$ and $\mathbf{\Sigma}$ to the observed entries. In general, computing the MOGP predictive in  \eqref{eq:mogp_predictive} requires inverting an $NT \times NT$ matrix, leading to a worst-case computational complexity of $\mathcal{O}(N^3T^3)$, which quickly becomes expensive as either the number of samples $N$ or number of outputs $T$ grows. 

Despite their large computational cost, MOGPs are very useful, particularly for predicting multiple correlated response variables. In this setting, they share information across outputs---which independent scalar GPs cannot do. However, they also assume Gaussian errors. While this assumption enables conjugacy and closed-form posteriors, it also constitutes an even bigger pronounced problem than in the scalar case: outliers can distort not only the predictions corresponding to their {own} output, but also those of \textit{other} correlated outputs. While several robust multi-output GP algorithms have been proposed \citep{huang2024robust, yu2007robust}, they do not result in exact conjugate updates. As a result, they further inflate the already substantive computational budget of MOGPs.

\section{Methodology} \label{sec:methodology}

We again consider the weighted Fisher divergence, but instead of the scalar observations in Equation \eqref{eq:DSM_divergence}, we now focus on the case of vector-valued observations, as also studied in \cite{barp2019minimum, altamirano2023robust}. This divergence measures the discrepancy between vector-valued scores using a weight function $W:\mathcal{X} \times \mathbb{R}^T \rightarrow \mathbb{R}^{T \times T}$ returning sufficiently regular positive semi-definite matrices, and is given by
\begin{equation}
\begin{aligned}
\mathbb{E}_{\tilde{\mathbf{x}}\sim p_{0,x}}\bigl[\mathbb{E}_{\tilde{\mathbf{y}}\sim p_0{(\cdot | \tilde{\mathbf{x}})}}[\|W(\tilde{\mathbf{x}}, \tilde{\mathbf{y}})^\top \big( s_{\text{model}}(\tilde{\mathbf{x}}, \tilde{\mathbf{y}}) - s_{\text{data}}(\tilde{\mathbf{x}}, \tilde{\mathbf{y}})\big)\|^2_2]\bigr],
    \label{eq:multivariate_DSM}
\end{aligned}
\end{equation}
where $\tilde{\mathbf{x}} \in \mathbb{R}^d$ and $\tilde{\mathbf{y}} \in \mathbb{R}^T$ are random variables and $p_{0,x}$ and $p_0$ now represent corresponding densities. Additionally, $s_{\text{model}}$  and $s_{\text{data}}$ denote $T$-sized score vectors of the model and the true data, where $s_{\text{model}}(\mathbf{x}, \mathbf{y}) = \nabla_{\mathbf{y}}\log p_{\theta}(\mathbf{y}|\mathbf{x})$ for posited model $p_{\theta}$. Importantly, $W(\mathbf{x}, \mathbf{y})$ is now a $T \times T$ matrix that depends on observations across outputs. Mild assumptions on $W$, such as pointwise invertibility and continuous differentiability, are sufficient for conjugacy (\citealp{liu2022estimating}; Appendix~\ref{app:proof_MORCGP}), but for practical purposes we choose a diagonal form such that $W_i := W(\mathbf{x}_i, \mathbf{y}_i) = \text{diag}(w_1(\mathbf{x}_i, \mathbf{y}_i), \dots, w_T(\mathbf{x}_i, \mathbf{y}_i))$, where $w_t:\mathcal{X} \times \mathbb{R} \rightarrow \mathbb{R}\backslash\{0\}$ is a scalar-valued continuously-differentiable weight function as used in RCGPs. This structure makes our weights interpretable---each observation $y_{i,t}$ is weighted by $w_{i,t} := w_t(\mathbf{x}_i, \mathbf{y}_i)$---and  facilitates robust hyperparameter optimisation. Using the above to construct a generalised posterior as in Equation~\eqref{eq:gb_update} then results in the  MO-RCGP predictive.
\\
\begin{proposition}\label{prop:MORCGP}
    Let $\textup{vec}(\mathcal{E}) \sim \mathcal{N}(0, \mathbf{\Sigma})$ where $\mathbf{\Sigma} = \Sigma \otimes I_N$ for a $T$-dimensional diagonal $\Sigma$, and $W_i = \textup{diag}(w_1(\mathbf{x}_i, \mathbf{y}_i), \dots, w_T(\mathbf{x}_i, \mathbf{y}_i))$, where $y_{i,t} \mapsto w_t(\mathbf{x}_i, \mathbf{y}_i)$ is continuously differentiable. Then, the MO-RCGP predictive density is given by
    \begin{align*}
        &p_{\scriptscriptstyle W}(f_\star | \mathbf{x}_\star, Y, X) = \mathcal{N}(f_\star ; \bm{\mu}_\star^{\scriptscriptstyle \textup{MORCGP}}, \bm{\Sigma}_\star^{\scriptscriptstyle \textup{MORCGP}}),\\
        &\bm{\mu}_\star^{\scriptscriptstyle \textup{MORCGP}} = \mathbf{m}_\star + \mathbf{k}_\star^\top(\mathbf{K} + \mathbf{\Sigma} \,\textcolor{RoyalBlue}{\mathbf{J}_{\scriptscriptstyle \mathbf{W}}})^{-1}(\mathbf{y}_{\textup{vec}} - \textcolor{RoyalBlue}{\mathbf{m}_{\mathbf{W},\textup{vec}}}),\\
        &\bm{\Sigma}_\star^{\scriptscriptstyle \textup{MORCGP}} = \mathbf{k}_{\star\star} - \mathbf{k}_\star^\top(\mathbf{K} + \mathbf{\Sigma} \,\textcolor{RoyalBlue}{\mathbf{J}_{\scriptscriptstyle \mathbf{W}}})^{-1}\mathbf{k}_\star,
    \end{align*}
    where the matrix $\textcolor{RoyalBlue}{\mathbf{J}_{\scriptscriptstyle \mathbf{W}}} = \frac{1}{2} \mathbf{\Sigma} \mathbf{W}^{-2} \in \mathbb{R}^{NT \times NT}$,  the vector $\textcolor{RoyalBlue}{\mathbf{m}_{\mathbf{W},\textup{vec}}} = \mathbf{m}_\textup{vec} + \mathbf{\Sigma} ( \nabla_\mathbf{y} \cdot \log(\mathbf{W}^2)) \in \mathbb{R}^{NT}$,  $\mathbf{W} = \textup{diag}(\mathbf{W}_1, \dots, \mathbf{W}_{T})\in \mathbb{R}^{NT \times NT}$, $\mathbf{W}_t = \textup{diag}(w_t(\mathbf{x}_1, \mathbf{y}_1), \dots, w_t(\mathbf{x}_{\scriptscriptstyle N} , \mathbf{y}_{\scriptscriptstyle N})) \in \mathbb{R}^{N \times N}$, and $\nabla_{\mathbf{y}} \cdot \log (\mathbf{W}^2) = [\nabla_{\mathbf{y}} \cdot \log(\mathbf{W}_1^2)^\top, \dots, \nabla_{\mathbf{y}} \cdot \log(\mathbf{W}_{\scriptscriptstyle T}^2)^\top]^\top \in \mathbb{R}^{NT}$.
\end{proposition}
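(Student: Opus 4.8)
The plan is to construct the generalised posterior in \eqref{eq:gb_update} explicitly and show that, under the Gaussian model, it collapses to a standard MOGP predictive with a modified noise covariance and a shifted observation vector. First I would obtain the empirical loss. Starting from the population weighted Fisher divergence \eqref{eq:multivariate_DSM}, I replace the outer expectation by the empirical average over the observed pairs $(\mathbf{x}_i, \mathbf{y}_i)$ and eliminate the intractable data score $s_{\text{data}} = \nabla_{\mathbf{y}}\log p_0$ through the weighted score-matching integration-by-parts identity \citep{hyvarinen2005estimation, liu2022estimating}. Writing $A_i = W_i W_i^\top = W_i^2$, this yields, up to an additive constant independent of $\mathbf{f}$,
\[
\mathcal{L}(\mathbf{f}, Y, X) = \frac{1}{N}\sum_{i=1}^N\Big[\, s_{\text{model}}(\mathbf{x}_i,\mathbf{y}_i)^\top A_i\, s_{\text{model}}(\mathbf{x}_i,\mathbf{y}_i) + 2\,\nabla_{\mathbf{y}}\!\cdot\!\big(A_i\, s_{\text{model}}(\mathbf{x}_i,\mathbf{y}_i)\big)\Big],
\]
where $\nabla_{\mathbf{y}}\cdot$ denotes the divergence in $\mathbf{y}_i$. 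The regularity conditions on $W$ (pointwise invertibility and continuous differentiability) together with mild tail conditions on $p_0$ ensure the boundary terms in the integration by parts vanish.

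Next I would substitute the Gaussian model score. For $p_\theta(\mathbf{y}_i|\mathbf{f}_i) = \mathcal{N}(\mathbf{y}_i;\mathbf{f}_i,\Sigma)$ we have $s_{\text{model}}(\mathbf{x}_i,\mathbf{y}_i) = -\Sigma^{-1}(\mathbf{y}_i - \mathbf{f}_i)$, which is \emph{linear} in $\mathbf{f}_i$. This makes the first term a quadratic form in $\mathbf{f}_i$ and the divergence term affine in $\mathbf{f}_i$; because $W_i$ (hence $A_i$) depends on $\mathbf{y}_i$, the divergence requires the product rule, and the factor $\partial_{y_{i,t}}(w_{i,t}^2) = w_{i,t}^2\,\partial_{y_{i,t}}\log(w_{i,t}^2)$ is precisely what produces the log-derivative term. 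Collecting everything and vectorising in the output-major ordering used for $\mathbf{y}_{\textup{vec}}$, the $\mathbf{f}$-dependent part of $N\mathcal{L}$ equals, up to a constant,
\[
(\mathbf{y}_{\textup{vec}} - \mathbf{f}_{\textup{vec}})^\top \mathbf{\Sigma}^{-1}\mathbf{W}^2\mathbf{\Sigma}^{-1}(\mathbf{y}_{\textup{vec}} - \mathbf{f}_{\textup{vec}}) - 2\,\mathbf{f}_{\textup{vec}}^\top\mathbf{\Sigma}^{-1}\mathbf{W}^2\big(\nabla_{\mathbf{y}}\!\cdot\!\log(\mathbf{W}^2)\big),
\]
where the diagonal structure of $\mathbf{\Sigma}$ and $\mathbf{W}$ makes all blocks commute and keeps cross-output derivatives out of the divergence.

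I would then recognise $\exp(-N\mathcal{L})$ as proportional to a Gaussian pseudo-likelihood in $\mathbf{f}_{\textup{vec}}$. Completing the square identifies its covariance as $\mathbf{R} = \tfrac12\mathbf{\Sigma}\mathbf{W}^{-2}\mathbf{\Sigma} = \mathbf{\Sigma}\mathbf{J}_{\scriptscriptstyle\mathbf{W}}$ and its pseudo-observation as $\tilde{\mathbf{y}} = \mathbf{y}_{\textup{vec}} - \mathbf{\Sigma}\big(\nabla_{\mathbf{y}}\!\cdot\!\log(\mathbf{W}^2)\big)$, so that $\tilde{\mathbf{y}} - \mathbf{m}_{\textup{vec}} = \mathbf{y}_{\textup{vec}} - \mathbf{m}_{\mathbf{W},\textup{vec}}$. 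Since the GP prior over $(\mathbf{f}_{\textup{vec}}, f_\star)$ is Gaussian and this pseudo-likelihood depends only on $\mathbf{f}_{\textup{vec}}$, the generalised posterior is exactly the ordinary MOGP posterior for a Gaussian observation model with noise $\mathbf{R}$ and data $\tilde{\mathbf{y}}$. Applying the standard conditioning formula \eqref{eq:mogp_predictive} with the substitutions $\mathbf{\Sigma}\mapsto\mathbf{\Sigma}\mathbf{J}_{\scriptscriptstyle\mathbf{W}}$ and $(\mathbf{y}_{\textup{vec}} - \mathbf{m}_{\textup{vec}})\mapsto(\mathbf{y}_{\textup{vec}} - \mathbf{m}_{\mathbf{W},\textup{vec}})$ then delivers the claimed predictive.

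I expect the main obstacle to be the second step: carrying out the integration by parts with the observation-dependent matrix weight $W_i$ and tracking the vectorisation so the product-rule terms assemble into precisely $\mathbf{\Sigma}\big(\nabla_{\mathbf{y}}\!\cdot\!\log(\mathbf{W}^2)\big)$ rather than some less interpretable expression, together with pinning down the numerical factor that distinguishes $\mathbf{J}_{\scriptscriptstyle\mathbf{W}} = \tfrac12\mathbf{\Sigma}\mathbf{W}^{-2}$ from the naive guess. The diagonal choice of $W_i$ is what makes this bookkeeping tractable, while conjugacy itself is inherited for free from the linearity of the Gaussian score in $\mathbf{f}$.
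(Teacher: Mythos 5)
Your proposal is correct and follows essentially the same route as the paper's proof: eliminate the unknown data score via the weighted (diffusion) score-matching integration by parts, substitute the linear Gaussian score so the loss becomes quadratic in $\mathbf{f}_{\textup{vec}}$, and reduce to standard Gaussian conditioning with noise covariance $\mathbf{\Sigma}\mathbf{J}_{\scriptscriptstyle\mathbf{W}}$ and shifted observations (the paper packages this as an explicit posterior lemma followed by a Woodbury computation, which is the same calculation). One slip to fix: in your displayed vectorised loss the log-derivative term must enter with a $+$ sign, i.e. $+2\,\mathbf{f}_{\textup{vec}}^\top\mathbf{\Sigma}^{-1}\mathbf{W}^2(\nabla_{\mathbf{y}}\cdot\log(\mathbf{W}^2))$, because the $\mathbf{f}$-dependent part of the divergence term is $+2\,\mathbf{f}_i^\top\Sigma^{-1}(\nabla_{\mathbf{y}}\cdot W_i^2)$; with the minus sign as written, completing the square would give $\tilde{\mathbf{y}} = \mathbf{y}_{\textup{vec}} + \mathbf{\Sigma}(\nabla_{\mathbf{y}}\cdot\log(\mathbf{W}^2))$ and hence the wrong sign in $\mathbf{m}_{\mathbf{W},\textup{vec}}$, contradicting the (correct) identification you state in the following sentence.
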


The proof is in Appendix~\ref{app:proof_MORCGP}. In the above result, we have abused notation, and written $\log (\mathbf{W})$ to denote a diagonal matrix obtained from $\mathbf{W}$ by taking entry-wise logarithms along its diagonal. Additionally, $\nabla_{\mathbf{y}} \cdot M$ denotes the divergence of a matrix $M$ (see Appendix~\ref{app:notation}). The resulting expressions are similar to the standard MOGP case presented in Equations \eqref{eq:mogp_predictive}, with the key terms of difference highlighted in \textcolor{RoyalBlue}{\textbf{blue}} and the dimensionality of each quantity has been specified for clarity. Importantly, these terms  introduce weights that allow information to be shared to borrow strength across channels when detecting outliers and reducing their influence on posterior inferences. As in the scalar case, which can be recovered by taking $T=1$, $w_t(\mathbf{x}_i, \mathbf{y}_i) = \sigma_t/\sqrt{2}$ recovers the standard MOGP predictive, and using $w_t(\mathbf{x}_i, \mathbf{y}_i)=w_t(\mathbf{x}_i)$ corresponds to an MOGP with heteroscedastic noise \citep{gammelli2022generalized, lee2023multi}. Evaluating the closed-form posterior of \Cref{prop:MORCGP} imposes a computational burden of order $\mathcal{O}(N^3T^3)$---the same cost imposed by standard MOGP. Unlike standard MOGP, the MO-RCGP can be made provably robust through the introduction of the weight term. 

\subsection{Weight Function Choice}
\label{sec:weight-choice}

In this paper, we focus on robustness to sparse outliers: more precisely, we consider the setting where only some entries in the observation vector at a particular input point are outliers. We restrict attention to this case since it is the type of outlier most commonly observed in real-world data (e.g. \citealp{vstefelova2021robust, su2024robust}). Nevertheless our framework is flexible: through a simple change of weighting function, it can be geared towards other types of data pollution---such as outliers that occur jointly across all channels, or outliers in a subset of channels. While many functional forms of $w_t(\mathbf{x}, \mathbf{y})$ are sufficient to achieve the type of robustness we desire, we adopt the inverse multi-quadratic kernel. We do so as prior work has shown that it provides a good balance for performance in both misspecified and well-specified settings \citep{matsubara2022robust, altamirano2024robustGP, laplante2025robust}. This kernel incorporates a centering function $\gamma_t: \mathcal{X} \times \mathbb{R} \rightarrow \mathbb{R}$, a decay rate function $c_t: \mathcal{X} \times \mathbb{R} \rightarrow \mathbb{R}$, and a constant $\beta_t>0$, and is given by
\begin{align}\label{eq:weight}
  w_t(\mathbf{x},\mathbf{y})
  &= \beta_t\left(1+\left(\frac{y_t-\gamma_t(\mathbf{x},\mathbf{y})}{c_t(\mathbf{x},\mathbf{y})}\right)^2\right)^{-\frac{1}{2}}
\end{align}
This weight is widely used in robust methods (e.g. \citealp{barp2019minimum, matsubara2024generalized, liu2024robustness}) and downweights outlying observations. Specifically, the mapping $y_{t} \mapsto w_t(\mathbf{x}, \mathbf{y})$ is heavy-tailed, symmetric around $\gamma_t(\mathbf{x}, \mathbf{y})$, and decreasing as $|y_{t}-\gamma_t(\mathbf{x}, \mathbf{y})|$ increases, where $c_t(\mathbf{x}, \mathbf{y})$ determines this rate of decrease. Finally, the constant $\beta_t$ sets the maximum of the weight and corresponds to the GB learning rate \citep{wu2023comparison}. Although these hyperparameters can in principle be estimated from data, \citet{altamirano2024robustGP} argued that this is suboptimal and proposed several heuristic choices that performed reasonably well in practice. For example, they recommend $\gamma_t(\mathbf{x}, \mathbf{y}) = m_t(\mathbf{x})$. While effective in many cases, this makes the method sensitive to prior mean specification \citep{ament2024robust, laplante2025robust}.

Unlike in the scalar setting discussed in this prior work, the multi-output framework provides an elegant way for us to reduce this sensitivity by borrowing statistical strength across outputs. Intuitively speaking, it allows us to use outputs from other channels to make an educated guess about what an outlier looks like. In technical terms, this means that we define the centering function for output channel $t$ as the conditional expectation given the observations in the other output channels. For $\mathbf{y}_{i,-t} =  [\mathbf{y}_{i,1},\ldots,\mathbf{y}_{i,t-1},\mathbf{y}_{i,t+1},\ldots,\mathbf{y}_{i,T}]^\top \in \mathbb{R}^{T-1}$, this results in $\gamma_t(\mathbf{x}_i, \mathbf{y}_i) = \mu_{t|-t}(\mathbf{x}_i, \mathbf{y}_i) = \mathbb{E}_{\tilde{y}_t \sim p(\cdot | \mathbf{y}_{i,-t})}[\tilde{y}_t]$. Conveniently, the conditional expectation has a closed form which for the residual $\mathbf{r}_{i,-t} = \mathbf{y}_{i, -t} - \mathbf{m}_{i, -t} \in \mathbb{R}^{T-1}$ is
\begin{align*}
\mu_{t|-t}(\mathbf{x}_i, \mathbf{y}_i) &= m_t(\mathbf{x}_i) + ({C_{-t,t}^{(i)}})^\top ({C_{-t,-t}^{(i)}})^{-1}\mathbf{r}_{i,-t}.
\end{align*}
Here, ${C_{-t,-t}^{(i)}} = [\mathcal{K}(\mathbf{x}_i, \mathbf{x}_i)]_{-t, -t}$ is the $T-1$ dimensional covariance matrix of $\mathbf{y}_{i,-t}$,  $C_{-t,t}^{(i)} = [\mathcal{K}(\mathbf{x}_i, \mathbf{x}_i)]_{-t, t}$ is the $T-1$ dimensional covariance vector between $y_{i,t}$ and $\mathbf{y}_{i,-t}$, and  $\mathbf{m}_{i, -t}$ is the vector of mean functions corresponding to $\mathbf{y}_{i,-t}$. The special case of the ICM and its separable kernels further simplify these expressions since the covariance matrix of $\mathbf{y}_i$ additively decomposes as $C^{(i)} = B + \Sigma$.

Having fixed $\gamma_t$, we use it to adaptively scale $c_t(\mathbf{x}, \mathbf{y})$ by choosing it as the $(1-\epsilon_t)$-th quantile of $\{|y_{i,t} - \gamma_t(\mathbf{x}_i, \mathbf{y}_i)|\}_{i=1}^{N}$, where  $\epsilon_t\in(0,1)$ denotes the  expected proportion of outliers in output channel $t$, respectively. Finally, $\beta_t$ allows us to incorporate application-specific information about channel $t$. For example, one might increase $\beta_t$ when it is a more reliable channel, as is often the case in multi-fidelity modelling and causal inference \citep{dimitriou2024data}. As a default, we set $\beta_t = \sigma_t / \sqrt{2}$ to recover the standard MOGP in the absence of outliers.

\subsection{Robustness to Outliers} 
With weights as chosen in the previous section, we analyse the outlier robustness of MO-RCGPs \citep{huber2011robust}. 
To this end, for some observation with index $m \in \{1, \dots, N\}$ and output $s \in \{1, \dots, T\}$ in  $\mathcal{D} = \{(\mathbf{x}_i, \mathbf{y}_{i})\}^N_{i=1}$, we replace  $y_{m,s}$ with an arbitrarily large contaminated value $y_{m,s}^c$. 
This results in the contaminated dataset $\mathcal{D}_{m,s}^c = (\mathcal{D} \ \backslash \ \{(\mathbf{x}_m, \mathbf{y}_{m})\}) \cup \{(\mathbf{x}_m, \mathbf{y}_{m,s}^c)\}$, where $\mathbf{y}_{m,s}^c = [y_{m,1}, \dots, y_{m,s}^c, \dots, y_{m,T}]^\top$. 
We quantify the effect of this contamination on the marginal posterior over output $t \in \{1, \dots, T\}$ by measuring the divergence between the contaminated and uncontaminated posteriors. As a function of $|y_{m,s}^c - y_{m,s}|$, this divergence is referred to as the \textit{posterior influence function} (PIF; \citealp{ghosh2016robust, matsubara2022robust,altamirano2023robust}). 
For the special case of Gaussian posteriors, \citet{altamirano2024robustGP, duran2024outlier, laplante2025robust} advocate using the Kullback-Leibler (KL) divergence.
With this in mind, we assess robustness to outliers by studying 
\begin{align*}
    \text{PIF}(y_{m,s}^c, f_t, \mathcal{D}) := \text{KL}\big(p_{\scriptscriptstyle W}(f_t\mid\mathcal{D}) \,\big\|\, p_{\scriptscriptstyle W}(f_t\mid \mathcal{D}_{m,s}^c)\big).
\end{align*}
As we show in our next result, the MO-RCGP posterior is robust insofar as the effect of the contamination in output channel $s$ on the marginal posterior over channel $t$ is bounded, even for channels $t \neq s$, and even as the size of the contamination diverges so that $|y_{m,s}^c - y_{m,s}| \to \infty$. 
For standard MOGPs, as long as the covariance structure between output channels $s$ and $t$ is not diagonal, this behaviour is \textit{not} replicable:
instead, a single contaminated observation can now increase the PIF arbitrarily in any output  $t$ in the MOGP (see Appendix~\ref{app:proof_robustness}). 
More generally, the MOGP can only provide the desired type of robustness against contaminations in output channel $s$ in exchange for completely cutting the feedback between outputs $s$ and $t$ by imposing a block-diagonal matrix structure in which $s$ and $t$ fall into different blocks.
\\
\begin{proposition} \label{prop:robustness}
    Suppose $\mathbf{f} \sim \mathcal{GP}(\mathbf{m}, \mathcal{K})$, $\textup{vec}(\mathcal{E}) \sim \mathcal{N}(0, \mathbf{\Sigma})$, and $W_i = \textup{diag}(w_1(\mathbf{x}_i, \mathbf{y}_i), \dots, w_T(\mathbf{x}_i, \mathbf{y}_i))$, where $w_{t}$ is defined as in Equation~\eqref{eq:weight}. Then, there exists a constant $C>0$ independent of $y_{m,s}^c$ for which
    \begin{align*}
        \max_{1\leq t \leq T}\sup_{y_{m,s}^c \in \mathbb{R}}
        \textup{PIF}(y_{m,s}^c, f_t, \mathcal{D}) \leq C.
    \end{align*}
\end{proposition}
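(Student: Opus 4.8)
The plan is to exploit the closed form in \Cref{prop:MORCGP}: by that result, both $p_{\scriptscriptstyle W}(f_t\mid\mathcal{D})$ and $p_{\scriptscriptstyle W}(f_t\mid\mathcal{D}_{m,s}^c)$ are Gaussian, say $\mathcal{N}(\mu_t,\Sigma_t)$ and $\mathcal{N}(\mu_t^c,\Sigma_t^c)$ when evaluated at a test input $\mathbf{x}_\star$ (the finite multivariate case is identical). Since the clean posterior does not depend on $y_{m,s}^c$ and is nondegenerate, the univariate KL identity
\[
\textup{PIF}(y_{m,s}^c, f_t, \mathcal{D}) = \tfrac{1}{2}\Big[\log\tfrac{\Sigma_t^c}{\Sigma_t} + \tfrac{\Sigma_t}{\Sigma_t^c} + \tfrac{(\mu_t^c-\mu_t)^2}{\Sigma_t^c} - 1\Big]
\]
reduces the claim to three uniform statements over $y_{m,s}^c\in\mathbb{R}$: (i) $\Sigma_t^c$ is bounded above; (ii) $\Sigma_t^c$ is bounded below by a strictly positive constant; and (iii) $\mu_t^c$ is bounded. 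Because there are only finitely many channels, the outer $\max_{1\le t\le T}$ then yields a finite $C$.

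The core is to analyse $\mathbf{A}^c := \mathbf{K} + \mathbf{\Sigma}\,\mathbf{J}_{\scriptscriptstyle\mathbf{W}}^c$, a diagonal perturbation of $\mathbf{K}$, as a function of $y_{m,s}^c$. Since $\mathbf{\Sigma}\mathbf{J}_{\scriptscriptstyle\mathbf{W}}^c = \tfrac{1}{2}\mathbf{\Sigma}^2\mathbf{W}^{-2}$ is diagonal and positive, I would first observe from the inverse-multi-quadratic form in \eqref{eq:weight} that $y_{m,s}^c\mapsto w_s(\mathbf{x}_m,\mathbf{y}_m^c)\to 0$ at rate $|y_{m,s}^c|^{-1}$ as $|y_{m,s}^c|\to\infty$ (its centering $\gamma_s$ depends only on $\mathbf{y}_{m,-s}$, not on $y_{m,s}$), so the $(m,s)$ diagonal entry of $\mathbf{\Sigma}\mathbf{J}_{\scriptscriptstyle\mathbf{W}}^c$ diverges like $|y_{m,s}^c|^2$; when $\gamma_t$ is the cross-channel conditional mean, the contamination shifts $\gamma_t$ linearly in $y_{m,s}^c$ for correlated $t\neq s$, so those weights $w_{m,t}$ vanish too and their diagonal entries also diverge, which only strengthens the effect. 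I would then apply the matrix-inversion (Sherman--Morrison / block Woodbury) lemma over the set of diverging indices to show that $(\mathbf{A}^c)^{-1}$ extends continuously to $y_{m,s}^c=\pm\infty$ with finite limits equal to the RCGP inverse with observation $m$ removed, and that the rows and columns of $(\mathbf{A}^c)^{-1}$ indexed by the diverging entries vanish at rate $|y_{m,s}^c|^{-2}$.

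From this I obtain (i)--(iii). For the variance, $\Sigma_t^c = \mathbf{k}_{\star\star} - \mathbf{k}_\star^\top(\mathbf{A}^c)^{-1}\mathbf{k}_\star$ is continuous in $y_{m,s}^c$ with finite limits at $\pm\infty$, giving the upper bound; it is a valid (hence positive) posterior variance at each finite $y_{m,s}^c$, and its limit is the positive posterior variance with observation $m$ removed, so on the compactified line $\mathbb{R}\cup\{\pm\infty\}$ it is continuous and strictly positive and therefore bounded below by a positive constant. For the mean, writing $\mu_t^c - \mathbf{m}_\star = \mathbf{k}_\star^\top(\mathbf{A}^c)^{-1}(\mathbf{y}_{\textup{vec}}^c-\mathbf{m}_{\mathbf{W},\textup{vec}}^c)$, the only entry of $\mathbf{y}_{\textup{vec}}^c$ that grows is the $(m,s)$ one, while the correction $\mathbf{m}_{\mathbf{W},\textup{vec}}^c$ stays bounded because $\nabla_{\mathbf{y}}\cdot\log(\mathbf{W}^2)$ is bounded for the weight \eqref{eq:weight} (each map $y\mapsto\partial_y\log(w^2)$ is bounded). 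The growing entry is annihilated by the vanishing $(m,s)$ column of $(\mathbf{A}^c)^{-1}$, so its contribution is $\mathcal{O}(|y_{m,s}^c|^{-1})\to 0$, and all remaining bounded entries multiply bounded inverse entries; hence $\mu_t^c$ is bounded.

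Finally, $y_{m,s}^c\mapsto(\mu_t^c,\Sigma_t^c)$ is continuous on $\mathbb{R}$ with finite limits at $\pm\infty$, so its image lies in a compact set on which $\Sigma_t^c$ is bounded away from $0$; the PIF, being continuous in these parameters there, attains a finite maximum, and taking the maximum over the finitely many $t$ gives $C$. I expect the main obstacle to be the inverse analysis in the regime where several diagonal entries diverge at once (the conditional-mean centering): one must track the joint vanishing rates through a block inverse, confirm the limit is a genuine positive-definite RCGP posterior with the whole observation $m$ removed, and verify that the single growing data entry $y_{m,s}^c$ is always dominated by its vanishing inverse column so that no residual blow-up survives in $\mu_t^c$.
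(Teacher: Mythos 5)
Your proposal is correct in substance, but it takes a genuinely different route from the paper. The paper's proof is modular: after vectorising all channels into one observation vector, the MO-RCGP is formally a scalar RCGP, so Proposition 3.2 of \citet{altamirano2024robustGP} bounds the PIF of the \emph{joint} posterior over all outputs; the marginal bound for each $f_t$ then follows at once from the chain rule for the KL divergence, $\textup{KL}(p(x,y)\,\|\,q(x,y)) \geq \textup{KL}(p(x)\,\|\,q(x))$, and all that remains is to verify the two weight conditions of that proposition, $\sup_{\mathbf{x},\mathbf{y}} w_t(\mathbf{x},\mathbf{y}) \leq \beta_t < \infty$ and $\sup_{\mathbf{x},\mathbf{y}} |y_t\, w_t(\mathbf{x},\mathbf{y})^2| < \infty$ (the latter via boundedness of $\gamma_t$ and $c_t$). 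You instead re-derive the underlying boundedness from scratch: a perturbation analysis of $(\mathbf{K}+\mathbf{\Sigma}\,\mathbf{J}_{\scriptscriptstyle\mathbf{W}})^{-1}$ with diverging diagonal penalties, block Woodbury over the affected indices, vanishing inverse columns annihilating the linearly growing data entry, boundedness of $\nabla_{\mathbf{y}}\cdot\log(\mathbf{W}^2)$, and a compactification argument over $\mathbb{R}\cup\{\pm\infty\}$. This is essentially a self-contained multi-output re-proof of the machinery the paper imports; it costs you the multi-index Woodbury bookkeeping you rightly flag as the main obstacle, but it buys more information --- it identifies the limiting contaminated posterior as the posterior with the affected entries of observation $m$ effectively deleted, with explicit $\mathcal{O}(|y_{m,s}^c|^{-1})$ rates, which the paper's argument does not provide. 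Two points you should make explicit to close the argument: first, your key estimates are exactly the paper's weight conditions in disguise (the vanishing $(m,s)$ column killing the growing entry is precisely $\sup |y_s w_s^2| < \infty$), so they deserve the same careful verification rather than being treated as asides; second, both your route and the paper's tacitly need $c_t$ --- the $(1-\epsilon_t)$-quantile of residuals computed on the \emph{contaminated} data --- to stay bounded and bounded away from zero as $|y_{m,s}^c|\to\infty$, which holds only when $\epsilon_t \geq 1/N$ (otherwise the quantile is the maximal residual, tracks the outlier, and $w_s$ no longer vanishes); this hypothesis is worth stating in your compactness step.
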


In words, Proposition \ref{prop:robustness} says that the MO-RCGP posterior is robust in \textit{all} channels $t$ against outliers of possibly infinite magnitude in any given channel $s$. 

\subsection{Parameter Optimisation} \label{sec:parameter_optimisation}

For MO-RCGP to be practically useful, we need a way to optimise the noise variances $\Sigma$ for each output  and the kernel hyperparameters $\bm{\theta}$. 
In standard MOGPs, these are typically optimised through the marginal likelihood \citep{alvarez2012kernels}, but this is ill-posed for generalised Bayes posteriors  since they do not correspond to conditional probability updates \citep{jewson2022general}. 
In the context of RCGPs, \cite{altamirano2024robustGP} therefore derived a closed form objective based on leave-one-out cross-validation (LOO-CV).
Computing this objective scales as
$\mathcal{O}(N^3T^3)$  (see Appendix~\ref{app:LOO-CV}), which is the same computational cost as optimisation of the marginal likelihood in standard MOGPs \citep{sundararajan1999predictive, altamirano2024robustGP}. 
Unfortunately,  optimising this objective in the presence of outliers is problematic:  it treats all observations equally---whether they are outliers or not---and thus tends to overfit to outliers.
This was first noted by \citet{laplante2025robust}, who proposed to re-use the robustness-inducing weights within the hyperparameter optimisation routine---an idea adapted from weighted maximum likelihood methods \citep[see e.g.][]{dewaskar2025robustifying}.
Their idea worked well in practice, but is not applicable to the multi-output case.
Thus, we introduce an objective based on similar ideas, and optimise hyperparameters via the weighted LOO-CV (w-LOO-CV) objective
\begin{align*}
    \varphi_w(\Sigma, \bm{\theta}) = \sum^N_{i=1}\sum^{T}_{t=1} \left(\frac{w_{i,t}}{\beta_t}\right)^2 \log p_w(y_{i,t}|Y_{-(i,t)}, \Sigma, \bm{\theta}),
\end{align*}
where $Y_{-(i,t)}$ denotes all elements in the output matrix $Y$ except for the $(i,t)$-th element.
There are several key differences with the approach of \citet{laplante2025robust} which all emanate from the multi-output setting and the new weight function proposed in our work: 
first, we rescale the weights ${w}_{i,t}$  by $\beta_t = \sigma_t/\sqrt{2}$ to confine them to the unit interval, and to prevent weights with high variance from dominating the objective. 
Second, we square the weights as we found it to lead  to better numerical stability and improved performance. 
Third, we used LOO-CV predictive densities to accommodate the multi-output setting---instead of one-step-ahead predictive densities. 
For implementation details, see Appendix~\ref{app:optimisation_details}.

\section{Experiments} \label{sec:experiments}

\begin{figure}[t]
  \centering
  \includegraphics[width=0.6\textwidth]{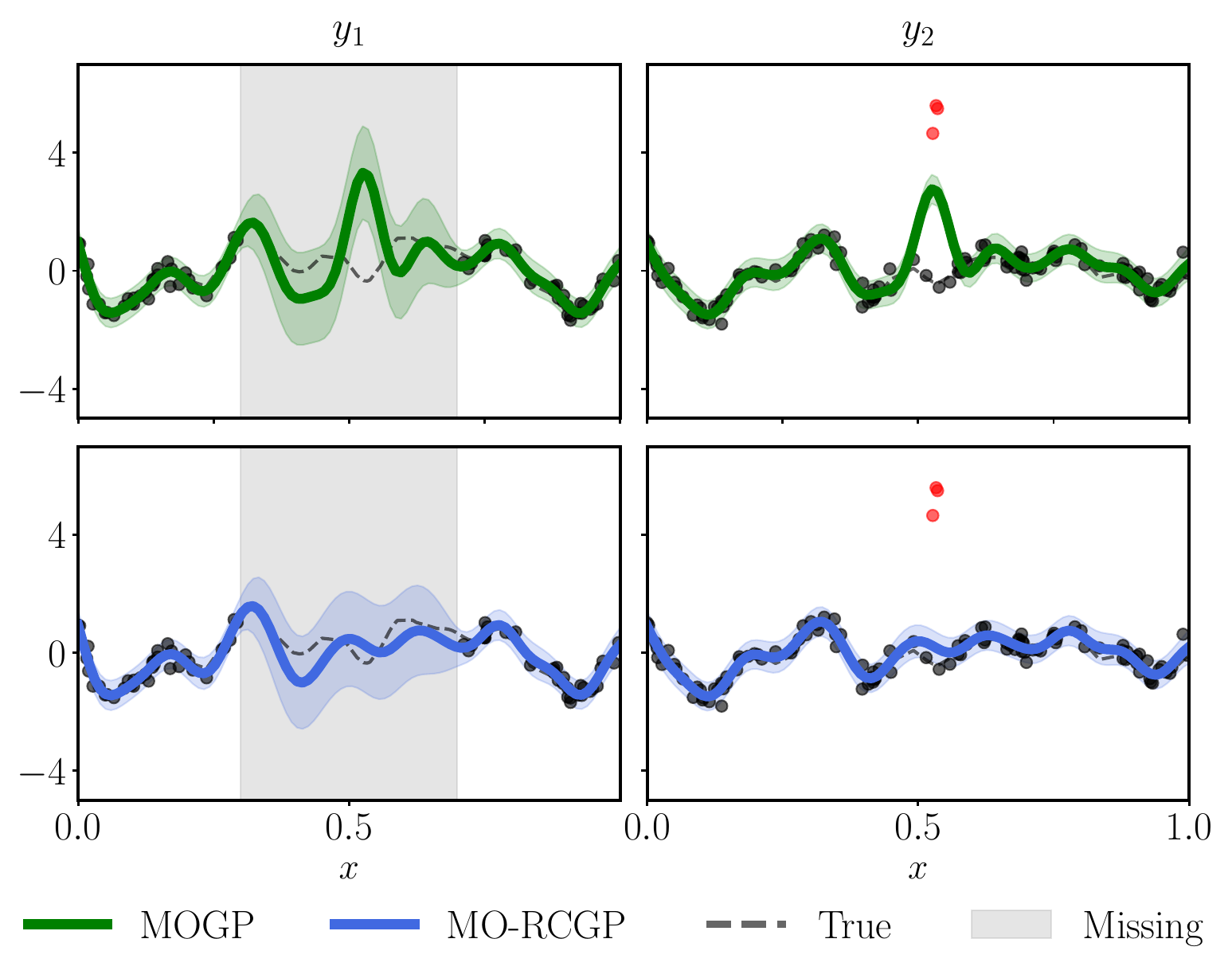}
  \caption{\textit{Synthetic Imputation with Focused Outliers.} We generate $N=120$ data points from an ICM with parameters $B_{11} = 2$, $B_{22} = 1$, and $B_{12}=1.25$, contaminate $\epsilon_2 = 2.5\%$ of the observations in the second output 
  and remove data in $y_1$ where $0.3<x<0.7$. The prior mean is $m_t(\mathbf{x}) = \frac{1}{N_t}\sum_{i=1}^{N_t} y_{i,t}$. The \textcolor{Green}{\textbf{MOGP}} predictives are sensitive to outliers, and the errors propagate to the other output. In contrast, the \textcolor{RoyalBlue}{\textbf{MO-RCGP}} predictives remain robust across both outputs.}
  \label{fig:toy_example}
    \vspace*{-0.2cm}
\end{figure}

The code to reproduce experiments is available at \url{https://github.com/joshuarooijakkers/robust_conjugate_MOGP}.
We focus on the ICM with a squared exponential kernel $\kappa(\mathbf{x},\mathbf{x}') = \exp(-\|\mathbf{x}-\mathbf{x}'\|_2^2/\ell^2)$ for some $\ell>0$. We evaluate our experiments based on the root mean squared error (RMSE), as well as the negative log predictive distribution (NLPD).

\paragraph{Synthetic Multi-task Imputation}

MOGPs are often used when some outputs are only partially observed across the domain, such as in causal inference \citep{dimitriou2024data} and multi-fidelity modelling \citep{Xi2018MultiOutput}. For instance, missing signals from failing sensors can be reconstructed by exploiting correlations with signals from other sensors \citep{osborne2008towards}. However, outliers can not only distort the MOGP's predictions within their own output, but also propagate these errors to correlated outputs where imputation is performed. Figure~\ref{fig:toy_example} illustrates this with a simulated ICM for $T=2$ and a large correlation ($\approx 0.88$) among  outputs. The MOGP’s predictions are strongly affected by outliers and  yield corrupted imputations, whilst the MO-RCGP provides robustness and more reliable imputation.

\paragraph{Synthetic Comparisons with Outliers}

We perform a similar experiment for $T=3$ and  10\% outliers of uniform distribution in the first output $t=1$ (see Appendix~\ref{app:synthetic_MOGP_table}), with  results in Table \ref{tab:synthetic_imputation}. As part of the comparison, we also consider an MO-RCGP with the naive default RCGP weight $w_{\scriptscriptstyle\textup{RCGP}}$ of \citet{altamirano2024robustGP} to show why careful choice of the weight function is crucial for the MO-RCGP. Additionally, we compare against the performance on a dataset without outliers. The results indicate that in the well-specified regime, the RMSE and NLPD are comparable across all methods, but that MOGPs experience a clear performance drop in the presence of outliers. While MO-RCGP with the naive weight $w{\scriptscriptstyle\textup{RCGP}}$ already improves robustness to outliers, an MO-RCGP with the more appopriate weight $w{\scriptscriptstyle\textup{MORCGP}}$ further outperforms it (see also Appendix~\ref{app:choice_weight_function}). In summary, MO-RCGPs perform strongly under model misspecification, but are also competitive in the well-specified setting.

\begin{table}[h]
\centering
\caption{Average RMSE and NLPD and standard deviation over 20 seeds under well-specified and outlier-contaminated settings.} 
\label{tab:synthetic_imputation}
\begin{tabular}{clcc}
\hline
\textbf{Outliers} & \textbf{Method} & \textbf{RMSE} & \textbf{NLPD} \\ \hline
\multirow{3}{*}{\cmark} & MOGP & $0.20 \pm 0.03$ & $-0.37 \pm 0.04$ \\
 & MO-RCGP ($w_{\textup{RCGP}}$) & $0.18 \pm 0.04$ & $-0.44 \pm 0.07$ \\
 & MO-RCGP ($w_{\textup{MORCGP}}$) & \textbf{\boldmath$0.11 \pm 0.01$} & \textbf{\boldmath$-0.67 \pm 0.08$} \\ \hline
\multirow{3}{*}{\xmark} & MOGP & $0.093 \pm 0.004$ & $-0.94 \pm 0.04$ \\
 & MO-RCGP ($w_{\textup{RCGP}}$) & $0.093 \pm 0.004$ & $-0.94 \pm 0.04$ \\
 & MO-RCGP ($w_{\textup{MORCGP}}$) & $0.093 \pm 0.004$ & $-0.95 \pm 0.04$ \\ \hline 
\end{tabular}
\vspace*{-0.3cm}
\end{table}

\paragraph{Energy Efficiency Prediction.}

Next, we compare our approach to an existing robust MOGP model based on a heavy-tailed Student-$t$ likelihood ($t$-MOGP; \citealp{huang2024robust, jylanki2011robust}). As it addresses a different challenge, we do not compare against \cite{yu2007robust}: this method changed the \textit{prior} over the latent function to be a Student-$t$ process---rather than addressing misspecification in the \textit{noise process} or \textit{likelihood}. For our comparisons, we rely on the \textit{Energy Efficiency} dataset from the \href{https://archive.ics.uci.edu/}{UCI repository}, which relates energy efficiency of buildings to architectural characteristics. The goal is to predict $T=2$ target variables---heating and cooling loads---using $d=8$ covariates (e.g., surface area, overall height, relative compactness) and $N=768$ building designs. Performances and run times including hyperparameter optimisation are reported in Tables \ref{tab:energy_efficiency} and \ref{tab:UCI_runtimes}. For details of our computing set-up, see  Appendix~\ref{app:additional_experiments}. 

In inferential terms, MO-RCGP outperforms the MOGP across four different types of outlier scenarios settings, and remains competitive in the absence of outliers. Overall, MO-RCGPs also outperforms $t$-MOGPs: in most scenarios, both methods fare comparably well, but MO-RCGP clearly outperforms $t$-MOGP in the absence of outliers and when outliers are focused around a particular region of input space. Further, and as Table~\ref{tab:UCI_runtimes} highlights, MO-RCGP's closed form posteriors require a fraction of the computational cost of $t$-MOGPs---which require computationally intensive variational approximations. Lastly, as expected from the $\mathcal{O}(N^3 T^3)$ complexity of both methods, computation times between MO-RCGPs and  MOGPs are of the same order of magnitude.

\begin{table}[ht]
\centering
\caption{Average RMSE and NLPD and standard deviation over 20 seeds on the \textit{Energy Efficiency} dataset under different outlier scenarios.
}
\label{tab:energy_efficiency}
\begin{tabular}{llcc}
\hline
\textbf{Outliers} & \textbf{Method} & \textbf{RMSE} & \textbf{NLPD} \\ \hline
 & MOGP & \textbf{\boldmath $0.12 \pm 0.01$}& \textbf{\boldmath$-0.82 \pm 0.07$} \\
None & MO-RCGP & \textbf{\boldmath $0.12 \pm 0.01$} & \textbf{{\boldmath $-0.86 \pm 0.11$}} \\
 & $t$-MOGP & $0.14 \pm 0.01$ & $-0.55 \pm 0.05$ \\ \hline
 & MOGP & $1.39 \pm 0.20$ & $1.44 \pm 0.05$ \\
Uniform & MO-RCGP & \textbf{\boldmath $0.16 \pm 0.02$} & \textbf{\boldmath $-0.26 \pm 0.07$} \\
 & $t$-MOGP & \textbf{\boldmath $0.16 \pm 0.01$} & $-0.20 \pm 0.02$ \\ \hline
 & MOGP & $1.53 \pm 0.26$ & $1.47 \pm 0.10$ \\
Asymmetric & MO-RCGP & $0.17 \pm 0.03$ & \textbf{\boldmath $-0.26 \pm 0.08$} \\
 & $t$-MOGP & \textbf{\boldmath $0.16\pm 0.01$} & $-0.20 \pm 0.02$ \\ \hline
 & MOGP & $0.19 \pm 0.02$ & $0.01 \pm 0.05$ \\
Focused & MO-RCGP & \textbf{\boldmath $0.13 \pm 0.01$} & \textbf{\boldmath $-0.36 \pm 0.06$} \\
 & $t$-MOGP & $0.21 \pm 0.03$ & $0.01 \pm 0.15$ \\ \hline
\end{tabular}
\end{table}
\begin{table}[ht]
\centering
\vspace*{-0.3cm}
\caption{Average clock time in seconds on the \textit{Energy Efficiency} dataset across 20 seeds.}\label{tab:UCI_runtimes}
\vspace*{0.1cm}
\begin{tabular}{ccc}
\hline
\textbf{MOGP} & \textbf{MO-RCGP} & \textbf{$\boldsymbol{t}$-MOGP} \\
\textbf{\boldmath$53.9 \pm 32.8$} & \textbf{\boldmath$47.9 \pm 49.0$} & $331 \pm 21$ \\ \hline
\end{tabular}
\vspace*{-0.3cm}
\end{table}
   
\paragraph{Navitoclax Dose-Response Modelling}

Next, we apply our method to a setting where outliers arise naturally rather than synthetically: dose-response analysis in cancer research. In this context, MOGPs are commonly used to model the response of different types of cancer to various drugs \citep{wheeler_bayesian_2019, moran_bayesian_2021, ronneberg2021bayesynergy, ronneberg2023dose,ronneberg2024scalable,gutierrez2024multi}. 
Each output function in an MOGP corresponds to cancer cells isolated from different patients, and the latent functions describe how cell \textit{viability}---the proportion of cells that remain alive since the start of treatment---changes as drug dose increases \citep{gutierrez2024multi, ronneberg2023dose}. 
Modelling this latent function helps pharmaceutical and medical practitioners to identify effective treatments: a steep decline in viability with increasing dose suggests that the drug has a strong effect, whereas a flat response suggests little to no effect. 
Importantly, summary statistics  derived from these curves (most notably area under the curve, or half maximal inhibitory concentration $\text{IC}_{50}$) are explicitly linked to chemical properties of the drugs and genetic markers that enable patient-specific precision medicine.
Yet, outliers and noisy measurements frequently arise in this setting, and stem from a mixture of biological and technical sources such as heterogeneity in cell populations, batch effects or pipetting errors \citep{malo2006statistical}. 
The resulting contaminations can severely distort MOGPs and their summary statistics \citep{wang2020statistical}, which can lead to worse decisions in clinical and drug discovery applications. 

In Figure~\ref{fig:cisplatin_MOGP}, the data point marked in red indicates such a naturally occuring outlier: for unknown reasons, the  viability is significantly inflated, even though similar doses yield viability values close to zero. 
The figure is the result of a drug response analysis of the experimental drug \textit{Navitoclax} based on data from the \href{https://www.cancerrxgene.org/}{Genomics of Drug Sensitivity in Cancer Database} \citep{yang2012genomics}. We compare the fit produced by a standard MOGP with our proposed approach on a subset of $T=2$ cell lines with $N=21$ dose measurements given in $\log_{10}(\mu\text{M})$.
For the prior mean, we use the constant function $m_t(\mathbf{x}) = 0.5$ to neutrally balance our prior beliefs half-way between cells being alive ($1.0$) or dead ($0.0$).
We observe that the MOGP’s fit in the first output is visibly distorted by the outlier, with both a biased predictive mean and a large uncertainty.
In contrast, the MO-RCGP reduces the influence of the outlier: the produced curve fits the observed data points  better.
Additionally, its smaller degree of uncertainty aligns with that of the second cell line which is unburdened by outliers.

\paragraph{Robustness to Financial Anomalies}

\begin{figure*}[t!]
  \centering
  \includegraphics[width=\textwidth]{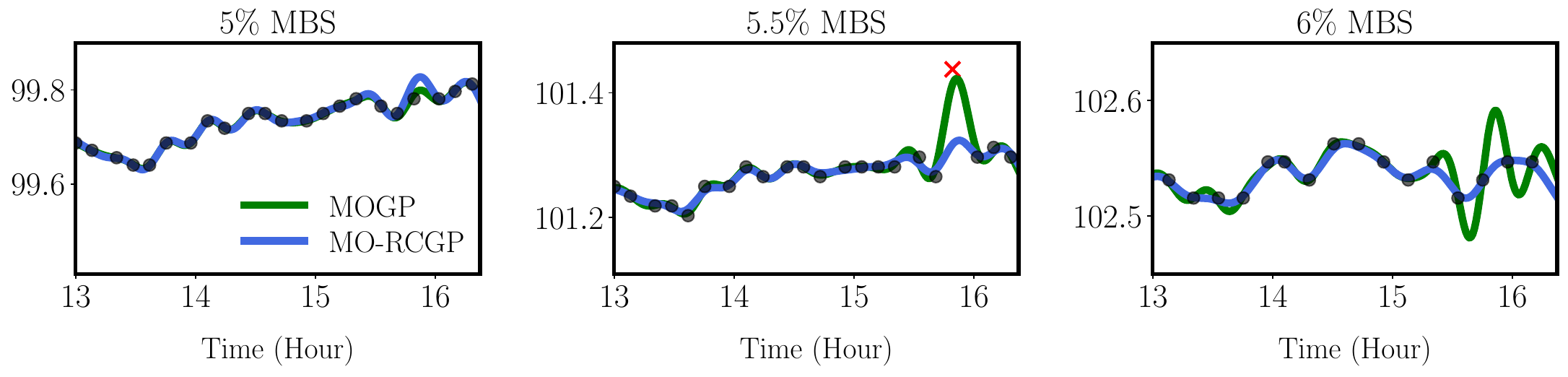}
  \caption{\textit{Robustness in Mortgage-Backed Security Trades.} We apply \textcolor{Green}{\textbf{MOGP}} and \textcolor{RoyalBlue}{\textbf{MO-RCGP}} regression to $T=3$ MBS using an ICM with $m_t(\mathbf{x}) = \frac{1}{N}\sum_{i=1}^{N} y_{i,t}$. The outlier at 4 pm in the 5.5\% MBS distorts the MOGP predictions for that output and propagates errors to the 6\% MBS. In contrast, MO-RCGP is robust.}
  
  \label{fig:FNCL}
  \vspace*{-0.3cm}
\end{figure*}

For the final experiment, we analyse the to-be-announced (TBA) market. 
In this market, buyers and sellers trade mortgage-backed securities (MBS) --- a collection of thousands of similar home loans --- without specifying the exact underlying pool of mortgages until shortly before a defined settlement date. 
In the TBA market, transaction-level data are often highly correlated between securities as MBS tend to be constructed from similar pools of mortgages and are affected by the same economic factors \citep{boyarchenko2019understanding}.
While the modelling problems in the TBA market are ideally suited to MOGPs \citep{de2020gaussian}, outliers are extremely common, particularly near the settlement date \citep{vickery2013tba}. 
Some key reasons for this are that certain trades are priced much lower than the going rate because sellers can deliver slightly weaker collateral at the same price. 
Conversely, other trades are priced unusually high when specific collateral is scarce and commands a temporary premium. 

\Cref{fig:FNCL} compares the performance of MO-RCGP with standard MOGP on $T=3$ correlated MBS with rates of 5.0\%, 5.5\%, and 6.0\% that are traded on this market  \citep{baklanova2024alternative}. 
The dataset, obtained from \href{https://www.finra.org/finra-data/fixed-income/tba/trade}{FINRA}, consists of $N = 139$ transactions observed on September 8 and 9, 2025, with a clear anomalously high trade being visible in the MBS with 5.5\% on September 8 around 4 pm, which adversely affects the MOGP's fit to the remaining data.
This outlier influences not only the predictions for the MBS with 5.5\%, but also for the MBS with 6\%, demonstrating the sensitivity of MOGP to such outliers across different output channels. 
In contrast, MO-RCGP remains stable and unaffected in all three outputs channels.

\section{Conclusion}

MOGPs provide a probabilistic framework for modelling multiple correlated response variables by sharing information across outputs, but are sensitive to outliers and model misspecification if modelled with Gaussian errors. 
While several strategies exist in the literature to address this lack of robustness, they do so at the expense of exact conjugate updates.
To overcome this limitation, we introduced MO-RCGPs: by extending and generalising the RCGP framework of \cite{altamirano2024robustGP} to the multi-output setting, we obtained a robust MOGP framework that retained  analytical tractability.
Empirically, we showed that MO-RCGPs provided  performances comparable to those of MOGP with a Student-$t$ likelihood  at a fraction of the latter's computational cost. 
While we explored sparse outliers, MO-RCGPs can be tuned for various other types of outliers by adjusting their weight function.
For example, they could be adapted to the case where entire output channels are treated as outliers, or where outliers occur across all channels at the same point.
Lastly, MO-RCGPs could be used in various applications of MOGPs, including Bayesian Optimisation \citep{swersky2013multi}, multi-fidelity modelling \citep{kennedy2000predicting}, and active learning frameworks \citep{li2022safe}.

\vspace{-2pt}
\subsubsection*{Acknowledgements}
\vspace{-3pt}
We thank William Laplante for valuable discussions on early drafts of this paper. Joshua Rooijakkers was supported by the Dutch Cultural Fund Grant (Cultuurfondsbeurs). Fran\c{c}ois-Xavier Briol and Jeremias Knoblauch were supported by the EPSRC grant [EP/Y011805/1]. Matias Altamirano was supported by the Bloomberg Data Science PhD fellowship. Leiv Rønneberg was supported by the European Union’s Horizon Europe research and innovation programme under the Marie Skłodowska-Curie grant agreement No. 101126636.

\newpage
\bibliography{references}

\appendix
\newpage
\begin{center}
    \vspace{1cm}
    \LARGE{{Multi-Output Robust and Conjugate Gaussian Processes: Appendix}}\\[0.2em]
    \vspace{0.6cm}
\end{center}

We introduce notation in Appendix~\ref{app:notation}. Appendix~\ref{app:proofs_theoretical_results} presents the proofs of our theoretical results. Finally, Appendix~\ref{app:additional_experiments} contains details of our numerical experiments and additional results that complement the main text.

\section{Notation} \label{app:notation}

In this section, we recap the notation used in this paper:

\begin{itemize}[itemsep=0pt, topsep=0pt, leftmargin=*, label=\scriptsize$\bullet$]
    \item Let $\nabla_y = [\partial/\partial y_1, \dots, \partial/\partial y_{\scriptscriptstyle m}]^\top$, and suppose $f: \mathbb{R}^m \rightarrow \mathbb{R}^m$ and $g:\mathbb{R}^m \rightarrow \mathbb{R}^{m \times m}$ are two sufficiently regular functions. The divergence operator is defined as:
        \begin{align*}
            (\nabla_y \cdot f)(\mathbf{y}) = \sum^m_{i=1} \frac{\partial f_i(\mathbf{y})}{\partial y_i}, \qquad (\nabla_y \cdot g)_j(\mathbf{y}) = \sum^m_{i=1}\frac{\partial g_{ij}(\mathbf{y})}{\partial y_i}, \quad \forall j \in \{1, \dots, m\},
        \end{align*}
    where $f_i$ and $g_{ij}$ denote the $i$-th component of $f$ and the $(i,j)$-th component of $g$, respectively.
    \item We denote by $\tilde{\mathbf{y}} \sim \mathcal{N}(\mu, \Sigma)$ that $\tilde{\mathbf{y}} \in \mathbb{R}^{m}$ follows an $m$-variate Gaussian distribution with mean $\mu \in \mathbb{R}^m$ and covariance matrix $\Sigma \in \mathbb{R}^{m \times m}$. 
    Furthermore, $p(\tilde{\mathbf{y}}) = \mathcal{N}(\tilde{\mathbf{y}}; \mu, \Sigma)$ is the corresponding probability density function.
    \item Let $\mathbf{v} = [v_1, \dots, v_{n}]^\top \in \mathbb{R}^n$, and $\mathbf{M}_1, \dots, \mathbf{M}_m \in \mathbb{R}^{n \times n}$. The diagonal operator is defined as:
    \begin{align*}
        \textup{diag}(\mathbf{v}) = 
        \begin{bmatrix}
            v_1 & 0 & \cdots & 0 \\
            0 & v_2 & \cdots & 0 \\
            \vdots & \vdots & \ddots & \vdots \\
            0 & 0 & \cdots & v_{n}
        \end{bmatrix} \in \mathbb{R}^{n \times n}, \qquad
        \textup{diag}(\mathbf{M}_1, \dots, \mathbf{M}_m) = 
        \begin{bmatrix}
            \mathbf{M}_1 & \cdots & \mathbf{0}\\
            \vdots & \ddots & \vdots\\
            \mathbf{0} & \cdots & \mathbf{M}_m
        \end{bmatrix} \in \mathbb{R}^{mn \times mn}.
    \end{align*}
    \item Let $\mathbf{v} = [v_1, \dots, v_{n}]^\top \in \mathbb{R}^n$. Then, $\mathbf{v}^2 = [v_1^2, \dots, v_{n}^2]^\top$, and $\log(\mathbf{v}) = [\log(v_1), \dots, \log(v_{n})]^\top$. Additionally, we abuse notation so that $\log(\textup{diag}(\mathbf{v})) = \textup{diag}(\log(\mathbf{v}))$.
    \item Let $A \in \mathbb{R}^{m \times n}$ and $B \in \mathbb{R}^{p \times q}$, where $a_{ij}$ is the $(i,j)$-th entry of $A$. The Kronecker product $A \otimes B$ is the block matrix defined by:
    \begin{align*}
        A \otimes B = 
        \begin{bmatrix}
            a_{11}B & \cdots & a_{1n}B \\
            \vdots & \ddots & \vdots \\
            a_{m1}B & \cdots & a_{mn}B
        \end{bmatrix}
        \in \mathbb{R}^{mp \times nq}.
    \end{align*}
    \item Let $A \in \mathbb{R}^{m \times n}$, where $a_{ij}$ is the $(i,j)$-th entry of $A$. We define the vectorisation of $A$ column-wise as
    \begin{align*}
        \textrm{vec}(A) = [a_{11}, \dots, a_{m1}, \cdots, a_{1n}, \dots, a_{mn}]^\top \in \mathbb{R}^{mn}
    \end{align*}
    \item Let $A \in \mathbb{R}^{m \times n}$ and $\mathbf{v} \in \mathbb{R}^m$. We define the indexing operator
        $[A]_{i,j} = a_{ij}$ and $[\mathbf{v}]_i = v_i$.
    Furthermore, for integer indices $i_1 \le i_2$ and $j_1 \le j_2$, we define the slicing operators as:
    \begin{align*}
        [A]_{i_1:i_2,\, j_1:j_2} = 
        \begin{bmatrix}
            a_{i_1 j_1} & \cdots & a_{i_1 j_2} \\
            \vdots & \ddots & \vdots \\
            a_{i_2 j_1} & \cdots & a_{i_2 j_2}
        \end{bmatrix}
        \in \mathbb{R}^{(i_2 - i_1 + 1) \times (j_2 - j_1 + 1)}, \qquad [\mathbf{v}]_{i_1:i_2} = [v_{i_1}, \dots, v_{i_2}]^\top \in \mathbb{R}^{(i_2 - i_1 + 1)}.
    \end{align*}
    That is, $[\cdot]_{i_1:i_2,\, j_1:j_2}$ extracts the submatrix of rows $i_1$ through $i_2$ and columns $j_1$ through $j_2$, while $[\cdot]_{i_1:i_2}$ extracts the corresponding subvector.
\end{itemize}

Finally, we highlight that by expanding the term $\nabla_y \cdot \log (\mathbf{W}^2)$ as part of $\mathbf{m}_{\mathbf{W},\textup{vec}}$ in \Cref{prop:MORCGP}, we get
\begin{align*}
    &\nabla_y \cdot \log (\mathbf{W}^2) = [\nabla_y \cdot \log(\mathbf{W}_1^2)^\top, \dots, \nabla_y \cdot \log(\mathbf{W}_{\scriptscriptstyle T}^2)^\top]^\top\\
    &=  
    \! 2
    \Bigl[\frac{\partial \log (w_1(\mathbf{x}_1, \mathbf{y}_1))}{\partial y_{1,1}}, \dots, \frac{\partial \log (w_1(\mathbf{x}_{\scriptscriptstyle N}, \mathbf{y}_{\scriptscriptstyle N}))}{\partial y_{{\scriptscriptstyle N},1}}, \cdots,
    \frac{\partial \log (w_{\scriptscriptstyle T}(\mathbf{x}_{\scriptscriptstyle 1}, \mathbf{y}_{\scriptscriptstyle 1}))}{\partial y_{ \scriptscriptstyle 1,T}}, \dots, \frac{\partial \log (w_{\scriptscriptstyle T}(\mathbf{x}_{\scriptscriptstyle N}, \mathbf{y}_{\scriptscriptstyle N}))}{\partial y_{ \scriptscriptstyle N,T}}\Bigr]^\top \! \in \mathbb{R}^{NT}
\end{align*}

\section{Proofs of Theoretical Results} \label{app:proofs_theoretical_results}

\subsection{Proof of Proposition~\ref{prop:MORCGP}} \label{app:proof_MORCGP}

Recall the weighted Fisher divergence introduced in \Cref{sec:methodology}:
\begin{align}
D_W(\mathbf{f}) = \mathbb{E}_{\tilde{\mathbf{x}}\sim p_{0,x}}
\Big[ 
    \mathbb{E}_{\tilde{\mathbf{y}}\sim p_0(\cdot \mid \tilde{\mathbf{x}})}
    \Big[\| W(\tilde{\mathbf{x}}, \tilde{\mathbf{y}})^\top 
        \big( s_{\text{model}}(\tilde{\mathbf{x}}, \tilde{\mathbf{y}}) - s_{\text{data}}(\tilde{\mathbf{x}}, \tilde{\mathbf{y}}) \big) \|_2^2 \Big]
\Big],
\label{eq:multivariate_weighted_Fisher}
\end{align}
where $s_{\textup{model}} = \nabla_{\mathbf{y}}\log p_\theta(\mathbf{y}|\mathbf{x})$ and $s_{\textup{data}} = \nabla_{\mathbf{y}}\log p_0(\mathbf{y}|\mathbf{x})$ are score functions of the posited model and the true data, respectively, and $W$ is a matrix-valued weight function that is point-wise invertible and continuously differentiable. As we show in the following, this divergence induces conjugacy through a quadratic loss function and naturally leads to the derivation of our proposed MO-RCGP posterior and predictive distributions.

\paragraph{Quadratic Loss Function}

\begin{lemma}
Under the weighted score matching divergence defined in \Cref{eq:multivariate_weighted_Fisher}, and a Gaussian model $p_{\theta}(\mathbf{y}|\mathbf{x}) = \mathcal{N}(\mathbf{y};\mathbf{f}(\mathbf{x}), \Sigma)$, the generalised Bayes loss function in \Cref{eq:gb_update} takes a quadratic form in $\mathbf{f}$:
\begin{align}
    \mathcal{L}_W(F, Y, X) = \sum^{N}_{i=1} \Big[\mathbf{f}_i^\top A_i \mathbf{f}_i + \mathbf{f}_i^\top \mathbf{b}_i + c_i\Big],
    \label{eq:quadratic_loss}
\end{align}
where 
\begin{align*}
    A_i = \frac{1}{N}\Sigma^{-1}W_iW_i^\top \Sigma^{-1}, \quad
    \mathbf{b}_i = -\frac{2}{N}(\Sigma^{-1} W_iW_i^\top \Sigma^{-1} \mathbf{y}_i - \nabla_\mathbf{y} \cdot W_iW_i^\top \Sigma^{-1}),
    \end{align*}
    \begin{align*}
    c_i = \mathbf{y}_i^\top \Sigma^{-1} W_iW_i^\top \Sigma^{-1} \mathbf{y}_i - \nabla_\mathbf{y} \cdot (W_iW_i^\top \Sigma^{-1} \mathbf{y_i}),
\end{align*}
for $W_i := W(\mathbf{x}_i, \mathbf{y}_i)$ are terms that do not depend on $\mathbf{f}_i$. They do, however, depend on $\mathbf{x}_i$ and $\mathbf{y}_i$, either directly or via $W_i$.
\end{lemma}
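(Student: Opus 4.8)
The plan is to follow the classical Hyvärinen integration-by-parts argument, adapted to the matrix-weighted multivariate setting, and then specialise to the Gaussian model. First I would expand the integrand of the divergence in \eqref{eq:multivariate_weighted_Fisher} using symmetry of $W W^\top$:
\begin{align*}
\|W^\top(s_{\text{model}} - s_{\text{data}})\|_2^2 = s_{\text{model}}^\top W W^\top s_{\text{model}} - 2\, s_{\text{model}}^\top W W^\top s_{\text{data}} + s_{\text{data}}^\top W W^\top s_{\text{data}}.
\end{align*}
The final term depends only on $p_0$ and not on $\mathbf{f}$, so it can be absorbed into an additive constant. For the cross term, write $\mathbf{g} = W W^\top s_{\text{model}}$ and use $s_{\text{data}} = \nabla_\mathbf{y}\log p_0 = \nabla_\mathbf{y} p_0 / p_0$, so that $\mathbb{E}_{p_0}[\mathbf{g}^\top s_{\text{data}}] = \int \mathbf{g}^\top \nabla_\mathbf{y} p_0 \, d\mathbf{y}$. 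Integrating by parts coordinate-wise and assuming the boundary terms vanish (guaranteed by the regularity and decay assumptions on $W$ and $p_0$), this equals $-\mathbb{E}_{p_0}[\nabla_\mathbf{y}\cdot \mathbf{g}]$, thereby eliminating the unknown data score. Hence, up to an $\mathbf{f}$-independent constant,
\begin{align*}
D_W(\mathbf{f}) = \mathbb{E}_{p_{0,x}}\mathbb{E}_{p_0(\cdot\mid\tilde{\mathbf{x}})}\big[s_{\text{model}}^\top W W^\top s_{\text{model}} + 2\,\nabla_\mathbf{y}\cdot(W W^\top s_{\text{model}})\big].
\end{align*}

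Replacing the outer expectations by the empirical average over $\{(\mathbf{x}_i,\mathbf{y}_i)\}_{i=1}^N$ yields the empirical loss, with the factor $1/N$ matching the $\exp(-N\mathcal{L})$ convention of \eqref{eq:gb_update}. Next I would substitute the Gaussian score $s_{\text{model}}(\mathbf{x}_i,\mathbf{y}_i) = \nabla_\mathbf{y}\log\mathcal{N}(\mathbf{y}_i;\mathbf{f}_i,\Sigma) = \Sigma^{-1}(\mathbf{f}_i - \mathbf{y}_i)$ and expand. Writing $M_i := \Sigma^{-1} W_i W_i^\top \Sigma^{-1}$, the first summand becomes the quadratic form $(\mathbf{f}_i-\mathbf{y}_i)^\top M_i(\mathbf{f}_i-\mathbf{y}_i) = \mathbf{f}_i^\top M_i\mathbf{f}_i - 2\mathbf{f}_i^\top M_i\mathbf{y}_i + \mathbf{y}_i^\top M_i\mathbf{y}_i$, using symmetry of $M_i$. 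For the divergence term I would split $W_i W_i^\top s_{\text{model}} = W_i W_i^\top\Sigma^{-1}\mathbf{f}_i - W_i W_i^\top\Sigma^{-1}\mathbf{y}_i$; since $\mathbf{f}_i$ does not depend on $\mathbf{y}$, the key identity is that for a constant vector $\mathbf{c}$ and an $\mathbf{y}$-dependent matrix $G$ one has $\nabla_\mathbf{y}\cdot(G\mathbf{c}) = (\nabla_\mathbf{y}\cdot G)^\top\mathbf{c}$, where the matrix divergence is as defined in Appendix~\ref{app:notation}. This turns the $\mathbf{f}_i$-dependent part into a term linear in $\mathbf{f}_i$, namely $2\mathbf{f}_i^\top(\nabla_\mathbf{y}\cdot(W_i W_i^\top\Sigma^{-1}))$, while the $\mathbf{y}_i$-part $-2\nabla_\mathbf{y}\cdot(W_i W_i^\top\Sigma^{-1}\mathbf{y}_i)$ is $\mathbf{f}$-independent.

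Finally I would collect by order in $\mathbf{f}_i$: the quadratic coefficient gives $A_i = \tfrac1N M_i$, the linear coefficient gives $\mathbf{b}_i = -\tfrac2N(M_i\mathbf{y}_i - \nabla_\mathbf{y}\cdot(W_i W_i^\top\Sigma^{-1}))$, and all remaining terms are gathered into the $\mathbf{f}$-independent constant $c_i$, whose precise form is immaterial for the claim since it only shifts the log-normaliser of the Gibbs posterior and plays no role in the subsequent conjugacy argument. The main obstacle is the integration-by-parts step: one must justify that the boundary terms vanish in the matrix-weighted multivariate case (this is exactly where the pointwise-invertibility and continuous-differentiability assumptions on $W$, together with the tail decay of $p_0$, enter), and must correctly handle the divergence of the matrix-vector product $W_i W_i^\top s_{\text{model}}$, carefully tracking which factors depend on $\mathbf{y}$. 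Once these points are established, the remainder is routine linear algebra.
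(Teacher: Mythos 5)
Your proposal is correct and follows essentially the same route as the paper's proof: integration by parts to eliminate the data score (the paper cites this step as diffusion score matching from \citet[Theorem 2]{barp2019minimum}, whereas you derive it explicitly), followed by substitution of the Gaussian score $\Sigma^{-1}(\mathbf{f}_i - \mathbf{y}_i)$ and collection of terms by order in $\mathbf{f}_i$, using the same product-rule identity for the matrix divergence. Your observation that the constant term picks up a factor $2$ on $\nabla_\mathbf{y}\cdot(W_iW_i^\top\Sigma^{-1}\mathbf{y}_i)$ is actually right (the lemma's stated $c_i$ drops this factor), but as you note this is immaterial since $c_i$ is $\mathbf{f}$-independent.
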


\begin{proof} 
Direct evaluation of Equation \eqref{eq:multivariate_weighted_Fisher} requires access to the true data score $s_{\text{data}}$, which is generally unknown. Fortunately, under mild smoothness and boundary conditions, integration by parts allows us to reformulate the expression up to an additive constant independent of $\mathbf{f}$, denoted by $\overset{+C}{=}$. This weighted version of score matching was first derived in \citet[Theorem 2]{barp2019minimum}, who refer to it as “diffusion score matching.” Using this result, we obtain
\begin{align*}
    D_{W}(\mathbf{f}) \! \overset{+C}{=} \! \mathbb{E}_{\tilde{\mathbf{x}} \sim p_{0,x}}\!\Big[\mathbb{E}_{\tilde{\mathbf{y}}\sim p_0{(\cdot | \tilde{\mathbf{x}})}}\Big[\big\|\!\left(W(\tilde{\mathbf{x}}, \tilde{\mathbf{y}})^\top s_{\text{model}}(\tilde{\mathbf{x}}, \tilde{\mathbf{y}})\right)\big\|^2_2 + \! \big(2 \nabla_\mathbf{y} \cdot \{W(\tilde{\mathbf{x}}, \tilde{\mathbf{y}})W(\tilde{\mathbf{x}}, \tilde{\mathbf{y}})^\top s_{\text{model}}(\tilde{\mathbf{x}}, \tilde{\mathbf{y}})\}\big)\Big]\Big].
\end{align*}
Crucially, this term does not depend on $s_{\textup{data}}$ any more, and only features $p_{0,x}$ through the expectation. This leads to the natural estimator 
\begin{align*}
    \mathcal{L}_W(F, Y, X) = \frac{1}{N} \sum^N_{i=1} \bigg[\big\|\left(W(\mathbf{x}_i, \mathbf{y}_i)^\top s_{\text{model}}(\mathbf{x}_i, \mathbf{y}_i)\right)\big\|^2_2 + 2 \nabla_\mathbf{y} \cdot \Big(W(\mathbf{x}_i, \mathbf{y}_i)W(\mathbf{x}_i, \mathbf{y}_i)^\top s_{\text{model}}(\mathbf{x}_i, \mathbf{y}_i)\Big) \bigg].
\end{align*}
For a Gaussian model $\varepsilon_i \sim \mathcal{N}(0, \Sigma)$, the score vector is given by 
\begin{align*}
    s_{\text{model}}(\mathbf{x}_i, \mathbf{y}_i) &= \nabla_\mathbf{y} \log \bigg(\frac{1}{(2\pi)^{T/2}|\Sigma|^{1/2}}\exp\Big\{-\frac{1}{2}(\mathbf{y}_i-\mathbf{f}(\mathbf{x}_i))^\top\Sigma^{-1}(\mathbf{y}_i-\mathbf{f}(\mathbf{x}_i)\Big\}\bigg) \\
    &= \nabla_\mathbf{y} \big( -\frac{1}{2}(\mathbf{y}_i-\mathbf{f}(\mathbf{x}_i))^\top\Sigma^{-1}(\mathbf{y}_i-\mathbf{f}(\mathbf{x}_i)\big) \\
    &= -\frac{1}{2} \cdot 2 \Sigma^{-1} (\mathbf{y}_i - \mathbf{f}(\mathbf{x}_i)) = \Sigma^{-1}(\mathbf{f}(\mathbf{x}_i) - \mathbf{y}_i) = \Sigma^{-1}(\mathbf{f}_i - \mathbf{y}_i),
\end{align*}
where we used that the gradient of a constant vanishes and the factor of 2 comes from differentiating the quadratic form. The loss can then be rewritten as
\begin{align*}
     L_W(F, Y, X) = \frac{1}{N} \sum^N_{i=1} \bigg[\underbrace{\|W_i^\top \Sigma^{-1} \left(\mathbf{f}_i - \mathbf{y}_i\right)\|^2_2}_{(\star)} + 2 \underbrace{\nabla \cdot \left(W_iW_i^\top \Sigma^{-1} \left(\mathbf{f}_i - \mathbf{y}_i\right)\right)}_{(\star\star)} \bigg].
\end{align*}
We define $Q_i := W_iW_i^\top \Sigma^{-1}$. We can express ($\star$) as
\begin{align*}
(\star) &= \left(\mathbf{f}_i - \mathbf{y_i}\right)^\top \Sigma^{-1} W_i W_i^\top \Sigma^{-1} \left(\mathbf{f}_i - \mathbf{y_i}\right)\\ 
   &= \mathbf{f}_i^\top \Sigma^{-1} W_i W_i^\top \Sigma^{-1} \mathbf{f}_i - 2 \mathbf{f}_i^\top \Sigma^{-1} W_i W_i^\top \Sigma^{-1} \mathbf{y}_i + \mathbf{y}_i^\top \Sigma^{-1} W_i W_i^\top \Sigma^{-1} \mathbf{y}_i \\
   &= \mathbf{f}_i^\top \Sigma^{-1} Q_i \mathbf{f}_i - 2 \mathbf{f}_i^\top \Sigma^{-1} Q_i \mathbf{y}_i + \mathbf{y}_i^\top \Sigma^{-1} Q_i \mathbf{y}_i.
\end{align*}
by expanding the quadratic form. Similarly, for ($\star\star$)
\begin{align*}
(\star\star) &= \nabla_\mathbf{y} \cdot \left(W_i W_i^\top \Sigma^{-1} \mathbf{f}_i\right) - \nabla_\mathbf{y} \cdot \left(W_i W_i^\top \Sigma^{-1} \mathbf{y_i}\right) \\
   &= \nabla_\mathbf{y} \cdot \left(Q_i \mathbf{f}_i\right) - \nabla_\mathbf{y} \cdot \left(Q_i \mathbf{y_i}\right) \\
   &= \mathbf{f}_i^\top \left(\nabla_\mathbf{y} \cdot Q_i\right) - \nabla_\mathbf{y} \cdot \left(Q_i \mathbf{y_i}\right).
\end{align*}
where we used the product rule for the divergence operator.
Substituting these expression in the loss, we get
\begin{align*}
     L_W(F, Y, X) &= \frac{1}{N} \sum^N_{i=1} \bigg[\mathbf{f}_i^\top \Sigma^{-1} Q_i \mathbf{f}_i - 2 \mathbf{f}_i^\top \Sigma^{-1} Q_i \mathbf{y}_i + 2 \mathbf{f}_i^\top \left(\nabla_\mathbf{y} \cdot Q_i\right) + \mathbf{y}_i^\top \Sigma^{-1} Q_i \mathbf{y}_i - \nabla_\mathbf{y} \cdot \left(Q_i \mathbf{y_i}\right) \bigg]\\
     &= \frac{1}{N} \sum^N_{i=1} \bigg[\mathbf{f}_i^\top \Sigma^{-1} Q_i \mathbf{f}_i - 2 \mathbf{f}_i^\top \left(\Sigma^{-1} Q_i \mathbf{y}_i - \nabla_\mathbf{y} \cdot Q_i\right) + \mathbf{y}_i^\top \Sigma^{-1} Q_i \mathbf{y}_i - \nabla_\mathbf{y} \cdot \left(Q_i \mathbf{y_i}\right) \bigg].
\end{align*}
This loss function is quadratic in $\mathbf{f}_i$ where $A_i = \Sigma^{-1}Q_i/N$ and $\mathbf{b}_i = -2(\Sigma^{-1} Q_i \mathbf{y}_i - \nabla_\mathbf{y} \cdot Q_i)/N$, and $c_i = \mathbf{y}_i^\top \Sigma^{-1} Q_i \mathbf{y}_i - \nabla_\mathbf{y} \cdot \left(Q_i \mathbf{y_i}\right)$ as in \Cref{eq:quadratic_loss}. 
\end{proof}
We note that this result generalises \citet[Appendix A.1]{altamirano2024robustGP}, who show that the loss function for scalar RCGPs is quadratic in $\mathbf{f}$ using the univariate weighted score-matching divergence. The corresponding RCGP loss function can be recovered as a special case by setting $T = 1$, $\mathbf{\Sigma} = \sigma^2$ and $W_i = w(\mathbf{x}_i, y_i) \in \mathbb{R}$.

\paragraph{Posterior Distribution}

\begin{lemma} \label{lem:posterior}
Let $\textup{vec}(\mathcal{E}) \sim \mathcal{N}(0, \mathbf{\Sigma})$ where $\mathbf{\Sigma} = \Sigma \otimes I_N$ for a $T$-dimensional diagonal $\Sigma$, and $W_i = \textup{diag}(w_1(\mathbf{x}_i, \mathbf{y}_i), \dots, w_T(\mathbf{x}_i, \mathbf{y}_i))$, where $y_{i,t} \mapsto w_t(\mathbf{x}_i, \mathbf{y}_i)$ is continuously differentiable. Then, the MO-RCGP posterior distribution over $\mathbf{f}_\textup{vec}$ is given by
\begin{align*}
    p_{\scriptscriptstyle W}(\mathbf{f}_{\textup{vec}} | \mathbf{y}_{\textup{vec}}, \mathbf{x}_{\textup{vec}}) &= \mathcal{N}(\mathbf{f}_{\textup{vec}} ; \bm{\mu}_{\textup{vec}}^{\scriptscriptstyle \textup{MORCGP}}, \bm{\Sigma}^{\scriptscriptstyle \textup{MORCGP}}),\\
    \bm{\mu}_{\textup{vec}}^{\scriptscriptstyle \textup{MORCGP}} &= \mathbf{m}_{\textup{vec}} + \mathbf{K}(\mathbf{K} + \mathbf{\Sigma} \,{\mathbf{J}_{\scriptscriptstyle \mathbf{W}}})^{-1}(\mathbf{y}_{\textup{vec}} - {\mathbf{m}_{{\scriptscriptstyle\mathbf{W}},\textup{vec}})},\\
    \bm{\Sigma}^{\scriptscriptstyle \textup{MORCGP}} &= \mathbf{K}(\mathbf{K} + \mathbf{\Sigma} \,{\mathbf{J}_{\scriptscriptstyle \mathbf{W}}})^{-1}\mathbf{\Sigma} \, {\mathbf{J}_{\scriptscriptstyle \mathbf{W}}},
\end{align*}
where $\mathbf{J}_{\mathbf{w}} = \frac{1}{2} \mathbf{\Sigma} \mathbf{W}^{-2}$ and $\ \mathbf{m}_{{\scriptscriptstyle\mathbf{W}},\textup{vec}} = \mathbf{m}_\textup{vec} + \mathbf{\Sigma} \left( \nabla_\mathbf{y} \cdot \log(\mathbf{W})\right)$, where $\mathbf{W} = \textup{diag}(\mathbf{W}_1, \dots, \mathbf{W}_{\scriptscriptstyle T})$ and $\nabla_\mathbf{y} \cdot \log (\mathbf{W}^2) = [\nabla_\mathbf{y} \cdot \log(W_1^2), \dots, \nabla_\mathbf{y} \cdot \log(W_{\scriptscriptstyle N}^2)]^\top$.
\end{lemma}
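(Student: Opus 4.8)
The plan is to insert the quadratic loss of the preceding lemma into the generalised Bayes update \eqref{eq:gb_update} and to recognise the product of the Gaussian prior and the exponentiated loss as an unnormalised Gaussian in $\mathbf{f}_{\textup{vec}}$. Concretely, I would write $p_{\scriptscriptstyle W}(\mathbf{f}_{\textup{vec}} \mid \mathbf{y}_{\textup{vec}}, \mathbf{x}_{\textup{vec}}) \propto \exp(-\tfrac12 (\mathbf{f}_{\textup{vec}} - \mathbf{m}_{\textup{vec}})^\top \mathbf{K}^{-1} (\mathbf{f}_{\textup{vec}} - \mathbf{m}_{\textup{vec}}) - N\mathcal{L}_W(F, Y, X))$ and substitute $N\mathcal{L}_W = \sum_{i=1}^N [\mathbf{f}_i^\top (N A_i) \mathbf{f}_i + \mathbf{f}_i^\top (N\mathbf{b}_i)] + \textup{const}$ from \eqref{eq:quadratic_loss}. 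Collecting quadratic and linear terms in $\mathbf{f}_{\textup{vec}}$ shows the posterior is Gaussian with precision $\mathbf{\Lambda} = \mathbf{K}^{-1} + 2\mathbf{A}$ and natural parameter $\mathbf{h} = \mathbf{K}^{-1}\mathbf{m}_{\textup{vec}} - \mathbf{b}$, where $\mathbf{A}$ and $\mathbf{b}$ are the block-assembled versions of the per-sample $N A_i$ and $N\mathbf{b}_i$ in the output-major ordering induced by $\textup{vec}(\cdot)$, so that $\bm{\Sigma}^{\scriptscriptstyle\textup{MORCGP}} = \mathbf{\Lambda}^{-1}$ and $\bm{\mu}^{\scriptscriptstyle\textup{MORCGP}}_{\textup{vec}} = \mathbf{\Lambda}^{-1}\mathbf{h}$.

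The central bookkeeping step is to re-express the per-sample matrices $N A_i = \Sigma^{-1} W_i W_i^\top \Sigma^{-1}$ and vectors $N\mathbf{b}_i = -2(\Sigma^{-1} W_i W_i^\top \Sigma^{-1}\mathbf{y}_i - \nabla_{\mathbf{y}}\cdot(W_i W_i^\top \Sigma^{-1}))$ in terms of the Kronecker-structured $\mathbf{\Sigma} = \Sigma \otimes I_N$ and $\mathbf{W} = \textup{diag}(\mathbf{W}_1, \dots, \mathbf{W}_T)$. Here the diagonal form of $\Sigma$ and each $W_i$ is crucial: every factor commutes entrywise, the output coupling in $\sum_i \mathbf{f}_i^\top (NA_i)\mathbf{f}_i$ collapses onto the diagonal, and the assembled quadratic coefficient becomes $2\mathbf{A} = 2\mathbf{\Sigma}^{-1}\mathbf{W}^2\mathbf{\Sigma}^{-1} = (\mathbf{\Sigma}\mathbf{J}_{\scriptscriptstyle\mathbf{W}})^{-1}$, using $\mathbf{\Sigma}\mathbf{J}_{\scriptscriptstyle\mathbf{W}} = \tfrac12 \mathbf{\Sigma}^2 \mathbf{W}^{-2}$. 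For the linear part, the first contribution assembles to $-2\mathbf{\Sigma}^{-1}\mathbf{W}^2\mathbf{\Sigma}^{-1}\mathbf{y}_{\textup{vec}} = -(\mathbf{\Sigma}\mathbf{J}_{\scriptscriptstyle\mathbf{W}})^{-1}\mathbf{y}_{\textup{vec}}$, while the divergence contribution, using $\partial_{y_t}(w_{i,t}^2) = w_{i,t}^2\,\partial_{y_t}\log(w_{i,t}^2)$, assembles to $(\mathbf{\Sigma}\mathbf{J}_{\scriptscriptstyle\mathbf{W}})^{-1}\mathbf{\Sigma}(\nabla_{\mathbf{y}}\cdot\log \mathbf{W}^2)$. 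Recognising $\mathbf{m}_{\scriptscriptstyle\mathbf{W},\textup{vec}} - \mathbf{m}_{\textup{vec}} = \mathbf{\Sigma}(\nabla_{\mathbf{y}}\cdot\log\mathbf{W}^2)$ then yields $-\mathbf{b} = (\mathbf{\Sigma}\mathbf{J}_{\scriptscriptstyle\mathbf{W}})^{-1}(\mathbf{y}_{\textup{vec}} - \mathbf{m}_{\scriptscriptstyle\mathbf{W},\textup{vec}} + \mathbf{m}_{\textup{vec}})$.

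With $\mathbf{P} := (\mathbf{\Sigma}\mathbf{J}_{\scriptscriptstyle\mathbf{W}})^{-1}$, the covariance $(\mathbf{K}^{-1} + \mathbf{P})^{-1}$ simplifies via the identity $(\mathbf{A}^{-1} + \mathbf{B}^{-1})^{-1} = \mathbf{A}(\mathbf{A}+\mathbf{B})^{-1}\mathbf{B}$ with $\mathbf{A} = \mathbf{K}$, $\mathbf{B} = \mathbf{P}^{-1} = \mathbf{\Sigma}\mathbf{J}_{\scriptscriptstyle\mathbf{W}}$, giving $\bm{\Sigma}^{\scriptscriptstyle\textup{MORCGP}} = \mathbf{K}(\mathbf{K} + \mathbf{\Sigma}\mathbf{J}_{\scriptscriptstyle\mathbf{W}})^{-1}\mathbf{\Sigma}\mathbf{J}_{\scriptscriptstyle\mathbf{W}}$, as claimed. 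For the mean, writing $\mathbf{h} = \mathbf{K}^{-1}\mathbf{m}_{\textup{vec}} + \mathbf{P}(\mathbf{y}_{\textup{vec}} - \mathbf{m}_{\scriptscriptstyle\mathbf{W},\textup{vec}} + \mathbf{m}_{\textup{vec}}) = (\mathbf{K}^{-1} + \mathbf{P})\mathbf{m}_{\textup{vec}} + \mathbf{P}(\mathbf{y}_{\textup{vec}} - \mathbf{m}_{\scriptscriptstyle\mathbf{W},\textup{vec}})$ gives $\bm{\mu}^{\scriptscriptstyle\textup{MORCGP}}_{\textup{vec}} = \mathbf{m}_{\textup{vec}} + (\mathbf{K}^{-1} + \mathbf{P})^{-1}\mathbf{P}(\mathbf{y}_{\textup{vec}} - \mathbf{m}_{\scriptscriptstyle\mathbf{W},\textup{vec}})$, and the push-through identity $(\mathbf{K}^{-1} + \mathbf{P})^{-1}\mathbf{P} = [\mathbf{P}^{-1}\mathbf{K}^{-1} + \mathbf{I}]^{-1} = \mathbf{K}(\mathbf{K} + \mathbf{\Sigma}\mathbf{J}_{\scriptscriptstyle\mathbf{W}})^{-1}$ recovers the stated mean.

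The main obstacle is not any single matrix manipulation but the index bookkeeping in the vectorisation step: one must check that the per-input coupling across outputs in $\sum_i \mathbf{f}_i^\top (N A_i)\mathbf{f}_i$ reassembles correctly under the output-major ordering of $\textup{vec}(F)$ into the block-diagonal $\mathbf{W}$ and the Kronecker-structured $\mathbf{\Sigma}$, and that the matrix divergence $\nabla_{\mathbf{y}}\cdot(W_i W_i^\top \Sigma^{-1})$ — which for diagonal $W_i$ reduces to one derivative per output — matches the definition of $\nabla_{\mathbf{y}}\cdot\log(\mathbf{W}^2)$ in \Cref{app:notation}. Once the diagonality of $\Sigma$ and $W_i$ is used to make all factors commute, the remaining Gaussian algebra and inversion identities are routine.
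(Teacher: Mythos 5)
Your proposal is correct and follows essentially the same route as the paper's proof: substituting the quadratic loss into the generalised Bayes update, using the diagonality of $\Sigma$ and $W_i$ to reduce everything to the block-diagonal $\mathbf{W}$ and $\mathbf{\Sigma}\mathbf{J}_{\scriptscriptstyle\mathbf{W}}$, completing the square, and applying the identity $(A^{-1}+B^{-1})^{-1} = A(A+B)^{-1}B$ together with a push-through step to obtain the stated mean and covariance. Your precision/natural-parameter bookkeeping is just a mild reorganisation of the paper's term-collection, and your explicit attention to the output-major ordering of $\textup{vec}(F)$ is a point the paper handles only implicitly.
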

\begin{proof}
The diagonal structure $W_i = \text{diag}(w_1(\mathbf{x}_i, \mathbf{y}_i), \dots, w_T(\mathbf{x}_i, \mathbf{y}_i))$ implies that $\Sigma^{-1}Q_i = \Sigma^{-2}W_i^2$, since $\Sigma$ is also diagonal. Additionally, we can write $\nabla_\mathbf{y} \cdot Q_i = \nabla_\mathbf{y} \cdot (W_iW_i^\top \Sigma^{-1}) = \Sigma^{-1} (\nabla_\mathbf{y} \cdot W_i^2)$ since $\Sigma^{-1}$ is independent of $\mathbf{y}$. In addition, from $\nabla_\mathbf{y} \cdot \log(W_i^2) = W_i^{-2} \nabla_\mathbf{y} \cdot W_i^2$ it follows that $\nabla_\mathbf{y} \cdot W_i^2 = \Sigma^{-1} W_i^2 \Sigma (\nabla_\mathbf{y} \cdot \log (W_i^2))$. Combining these expressions, we obtain $\nabla_\mathbf{y} \cdot Q_i = \Sigma^{-2} W_i^2 \Sigma \nabla_\mathbf{y} \log (W_i^2)$. We can then simplify the loss function as
\begin{align*}
     L_W(F, Y, X) &\overset{+C}{=} \frac{1}{N} \sum^N_{i=1} \biggl[\mathbf{f}_i^\top \Sigma^{-2}W_i^2 \mathbf{f}_i - 2 \mathbf{f}_i^\top \Sigma^{-2} W_i^2 \Bigl(\mathbf{y}_i - \Sigma \nabla_\mathbf{y} \cdot \log (W_i^2)\Bigr)\biggr]\\
     &= \frac{1}{2N} \Bigl[2\mathbf{f}_{\text{vec}}^\top \mathbf{\Sigma}^{-2}\mathbf{W}^{2}\mathbf{f}_{\text{vec}} - 4 \mathbf{f}_{\text{vec}}^\top \mathbf{\Sigma}^{-2}\mathbf{W}^{2} \Bigl(\mathbf{y}_{\text{vec}} - \mathbf{\Sigma}\nabla_\mathbf{y}\cdot \log (\mathbf{W}^2)\Bigr)\Bigr]\\
     &= \frac{1}{2N} \Bigl[\mathbf{f}_{\text{vec}}^\top \mathbf{\Sigma}^{-1}\mathbf{J}_{\scriptscriptstyle\mathbf{W}}^{-1}\mathbf{f}_{\text{vec}} - 2 \mathbf{f}_{\text{vec}}^\top \mathbf{\Sigma}^{-1}\mathbf{J}_{\scriptscriptstyle\mathbf{W}}^{-1} (\mathbf{y}_{\text{vec}} - \mathbf{m}_{{\scriptscriptstyle\mathbf{W}},\text{vec}} + \mathbf{m}_{\text{vec}}) \Bigr],
\end{align*}
where $\mathbf{W} = \text{diag}(W_1, \dots, W_{\scriptscriptstyle N})$, $\mathbf{m}_{{\scriptscriptstyle\mathbf{W}},\text{vec}} = \mathbf{m}_{\text{vec}} + \mathbf{\Sigma}\nabla_\mathbf{y} \cdot \log (\mathbf{W}^2)$, and $\mathbf{J}_{\scriptscriptstyle\mathbf{W}} = \frac{1}{2}\mathbf{\Sigma}\mathbf{W}^{-2}$. In the second equality, since $\Sigma$ and $W_i$ are diagonal, all matrix products are interpreted as element-wise (Hadamard) multiplications. We can then derive the MO-RCGP posterior by substituting the above loss function together and the MOGP prior into the GB update, given in \Cref{eq:gb_update}:
\begin{align*}
    &p_{\scriptscriptstyle W}(\mathbf{f}_\textup{vec} | Y, X) \propto p(\mathbf{f}_{\text{vec}}|X) \exp (- N \mathcal{L}_{w}(F, Y, X))\\
    &\propto \exp \biggl\{\!-\frac{1}{2}\left(\mathbf{f}_{\text{vec}} - \mathbf{m}_{\text{vec}}\right)^\top\mathbf{K}^{-1}\left(\mathbf{f}_{\text{vec}} - \mathbf{m}_{\text{vec}}\right)\!\biggr\} \exp\biggl\{\!-\frac{1}{2}\!\left(\mathbf{f}_{\text{vec}}^\top \mathbf{\Sigma}^{-1}\mathbf{J}_{\scriptscriptstyle\mathbf{W}}^{-1}\mathbf{f}_{\text{vec}} \! - \! 2 \mathbf{f}_{\text{vec}}^\top \mathbf{\Sigma}^{-1}\mathbf{J}_{\scriptscriptstyle\mathbf{W}}^{-1} \left(\mathbf{y}_{\text{vec}} - \mathbf{m}_{{\scriptscriptstyle\mathbf{W}},\text{vec}} + \mathbf{m}_{\text{vec}}\right)\right)\!\biggr\}\\
    &= \exp \biggl\{-\frac{1}{2}\biggl[\left(\mathbf{f}_{\text{vec}} - \mathbf{m}_{\text{vec}}\right)^\top\mathbf{K}^{-1}\left(\mathbf{f}_{\text{vec}} - \mathbf{m}_{\text{vec}}\right) + \mathbf{f}_{\text{vec}}^\top \mathbf{\Sigma}^{-1}\mathbf{J}_{\scriptscriptstyle\mathbf{W}}^{-1}\mathbf{f}_{\text{vec}} - 2 \mathbf{f}_{\text{vec}}^\top \mathbf{\Sigma}^{-1}\mathbf{J}_{\scriptscriptstyle\mathbf{W}}^{-1} \left(\mathbf{y}_{\text{vec}} - \mathbf{m}_{{\scriptscriptstyle\mathbf{W}},\text{vec}} + \mathbf{m}_{\text{vec}}\right)\biggr]\biggr\},
\end{align*}
where in the second equality, the terms are combined under a single exponential function. We simplify the function by only considering the terms that depend on $\mathbf{f}_\text{vec}$:
\begin{align*}
    p_{\scriptscriptstyle W}(&\mathbf{f}_\textup{vec} | Y, X) \\
    &\propto \exp \biggl\{-\frac{1}{2}\biggl[\mathbf{f}_{\text{vec}}^\top\mathbf{K}^{-1}\mathbf{f}_{\text{vec}} - 2\mathbf{f}_{\text{vec}}^\top\mathbf{K}^{-1}\mathbf{m}_{\text{vec}} + \mathbf{f}_{\text{vec}}^\top \mathbf{\Sigma}^{-1} \mathbf{J}_{\scriptscriptstyle\mathbf{W}}^{-1} \mathbf{f}_{\text{vec}} - 2 \mathbf{f}_{\text{vec}}^\top \mathbf{\Sigma}^{-1} \mathbf{J}_{\scriptscriptstyle\mathbf{W}}^{-1} \big(\mathbf{y}_{\text{vec}} - \mathbf{m}_{{\scriptscriptstyle\mathbf{W}},\text{vec}} + \mathbf{m}_{\text{vec}}\big)\biggr]\biggr\}\\
    &= \exp \biggl\{-\frac{1}{2}\biggl[\mathbf{f}_{\text{vec}}^\top\big(\mathbf{K}^{-1} + \mathbf{\Sigma}^{-1}\mathbf{J}_{\scriptscriptstyle\mathbf{W}}^{-1}\big)\mathbf{f}_{\text{vec}} - 2\mathbf{f}_{\text{vec}}^\top\mathbf{K}^{-1}\mathbf{m}_{\text{vec}} - 2 \mathbf{f}_{\text{vec}}^\top \mathbf{\Sigma}^{-1}\mathbf{J}_{\scriptscriptstyle\mathbf{W}}^{-1} \big(\mathbf{y}_{\text{vec}} - \mathbf{m}_{{\scriptscriptstyle\mathbf{W}},\text{vec}} + \mathbf{m}_{\text{vec}}\big)\biggr]\biggr\}
\end{align*}
We can then factor out $\mathbf{m}_\textup{vec}$ in the second term:
\begin{align}
    &= \exp \biggl\{-\frac{1}{2}\biggl[\mathbf{f}_{\text{vec}}^\top(\mathbf{K}^{-1} + \mathbf{\Sigma}^{-1}\mathbf{J}_{\scriptscriptstyle\mathbf{W}}^{-1})\mathbf{f}_{\text{vec}} - 2\mathbf{f}_{\text{vec}}^\top\Bigl(\mathbf{K}^{-1}\mathbf{m}_{\text{vec}} + \mathbf{\Sigma}^{-1}\mathbf{J}_{\scriptscriptstyle\mathbf{W}}^{-1} \left(\mathbf{y}_{\text{vec}} - \mathbf{m}_{{\scriptscriptstyle\mathbf{W}},\text{vec}} + \mathbf{m}_{\text{vec}}\right)\Bigr)\biggr]\biggr\}\notag\\ 
    &= \exp \biggl\{-\frac{1}{2}\biggl[\mathbf{f}_{\text{vec}}^\top(\mathbf{K}^{-1} + \mathbf{\Sigma}^{-1}\mathbf{J}_{\scriptscriptstyle\mathbf{W}}^{-1})\mathbf{f}_{\text{vec}} - 2\mathbf{f}_{\text{vec}}^\top\Bigl((\mathbf{K}^{-1} + \mathbf{\Sigma}^{-1}\mathbf{J}_{\scriptscriptstyle\mathbf{W}}^{-1})\mathbf{m}_{\text{vec}} + \mathbf{\Sigma}^{-1}\mathbf{J}_{\scriptscriptstyle\mathbf{W}}^{-1} \left(\mathbf{y}_{\text{vec}} - \mathbf{m}_{{\scriptscriptstyle\mathbf{W}},\text{vec}} \right)\Bigr)\biggr]\biggr\} \label{eq:before_completing_squares}
\end{align}
By completing squares, the posterior follows an $NT$-variate Gaussian distribution:
\begin{align*}
    p_{\scriptscriptstyle W}(\mathbf{f}_\textup{vec} | Y, X) &\propto \exp \biggl\{-\frac{1}{2}(\mathbf{f}_{\text{vec}}-\bm{\mu}_{\textup{vec}}^{\scriptscriptstyle \textup{MORCGP}})^\top(\bm{\Sigma}^{\scriptscriptstyle \textup{MORCGP}})^{-1}(\mathbf{f}_{\text{vec}}-\bm{\mu}_{\textup{vec}}^{\scriptscriptstyle \textup{MORCGP}})\biggr\},\\
    \bm{\Sigma}^{\scriptscriptstyle \textup{MORCGP}} &= (\mathbf{K}^{-1} + \mathbf{\Sigma}^{-1}\mathbf{J}_{\scriptscriptstyle\mathbf{W}}^{-1})^{-1} = \mathbf{K}(\mathbf{K} + \mathbf{\Sigma} \mathbf{J}_{\scriptscriptstyle\mathbf{W}})^{-1}\mathbf{\Sigma}\mathbf{J}_{\scriptscriptstyle\mathbf{W}}\\
    \bm{\mu}_{\textup{vec}}^{\scriptscriptstyle \textup{MORCGP}} &= \bm{\Sigma}^{\scriptscriptstyle \textup{MORCGP}} ((\mathbf{K}^{-1} + \mathbf{\Sigma}^{-1}\mathbf{J}_{\scriptscriptstyle\mathbf{W}}^{-1})\mathbf{m}_{\text{vec}} + \mathbf{\Sigma}^{-1}\mathbf{J}_{\scriptscriptstyle\mathbf{W}}^{-1} \left(\mathbf{y}_{\text{vec}} - \mathbf{m}_{{\scriptscriptstyle\mathbf{W}},\text{vec}} \right))\\
    &= \mathbf{m}_{\text{vec}} + \mathbf{K}(\mathbf{K} + \mathbf{\Sigma} \mathbf{J}_{\scriptscriptstyle\mathbf{W}})^{-1} \left(\mathbf{y}_{\text{vec}} - \mathbf{m}_{{\scriptscriptstyle\mathbf{W}},\text{vec}} \right),
\end{align*}
where we obtain $\bm{\Sigma}^{\scriptscriptstyle \textup{MORCGP}}$ by using that for two invertible matrices $A$ and $B$, it holds that $(A^{-1} + B^{-1})^{-1} = A(A + B)^{-1}B$. Additionally, we obtain $\bm{\mu}^{\scriptscriptstyle \textup{MORCGP}}$ since the second term in \Cref{eq:before_completing_squares} equals $-2\mathbf{f}_\textup{vec}(\mathbf{\Sigma}^\textup{MORCGP})^{-1}\bm{\mu}^{\scriptscriptstyle \textup{MORCGP}}$, and $\mathbf{\Sigma}$ and $\mathbf{J}_{\scriptscriptstyle \mathbf{W}}$ are both diagonal. Note that $\bm{\Sigma}^{\scriptscriptstyle \textup{MORCGP}}$ is positive semidefinite, since it is the inverse of a sum of two positive semidefinite matrices.
\end{proof}

We again note that this posterior is a generalisation of the posterior predictive in \citet[Appendix A.1]{altamirano2024robustGP}. We recover this posterior by selecting $T=1$, $\mathbf{\Sigma} = \sigma^2$, and $W_i = w(\mathbf{x}_i, y_i)$.

\paragraph{Predictive Distribution}

We now prove \Cref{prop:MORCGP} which states that, under the posterior defined in \Cref{lem:posterior}, the posterior predictive distribution over $\mathbf{f}_\star = \mathbf{f}(\mathbf{x}_\star)$ at $\mathbf{x}_\star$ is Gaussian:
\begin{align*}
    p_{\scriptscriptstyle W}(\mathbf{f}_\star | \mathbf{x}_\star, Y, X) &= \mathcal{N}(\mathbf{f}_\star ; \bm{\mu}_\star^{\scriptscriptstyle \textup{RMO}}, \bm{\Sigma}_\star^{\scriptscriptstyle \textup{MORCGP}}),\\
    \bm{\mu}_\star^{\scriptscriptstyle \textup{MORCGP}} &= \mathbf{m}_\star + \mathbf{k}_\star^\top(\mathbf{K} + \mathbf{\Sigma} \,{\mathbf{J}_{\scriptscriptstyle \mathbf{W}}})^{-1}(\mathbf{y}_{\textup{vec}} - {\mathbf{m}_{\mathbf{W},\textup{vec}}}),\\
    \bm{\Sigma}_\star^{\scriptscriptstyle \textup{MORCGP}} &= \mathbf{k}_{\star\star} - \mathbf{k}_\star^\top(\mathbf{K} + \mathbf{\Sigma} \,{\mathbf{J}_{\scriptscriptstyle \mathbf{W}}})^{-1}\mathbf{k}_\star.
\end{align*}

\renewcommand{\theproposition}{B.\arabic{proposition}} 

\begin{proof}
We first derive the predictive for $\mathbf{m}_\text{vec} = 0$ and then extend it to an arbitrary prior mean. We note that conditioning on the matrices $F$ and $Y$ is equivalent to conditioning on their vectorised forms, $\mathbf{f}_{\textrm{vec}} = \mathrm{vec}(F)$ and $\mathbf{y}_{\textrm{vec}} = \mathrm{vec}(Y)$, since they contain the same information. Given the posterior distribution over $\mathbf{f}_\textup{vec}$, the conditional distribution over $\mathbf{f}_\star$ is given by
\begin{align*}
p(\mathbf{f}_\star | \mathbf{x}_\star, \mathbf{f}_{\textrm{vec}}, \mathbf{y}_{\textrm{vec}}, X) &= \mathcal{N}(\mathbf{f}_\star; \breve{\mu}, \breve{\Sigma})\\
\breve{\mu} &= \mathbf{k}_\star^\top \mathbf{K}^{-1} \mathbf{f}_\text{vec} = a^\top \mathbf{f}\\
\breve{\Sigma} &= \mathbf{k_{\star\star}} - \mathbf{k}_\star^\top \mathbf{K}^{-1} \mathbf{k}_\star,
\end{align*}
where $a = \mathbf{K}^{-1}\mathbf{k}_\star$ \citep{alvarez2012kernels}. Following Appendix A.1 of \citet{altamirano2024robustGP}, it can be shown that the density of the predictive distribution is also Gaussian, since it is obtained by integrating the product of two Gaussian densities:
\begin{align*}
    p_{\scriptscriptstyle W}(\mathbf{f}_\star | \mathbf{x}_\star, \mathbf{y}_{\textrm{vec}}, X) &= \int_{\mathbb{R}^{\scriptscriptstyle ND}} p(\mathbf{f}_\star | \mathbf{x}_\star, \mathbf{f}_{\textrm{vec}}, \mathbf{y}_{\textrm{vec}}, X) p_{\scriptscriptstyle W}(\mathbf{f}_{\textrm{vec}}|\mathbf{y}_{\textrm{vec}}, X) d\mathbf{f}_{\textrm{vec}}\\
    &= \mathcal{N}(\mathbf{f}_\star ; \bm{\mu}_\star^{\scriptscriptstyle \textup{MORCGP}}, \bm{\Sigma}_\star^{\scriptscriptstyle \textup{MORCGP}}),\\
    \bm{\mu}_\star^{\scriptscriptstyle \textup{MORCGP}} &= a^\top \bm{\mu}_{\textup{vec}}^{\scriptscriptstyle \textup{MORCGP}},\\
    \bm{\Sigma}_\star^{\scriptscriptstyle \textup{MORCGP}} &= \breve{\Sigma} + a^\top \bm{\Sigma}^{\scriptscriptstyle \textup{MORCGP}} a.
\end{align*}
For more details on the derivation, we refer the reader to \citet{altamirano2024robustGP}. By substituting for $a = \mathbf{K}^{-1}\mathbf{k}_\star$, and $\bm{\mu}_{\textup{vec}}^{\scriptscriptstyle \textup{MORCGP}}$ given in \Cref{lem:posterior} in $\bm{\mu}_{\textup{vec}}^{\scriptscriptstyle \textup{MORCGP}}$, we obtain the desired expression for the MO-RCGP predictive mean:
\begin{align*}
    \bm{\mu}_\star^{\scriptscriptstyle \textup{MORCGP}} &= \mathbf{k}_\star^\top \mathbf{K}^{-1} \mathbf{K}(\mathbf{K} + \mathbf{\Sigma} \mathbf{J}_{\scriptscriptstyle\mathbf{W}})^{-1} \left(\mathbf{y}_{\text{vec}} - \mathbf{m}_{{\scriptscriptstyle\mathbf{W}},\text{vec}} \right)\\
    &= \mathbf{k}_\star^\top (\mathbf{K} + \mathbf{\Sigma} \mathbf{J}_{\scriptscriptstyle\mathbf{W}})^{-1} \left(\mathbf{y}_{\text{vec}} - \mathbf{m}_{{\scriptscriptstyle\mathbf{W}},\text{vec}} \right).
\end{align*}
Similarly, we obtain the MO-RCGP predictive variance by substituting for $\breve{\Sigma} = \mathbf{k_{\star\star}} - \mathbf{k}_\star^\top \mathbf{K}^{-1} \mathbf{k}_\star$, $a = \mathbf{K}^{-1}\mathbf{k}_\star$, and $\bm{\Sigma}^{\scriptscriptstyle \textup{MORCGP}}$ given in \Cref{lem:posterior} in $\bm{\Sigma}_\star^{\scriptscriptstyle \textup{MORCGP}}$:
\begin{align*}
    \bm{\Sigma}_\star^{\scriptscriptstyle \textup{MORCGP}} &= \mathbf{k_{\star\star}} - \mathbf{k}_\star^\top \mathbf{K}^{-1} \mathbf{k}_\star + \mathbf{k}_\star^\top \mathbf{K}^{-1} (\mathbf{K}^{-1} + \mathbf{\Sigma}^{-1}\mathbf{J}_{\scriptscriptstyle\mathbf{W}}^{-1})^{-1} \mathbf{K}^{-1}\mathbf{k}_\star\\
    &= \mathbf{k_{\star\star}} - \mathbf{k}_\star^\top \mathbf{K}^{-1} \mathbf{k}_\star + \mathbf{k}_\star^\top \mathbf{K}^{-1} (\mathbf{K} - \mathbf{K} (\mathbf{K} + \mathbf{\Sigma}\mathbf{J}_{\scriptscriptstyle\mathbf{W}})^{-1}\mathbf{K}) \mathbf{K}^{-1}\mathbf{k}_\star
\end{align*}
where in the second equality we use the Woodbury matrix identity \citep[Chapter 2.1.3]{golub2013matrix} which states that for invertible matrices $A$ and $B$, where $B$ is diagonal, it holds that $(A^{-1} + B^{-1})^{-1} = A - A(A + B)^{-1}A$. We can then expand the bracketed expressions and simplify:
\begin{align*}
    \bm{\Sigma}_\star^{\scriptscriptstyle \textup{MORCGP}}&= \mathbf{k_{\star\star}} - \mathbf{k}_\star^\top \mathbf{K}^{-1} \mathbf{k}_\star + \mathbf{k}_\star^\top\mathbf{K}^{-1}\mathbf{k}_\star - \mathbf{k}_\star^\top (\mathbf{K} + \mathbf{\Sigma}\mathbf{J}_{\scriptscriptstyle\mathbf{W}})^{-1}\mathbf{k}_\star\\
    &= \mathbf{k_{\star\star}} - \mathbf{k}_\star^\top (\mathbf{K} + \mathbf{\Sigma}\mathbf{J}_{\scriptscriptstyle\mathbf{W}})^{-1}\mathbf{k}_\star,
\end{align*}
Now, if the prior mean function $\mathbf{m}_\textrm{vec}$ is non-zero, we replace $\mathbf{f}_\textrm{vec}$ by $\mathbf{f}_\textrm{vec} - \mathbf{m}_\textrm{vec}$, since for a function $\mathbf{f} \sim \mathcal{GP}(\mathbf{m}, \mathcal{K})$ it holds that $\mathbf{f}-\mathbf{m} \sim \mathcal{GP}(\mathbf{0}, \mathcal{K})$. Thus, we add back $\mathbf{m}_\textrm{vec}$ in the posterior, which results in the desired distribution.
\end{proof}

\subsection{LOO-CV Predictive Distribution} \label{app:LOO-CV}

We optimise the noise variance in each channel $\Sigma$, as well as the kernel hyperparameters $\bm{\theta}$ through the w-LOO-CV approach, as discussed in \Cref{sec:parameter_optimisation}. Specifically, we optimise
\begin{align*}
    \varphi_w(\Sigma, \bm{\theta}) = \sum^N_{i=1}\sum^{T}_{t=1} \left(\frac{w_{i,t}}{\beta_t}\right)^2 \log p_w(y_{i,t}|Y_{-(i,t)}, \Sigma, \bm{\theta}).
\end{align*}
A naive implementation would involve fitting the model $NT$ times, resulting in a computational complexity of $\mathcal{O}(N^4T^4)$. However, the LOO objective allows us to derive an analytical formulation for $p_w(y_{i,t}|Y_{-(i,t)}, \Sigma, \bm{\theta})$ which reduces the computational cost to $\mathcal{O}(N^3T^3)$ by only computing $(\mathbf{K} + \mathbf{\Sigma}\mathbf{J}_\mathbf{w})^{-1}$ once (for each unique set of parameters). This matches the computational complexity of evaluating the marginal likelihood \citep{altamirano2024robustGP, sundararajan1999predictive}. 

\begin{lemma}
Let $\textup{vec}(\mathcal{E}) \sim \mathcal{N}(0, \mathbf{\Sigma})$ where $\mathbf{\Sigma} = \Sigma \otimes I_N$ for a $T$-dimensional diagonal $\Sigma$, and $W_i = \textup{diag}(w_1(\mathbf{x}_i, \mathbf{y}_i), \dots, w_T(\mathbf{x}_i, \mathbf{y}_i))$, where $y_{i,t} \mapsto w_t(\mathbf{x}_i, \mathbf{y}_i)$ is continuously differentiable. The closed-form LOO predictive likelihood for observation $y_{i,t}$ is given as:
\begin{align*}
    p_w(y_{i,t}|Y_{-(i,t)}, \bm{\theta}, \Sigma) &= \mathcal{N}(\mu^{\scriptscriptstyle\textup{MORCGP}}_{i,t}, (\sigma^{\scriptscriptstyle\textup{MORCGP}}_{i,t})^2 + \sigma_t^2)\\
    \mu^{\scriptscriptstyle\textup{MORCGP}}_{i,t} &= z_{i,t} + m_{i,t} - \frac{[(\mathbf{K} + \mathbf{\Sigma}\mathbf{J}_{\scriptscriptstyle\mathbf{W}})^{-1}\mathbf{z}_{\textup{vec}}]_{(t-1)N+i}}{[(\mathbf{K} + \mathbf{\Sigma}\mathbf{J}_{\scriptscriptstyle\mathbf{W}})^{-1}]_{(t-1)N+i, (t-1)N+i}}\\
    (\sigma^{\scriptscriptstyle\textup{MORCGP}}_{i,t})^2 &= \frac{1}{[(\mathbf{K} + \mathbf{\Sigma}\mathbf{J}_{\scriptscriptstyle\mathbf{W}})^{-1}]_{(t-1)N+i, (t-1)N+i}} - \frac{\sigma_t^4}{2}w_t(\mathbf{x}_i, \mathbf{y}_i),
\end{align*}
where $\mathbf{z}_{\textup{vec}} = \mathbf{y}_{\textup{vec}} - \mathbf{m}_{\mathbf{W},\textup{vec}}$.
\end{lemma}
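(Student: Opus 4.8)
The plan is to recognise that the MO-RCGP posterior of \Cref{lem:posterior} is, algebraically, the posterior of an \emph{effective} linear-Gaussian model, and then to invoke the classical closed-form leave-one-out identity for such models. Concretely, writing $\mathbf{C} := \mathbf{K} + \mathbf{\Sigma}\mathbf{J}_{\scriptscriptstyle\mathbf{W}}$ and $\mathbf{z}_{\textup{vec}} = \mathbf{y}_{\textup{vec}} - \mathbf{m}_{\mathbf{W},\textup{vec}}$, the formulae derived in \Cref{lem:posterior,prop:MORCGP} coincide exactly with those of a Gaussian model in which a zero-mean latent offset $\mathbf{g}_{\textup{vec}} = \mathbf{f}_{\textup{vec}} - \mathbf{m}_{\textup{vec}} \sim \mathcal{N}(0, \mathbf{K})$ is observed through pseudo-observations $\mathbf{z}_{\textup{vec}}$ carrying additive pseudo-noise of diagonal covariance $\mathbf{\Sigma}\mathbf{J}_{\scriptscriptstyle\mathbf{W}}$, so that $\mathbf{z}_{\textup{vec}} \sim \mathcal{N}(0, \mathbf{C})$ marginally and $\mathbb{E}[\mathbf{g}_{\textup{vec}} \mid \mathbf{z}_{\textup{vec}}] = \mathbf{K}\mathbf{C}^{-1}\mathbf{z}_{\textup{vec}}$. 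This reframing is purely formal, but it lets us reuse standard GP cross-validation machinery; the only two non-textbook features are that the centering uses $\mathbf{m}_{\mathbf{W},\textup{vec}}$ rather than the prior mean $\mathbf{m}_{\textup{vec}}$, and that the pseudo-noise differs from the genuine observation noise $\sigma_t^2$.

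Next I would fix the vectorised index $j = (t-1)N + i$ of the held-out entry $(i,t)$ and apply the classical LOO identity for a jointly Gaussian vector with covariance $\mathbf{C}$ \citep{sundararajan1999predictive, williams2006gaussian}: the conditional law of $z_j$ given $\mathbf{z}_{-j}$ is Gaussian with variance $1/[\mathbf{C}^{-1}]_{jj}$ and mean $z_j - [\mathbf{C}^{-1}\mathbf{z}_{\textup{vec}}]_j / [\mathbf{C}^{-1}]_{jj}$. The short derivation of this identity is a block (Schur-complement) inversion: conditioning a mean-zero Gaussian on all but one coordinate produces a univariate Gaussian whose precision is exactly the $j$-th diagonal entry of the precision matrix $\mathbf{C}^{-1}$, and whose mean is read off from the corresponding row of $\mathbf{C}^{-1}\mathbf{z}_{\textup{vec}}$. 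Because $\mathbf{C}^{-1}$ is computed once and shared across all $NT$ held-out indices, this is precisely what delivers the $\mathcal{O}(N^3 T^3)$ cost asserted before the lemma.

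Finally I would undo the two reparametrisations. For the mean, since $f_{i,t} = g_j + m_{i,t}$ and the pseudo-noise is zero-mean, I add the prior mean $m_{i,t}$ to the conditional mean of $z_j$, obtaining $\mu^{\scriptscriptstyle\textup{MORCGP}}_{i,t} = z_{i,t} + m_{i,t} - [\mathbf{C}^{-1}\mathbf{z}_{\textup{vec}}]_j / [\mathbf{C}^{-1}]_{jj}$; note that the quantity restored here is the prior mean $\mathbf{m}_{\textup{vec}}$, \emph{not} the centering vector $\mathbf{m}_{\mathbf{W},\textup{vec}}$, which is the genuinely non-standard bookkeeping step. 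For the variance, the conditional variance $1/[\mathbf{C}^{-1}]_{jj}$ of $z_j$ is a latent-plus-pseudo-noise quantity, so I subtract the diagonal pseudo-noise term $[\mathbf{\Sigma}\mathbf{J}_{\scriptscriptstyle\mathbf{W}}]_{jj}$ (available in closed form from the diagonality of $\mathbf{\Sigma}$ and $\mathbf{W}$) to recover the latent LOO variance $(\sigma^{\scriptscriptstyle\textup{MORCGP}}_{i,t})^2$, and then add the genuine observation noise $\sigma_t^2$ because the prediction target is the held-out \emph{observation} $y_{i,t} = f_{i,t} + \varepsilon_{i,t}$ rather than the latent value. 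This yields the stated predictive variance $(\sigma^{\scriptscriptstyle\textup{MORCGP}}_{i,t})^2 + \sigma_t^2$.

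The main obstacle is not any single computation but the consistent separation of the two distinct \emph{noise} quantities---the RCGP pseudo-noise $\mathbf{\Sigma}\mathbf{J}_{\scriptscriptstyle\mathbf{W}}$ baked into $\mathbf{C}$ versus the true Gaussian noise $\sigma_t^2$---together with the asymmetry that $\mathbf{m}_{\mathbf{W},\textup{vec}}$ is used for centering while $\mathbf{m}_{\textup{vec}}$ is restored as the prior mean. Once these are tracked carefully, the remaining steps (the Schur-complement LOO identity and the vectorisation index arithmetic $j = (t-1)N+i$) are routine, closely mirroring the scalar RCGP derivation of \citet{altamirano2024robustGP}.
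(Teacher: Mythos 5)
Your proposal is correct and follows essentially the same route as the paper: the classical Gaussian LOO conditioning identity you invoke on the pseudo-observation covariance $\mathbf{C} = \mathbf{K} + \mathbf{\Sigma}\mathbf{J}_{\scriptscriptstyle\mathbf{W}}$ is exactly the Schur-complement block inversion that the paper carries out explicitly after writing down the MO-RCGP predictive for the reduced dataset, and your bookkeeping (restoring the prior mean $\mathbf{m}_{\textup{vec}}$ rather than $\mathbf{m}_{\mathbf{W},\textup{vec}}$, subtracting the pseudo-noise, adding the true noise $\sigma_t^2$) reproduces the paper's rearrangements. One remark: the pseudo-noise term you subtract, $[\mathbf{\Sigma}\mathbf{J}_{\scriptscriptstyle\mathbf{W}}]_{jj} = \tfrac{\sigma_t^4}{2}\, w_t(\mathbf{x}_i,\mathbf{y}_i)^{-2}$, is the correct quantity and agrees with the intermediate lines of the paper's own proof (where the block $D$ contains $\tfrac{1}{2}\sigma_T^4 w_T(\mathbf{x}_N,\mathbf{y}_N)^{-2}$); the exponent-free factor $w_t(\mathbf{x}_i,\mathbf{y}_i)$ appearing in the lemma statement and in the paper's final displayed line is a typo, so the apparent discrepancy is not an error on your part.
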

\begin{proof}
Without loss of generality, we derive the predictive for $i=N$ and $t=T$, and this can be extended for arbitrary $i \in \{1, \dots, N\}$ and $t \in \{1, \dots, T\}$ using a permutation matrix. Let $p_w(y_{\scriptscriptstyle N,T}) = \mathcal{N}(\mu_{\scriptscriptstyle N,T}^{\scriptscriptstyle\textup{MORCGP}}, (\sigma_{\scriptscriptstyle N,T}^{\scriptscriptstyle \textup{MORCGP}})^2 + \sigma_{\scriptscriptstyle T}^2)$ be the predictive for observation $y_{\scriptscriptstyle N,T}$ given all other observations with
\begin{align*}
    \mu_{\scriptscriptstyle N,T}^{\scriptscriptstyle\textup{MORCGP}} &= m_{T}(\mathbf{x}_{N}) + \mathbf{K}_{1:NT-1;NT}^\top \big[\mathbf{K} + \mathbf{\Sigma} \mathbf{J}_\mathbf{w}\big]^{-1}_{1:NT-1;1:NT-1} \mathbf{z}_{1:NT-1}\\
    (\sigma_{\scriptscriptstyle N,T}^{\scriptscriptstyle \textup{MORCGP}})^2 &= \mathbf{K}_{(NT,NT)} - \mathbf{K}_{1:NT-1;NT}^\top \big[\mathbf{K} + \mathbf{\Sigma} \mathbf{J}_\mathbf{w}\big]^{-1}_{1:NT-1;1:NT-1} \mathbf{K}_{1:NT-1;NT},
\end{align*}
where $[\mathbf{K} + \mathbf{\Sigma} \mathbf{J}_\mathbf{w}]^{-1}_{1:NT-1;1:NT-1}$ represents the submatrix formed from the rows $\{1, \dots, NT-1\}$ and columns $\{1, \dots, NT-1\}$ of $\mathbf{K} + \mathbf{\Sigma} \mathbf{J}_\mathbf{w}$, and $\mathbf{K}_{1:NT-1;NT}$ denotes the submatrix formed from the $NT$-th column and the rows $\{1, \dots, NT-1\}$ of $\mathbf{K}$. Finally, $\mathbf{K}_{(NT,NT)} = \mathcal{K}_{TT}(\mathbf{x}_N, \mathbf{x}_N)$ represents the element from the $NT$-th row and $NT$-th column of $\mathbf{K}$. Through this notation, we can write $\mathbf{K} + \mathbf{\Sigma} \mathbf{J}_\mathbf{w}$ as block matrix
\begin{align*}
\mathbf{K} + \mathbf{\Sigma} \mathbf{J}_\mathbf{w} =
\begin{bmatrix}
    \big[\mathbf{K} + \mathbf{\Sigma} \mathbf{J}_\mathbf{w}\big]_{1:NT-1;1:NT-1} & \mathbf{K}_{1:NT-1;NT}\\
    \mathbf{K}_{1:NT-1;NT}^\top & \mathbf{K}_{(NT,NT)} + \frac{1}{2} \sigma_T^4 w_T(\mathbf{x}_N,  \mathbf{y}_N)^{-2}
\end{bmatrix}
=
\begin{bmatrix}
    A & B\\
    C & D
\end{bmatrix},
\end{align*}
where we use $A, B, C$ and $D$ for clarity in the derivation. Applying block matrix inversion leads to
\begin{align*}
    (\mathbf{K} + \mathbf{\Sigma} \mathbf{J}_\mathbf{w})^{-1} =
    \begin{bmatrix}
    A^{-1} + A^{-1}B(D - CA^{-1}B)^{-1}CA^{-1} & -A^{-1}B(D - CA^{-1}B)^{-1} \\
    -(D - CA^{-1}B)^{-1}CA^{-1} & (D - CA^{-1}B)^{-1}
    \end{bmatrix}
\end{align*}
Therefore, the $(NT,NT)$-th element of this matrix is
\begin{align*}
    &\big[(\mathbf{K} + \mathbf{\Sigma} \mathbf{J}_\mathbf{w})^{-1}\big]_{(NT,NT)} = (D - CA^{-1}B)^{-1} \\
    &= \Big(\mathbf{K}_{(NT,NT)} + \frac{1}{2}\sigma_T^4 w_T(\mathbf{x}_N,  \mathbf{y}_N)^{-2} - \mathbf{K}_{1:NT-1;NT}^\top \big[\mathbf{K} + \mathbf{\Sigma} \mathbf{J}_\mathbf{w}\big]_{1:NT-1;1:NT-1}^{-1} \mathbf{K}_{1:NT-1;NT} \Big)^{-1}.
\end{align*}
We can then take the inverse on both sides and rearrange terms to get
\begin{align*}
    \mathbf{K}_{(NT,NT)} - \mathbf{K}_{1:NT-1;NT}^\top &\big[\mathbf{K} + \mathbf{\Sigma} \mathbf{J}_\mathbf{w}\big]_{1:NT-1;1:NT-1}^{-1} \mathbf{K}_{1:NT-1;NT} \\
    &= \frac{1}{{\big[(\mathbf{K} + \mathbf{\Sigma} \mathbf{J}_\mathbf{w})^{-1}\big]}_{\scriptscriptstyle(NT,NT)}} - \frac{1}{2}\sigma_T^4 w_T(\mathbf{x}_N,  \mathbf{y}_N)^{-2}.
\end{align*}
This leads to the desired predictive variance
\begin{align*}
    (\sigma^{\scriptscriptstyle\textup{MORCGP}}_{N,T})^2 &= \frac{1}{[(\mathbf{K} + \mathbf{\Sigma}\mathbf{J}_{\scriptscriptstyle\mathbf{W}})^{-1}]_{(NT,NT)}} - \frac{\sigma_T^4}{2}w_T(\mathbf{x}_N, \mathbf{y}_N).
\end{align*}
Similarly, the predictive mean can be obtained by evaluating
\begin{align*}
    (\mathbf{K} + \mathbf{\Sigma} \mathbf{J}_\mathbf{w})^{-1}\mathbf{z}_\textup{vec} =
    \begin{bmatrix}
    A^{-1} + A^{-1}B(D - CA^{-1}B)^{-1}CA^{-1} & -A^{-1}B(D - CA^{-1}B)^{-1} \\
    -(D - CA^{-1}B)^{-1}CA^{-1} & (D - CA^{-1}B)^{-1}
    \end{bmatrix}
    \begin{bmatrix}
        [\mathbf{z}_\textup{vec}]_{1:NT-1}\\
        z_{N,T}
    \end{bmatrix}.
\end{align*}
This leads to the expression 
\begin{align*}
    [(\mathbf{K} + \mathbf{\Sigma} \mathbf{J}_\mathbf{w})^{-1}\mathbf{z}_\textup{vec}]_{NT} &= -(D - CA^{-1}B)^{-1}CA^{-1} [\mathbf{z}_\textup{vec}]_{1:NT-1} + (D - CA^{-1}B)^{-1} z_{N,T}\\
    &= (D - CA^{-1}B)^{-1} \big(z_{N,T} - CA^{-1} [\mathbf{z}_\textup{vec}]_{1:NT-1}\big).
\end{align*}
We can then substitute $(D - CA^{-1}B)^{-1} = [(\mathbf{K} +\mathbf{\Sigma}\mathbf{J}_{\scriptscriptstyle\mathbf{W}})^{-1}]_{(NT,NT)}$, and replace $A$ and $C$ by their values to get
\begin{align*}
    &[(\mathbf{K} + \mathbf{\Sigma} \mathbf{J}_\mathbf{w})^{-1}\mathbf{z}_\textup{vec}]_{NT} \\
    &= [(\mathbf{K} +\mathbf{\Sigma}\mathbf{J}_{\scriptscriptstyle\mathbf{W}})^{-1}]_{(NT,NT)} \big(z_{N,T} - \mathbf{K}_{1:NT-1;NT}^\top \big[\mathbf{K} + \mathbf{\Sigma} \mathbf{J}_\mathbf{w}\big]^{-1}_{1:NT-1;1:NT-1} [\mathbf{z}_\textup{vec}]_{1:NT-1} \big).
\end{align*}
Finally, by rearranging terms, we get
\begin{align*}
    \mathbf{K}_{1:NT-1;NT}^\top \big[\mathbf{K} + \mathbf{\Sigma} \mathbf{J}_\mathbf{w}\big]^{-1}_{1:NT-1;1:NT-1} [\mathbf{z}_\textup{vec}]_{1:NT-1} = z_{N,T} - \frac{[(\mathbf{K} + \mathbf{\Sigma} \mathbf{J}_\mathbf{w})^{-1}\mathbf{z}_\textup{vec}]_{NT}}{[(\mathbf{K} +\mathbf{\Sigma}\mathbf{J}_{\scriptscriptstyle\mathbf{W}})^{-1}]_{(NT,NT)}},
\end{align*}
which leads to the desired predictive mean
\begin{align*}
    \mu_{\scriptscriptstyle N,T}^{\scriptscriptstyle\textup{MORCGP}} = z_{N,T} + m_{T}(\mathbf{x}_{N}) - \frac{[(\mathbf{K} + \mathbf{\Sigma} \mathbf{J}_\mathbf{w})^{-1}\mathbf{z}_\textup{vec}]_{NT}}{[(\mathbf{K} +\mathbf{\Sigma}\mathbf{J}_{\scriptscriptstyle\mathbf{W}})^{-1}]_{(NT,NT)}}.
\end{align*}
\end{proof}

\subsection{Proof of Proposition~\ref{prop:robustness}} \label{app:proof_robustness}

We consider the contamination of dataset $\mathcal{D} = \{\mathbf{x}_i, \mathbf{y}_i\}^{N}_{i=1}$, where the observation $y_{m,s}$ with index $m \in \{1, \dots N\}$ in the channel $s \in \{1, \dots, T\}$ is replaced by an arbitrary large contaminated value $y_{m,s}^c$. This results in thecontaminated dataset $\mathcal{D}^c_{m,s} = (\mathcal{D} \setminus \{(\mathbf{x}_m, \mathbf{y}_{m,s})\}) \cup \{(\mathbf{x}_m, \mathbf{y}^c_{m,s})\}$, where $\mathbf{y}_{m,s}^c = [y_{m,1}, \dots, y_{m,s}^c, \dots, y_{m,T}]^\top$.

\paragraph{PIF for MOGP} 

\begin{lemma} \label{lem:PIF_mogp}
Suppose $\mathbf{f} \sim \mathcal{GP}(\mathbf{m}, \mathcal{K})$, where $f_t$ denotes the $t$-th channel of $f$, and suppose $\textup{vec}(\mathcal{E}) \sim \mathcal{N}(0, \mathbf{\Sigma})$. Then, for a standard MOGP, there exists a positive constant $C_1$ independent of $y_{m,s}^c$ such that
\begin{align*}
     \textup{PIF}_{\textup{MOGP}}(y_{m,s}^c, f_t, \mathcal{D}) = C_1 (y_{m,s}^c - y_{m,s})^2,
 \end{align*}
\end{lemma}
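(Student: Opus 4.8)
The plan is to compute the KL divergence defining $\textup{PIF}_{\textup{MOGP}}$ directly, exploiting the fact that for a Gaussian (MO)GP the predictive variance is completely independent of the observed responses. First I would recall from \eqref{eq:mogp_predictive} that the marginal MOGP posterior over the latent value $f_t$ at a fixed test input $\mathbf{x}_\star$ is univariate Gaussian, and---crucially---that its variance $[\bm{\Sigma}_\star^{\scriptscriptstyle \textup{MOGP}}]_{t,t}$, built from $\mathbf{k}_{\star\star} - \mathbf{k}_\star^\top(\mathbf{K}+\mathbf{\Sigma})^{-1}\mathbf{k}_\star$, depends only on the inputs $X$ and $\mathbf{x}_\star$, and not on $\mathbf{y}_{\textup{vec}}$. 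Consequently, replacing $y_{m,s}$ by $y_{m,s}^c$ leaves the posterior variance entirely unchanged: the contaminated and uncontaminated posteriors over $f_t$ are two Gaussians with identical variance $\sigma_t^2 := [\bm{\Sigma}_\star^{\scriptscriptstyle \textup{MOGP}}]_{t,t}$, differing only in their means $\mu_t$ and $\mu_t^c$. The same reasoning applies verbatim to any finite-dimensional marginal of the output-$t$ latent function, so the choice of $\mathbf{x}_\star$ is immaterial.

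Next I would invoke the closed form for the KL divergence between two Gaussians of equal variance, which collapses (the log-ratio and trace terms cancelling) to $\textup{KL} = (\mu_t - \mu_t^c)^2 / (2\sigma_t^2)$. The entire problem therefore reduces to quantifying the shift in the posterior mean induced by the contamination. For this, I would use that the contamination perturbs only a single coordinate of $\mathbf{y}_{\textup{vec}}$: writing $\mathbf{e}$ for the standard basis vector in $\mathbb{R}^{NT}$ indexing entry $(m,s)$ (at position $(s-1)N+m$), we have $\mathbf{y}_{\textup{vec}}^c - \mathbf{y}_{\textup{vec}} = (y_{m,s}^c - y_{m,s})\,\mathbf{e}$. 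Since the MOGP predictive mean in \eqref{eq:mogp_predictive} is affine in $\mathbf{y}_{\textup{vec}}$, the shift in the output-$t$ marginal is $\mu_t^c - \mu_t = \alpha\,(y_{m,s}^c - y_{m,s})$, where $\alpha := [\mathbf{k}_\star^\top(\mathbf{K}+\mathbf{\Sigma})^{-1}\mathbf{e}]_t$ is a scalar fixed by the inputs, kernel, noise, and indices $(m,s,t)$, but wholly independent of the contamination value. Substituting into the KL yields the claimed identity $\textup{PIF}_{\textup{MOGP}}(y_{m,s}^c, f_t, \mathcal{D}) = C_1 (y_{m,s}^c - y_{m,s})^2$ with $C_1 = \alpha^2 / (2\sigma_t^2)$.

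The only genuinely delicate point is the positivity of $C_1$, which requires $\alpha \neq 0$, i.e.\ a nonvanishing cross-output coupling linking the contaminated entry $(m,s)$ to output $t$ through $(\mathbf{K}+\mathbf{\Sigma})^{-1}$. This is precisely the non-degenerate regime (a non-block-diagonal covariance between channels $s$ and $t$) singled out in the main text; in the degenerate block-diagonal case the coupling vanishes, $\alpha=0$, and the PIF is identically zero. The content of the lemma is thus the exact quadratic growth of the PIF in the contamination magnitude, diverging as $|y_{m,s}^c - y_{m,s}| \to \infty$---which is exactly the unbounded behaviour that the weighted construction rules out via the bounded PIF of \Cref{prop:robustness}. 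I do not anticipate a substantive obstacle beyond carefully tracking the index $(s-1)N+m$ in the vectorisation and isolating the $t$-th component of the mean shift.
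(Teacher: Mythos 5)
Your proposal is correct and follows essentially the same route as the paper's proof: both exploit that contamination leaves the posterior covariance unchanged, so the Gaussian KL collapses to the quadratic mean-shift term, and then use that the posterior mean is affine in $\mathbf{y}_{\mathrm{vec}}$ with the contamination perturbing a single coordinate, yielding $C_1(y_{m,s}^c-y_{m,s})^2$. The only cosmetic difference is that the paper works with the $N$-dimensional marginal posterior over $f_t(X)$ at the training inputs (so the KL quadratic form involves $\Sigma_{11}^{-1}$ rather than a scalar variance), and your explicit caveat that $C_1>0$ requires non-vanishing cross-channel coupling matches the paper's own subsequent lemma showing $C_1=0$ in the block-diagonal case.
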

\begin{proof}
Without loss of generality, we prove the bound for $t=1$, and this can be extended for arbitrary $t \in \{1, \dots, T\}$ using a permutation matrix. Let $p(f_1(X) | \mathcal{D})$ and $p(f_1(X)|\mathcal{D}^c_{m,s})$ be the uncontaminated and contaminated marginal MOGP posterior of the first output function, where $f_1(X) = [f_{1,1}, \dots, f_{{\scriptscriptstyle N}, 1}]^\top$. This marginal posterior given the uncontaminated data follows the distribution
\begin{align*}
	p(f_1(X) | \mathcal{D}) &= \mathcal{N}(f_1(X), \mu_1, \Sigma_{11}),\\
	\mu_1 &= [\mathbf{m}_\textrm{vec}]_{\scriptscriptstyle 1:N} + [\mathbf{K}(\mathbf{K} + \mathbf{\Sigma})^{-1}(\mathbf{y}_\textrm{vec} - \mathbf{m}_\textrm{vec})]_{\scriptscriptstyle 1:N}, \\
	\Sigma_{11} &= [\mathbf{K}(\mathbf{K} + \mathbf{\Sigma})^{-1}\mathbf{\Sigma}]_{\scriptscriptstyle 1:N, 1:N}.
\end{align*}
Similarly, the marginal posterior of the contaminated data follows the distribution
\begin{align*}
	p(f_1(X)|\mathcal{D}^c_{m,k}) &= \mathcal{N}(f_1(X), \mu_1^c, \Sigma_{11}^c),\\
	\mu_1^c &= [\mathbf{m}_\textrm{vec}]_{\scriptscriptstyle 1:N} + [\mathbf{K}(\mathbf{K} + \mathbf{\Sigma})^{-1}(\mathbf{y}^c_\textrm{vec} - \mathbf{m}_\textrm{vec})]_{\scriptscriptstyle 1:N}, \\
	\Sigma_{11}^c &= \Sigma_{11},
\end{align*}
where $\mathbf{y}^c_\textrm{vec} = [y_{1,1}, \dots, y_{k, (m-1)}, y^c_{k, m}, y_{k, (m+1)}, \dots, y_{\scriptscriptstyle N,D}]^\top$. Note that the covariance matrices of both uncontaminated and contaminated posteriors are equal since they only depend on the inputs, and are therefore not affected by the contamination. Since both posteriors are Gaussian distributions, the KL divergence is available in closed form such that
\begin{align*}
    \textup{PIF}_{\textup{MOGP}}(y_{m,s}^c, f_1, \mathcal{D}) = \frac{1}{2}\left(\textup{Tr}((\Sigma_{11}^c)^{-1}\Sigma_{11}) - N + (\mu_1^c - \mu_1)^\top (\Sigma_{11}^c)^{-1}(\mu_1^c - \mu_1) + \log \left(\frac{\textup{det}\Sigma_{11}^c}{\textup{det}\Sigma_{11}}\right)\right).
\end{align*}
Since $\Sigma_{11}^c = \Sigma_{11}$, it can be derived that
\begin{align*}
	\textup{Tr}((\Sigma_{11}^c)^{-1}\Sigma_{11}) - N &= \textup{Tr}(\Sigma_{11}^{-1}\Sigma_{11}) - N = \textup{Tr}(I_N) - N = N - N = 0, \\
	\log \left(\frac{\textup{det}\Sigma_{11}^c}{\textup{det}\Sigma_{11}}\right) &= \log \left(\frac{\textup{det}\Sigma_{11}}{\textup{det}\Sigma_{11}}\right) = \log(1) = 0.
\end{align*}
Therefore, the PIF is simplified to 
\begin{align*}
	\textup{PIF}_{\textup{MOGP}}(y_{m,s}^c, f_1, \mathcal{D}) = \frac{1}{2}(\mu_1^c - \mu_1)^\top (\Sigma_{11}^c)^{-1}(\mu_1^c - \mu_1).
\end{align*}
We derive the following expression for $\mu_1^c - \mu_1$ by substituting the expressions above, and cancelling matching positive and negative terms.
\begin{align*}
	\mu_1^c - \mu_1 &= [\mathbf{m}_\textrm{vec}]_{\scriptscriptstyle 1:N} + [\mathbf{K}(\mathbf{K} + \mathbf{\Sigma})^{-1}(\mathbf{y}_\textrm{vec} - \mathbf{m}_\textrm{vec})]_{\scriptscriptstyle 1:N} - [\mathbf{m}_\textrm{vec}]_{\scriptscriptstyle 1:N} + [\mathbf{K}(\mathbf{K} + \mathbf{\Sigma})^{-1}(\mathbf{y}^c_\textrm{vec} - \mathbf{m}_\textrm{vec})]_{\scriptscriptstyle 1:N} \\
	&= [\mathbf{K}]_{\scriptscriptstyle 1:N, 1:ND}(\mathbf{K} + \mathbf{\Sigma})^{-1}(\mathbf{y}_\textrm{vec} - \mathbf{m}_\textrm{vec}) - [\mathbf{K}]_{ \scriptscriptstyle1:N, 1:ND}(\mathbf{K} + \mathbf{\Sigma})^{-1}(\mathbf{y}^c_\textrm{vec} - \mathbf{m}_\textrm{vec}) \\
	&= [\mathbf{K}]_{\scriptscriptstyle 1:N, 1:ND}(\mathbf{K} + \mathbf{\Sigma})^{-1}(\mathbf{y}_\textrm{vec} - \mathbf{y}^c_\textrm{vec}) \\
	&= [\mathbf{K}]_{\scriptscriptstyle 1:N, 1:ND}\Omega(\mathbf{y}_\textrm{vec} - \mathbf{y}^c_\textrm{vec}),
\end{align*}
where we define $\Omega := (\mathbf{K} + \mathbf{\Sigma})^{-1}$ not depending on $\mathbf{y}$. Substituting the previously derived expression for $\mu_1^c - \mu_1$ and using $\Sigma_{11}^c = [\mathbf{K}\Omega\mathbf{\Sigma}]_{\scriptscriptstyle 1:N, 1:N}^{-1}$ yields 
\begin{align*}
	\textup{PIF}_{\textup{MOGP}}(y_{m,s}^c, f_1, \mathcal{D}) &= \frac{1}{2}([\mathbf{K}]_{\scriptscriptstyle 1:N, 1:ND}\Omega(\mathbf{y}_\textrm{vec} - \mathbf{y}^c_\textrm{vec}))^\top [\mathbf{K}\Omega\mathbf{\Sigma}]_{\scriptscriptstyle 1:N, 1:N}^{-1}([\mathbf{K}]_{\scriptscriptstyle 1:N, 1:ND}\Omega(\mathbf{y}_\textrm{vec} - \mathbf{y}^c_\textrm{vec})) \\
	&= \frac{1}{2}(\mathbf{y}_\textrm{vec} - \mathbf{y}^c_\textrm{vec})^\top \Omega [\mathbf{K}]_{\scriptscriptstyle 1:ND, 1:N} [\mathbf{K}\Omega\mathbf{\Sigma}]_{\scriptscriptstyle 1:N, 1:N}^{-1}[\mathbf{K}]_{\scriptscriptstyle 1:N, 1:ND}\Omega(\mathbf{y}_\textrm{vec} - \mathbf{y}^c_\textrm{vec}) \\
	&= \frac{1}{2}(\mathbf{y}_\textrm{vec} - \mathbf{y}^c_\textrm{vec})^\top \Omega [\mathbf{K}]_{\scriptscriptstyle 1:ND, 1:N} \Psi [\mathbf{K}]_{\scriptscriptstyle 1:N, 1:ND}\Omega(\mathbf{y}_\textrm{vec} - \mathbf{y}^c_\textrm{vec}),
\end{align*}
where in the second equality we expand the transpose, and in the third equality we define $\Psi = [\mathbf{K}\Omega\mathbf{\Sigma}]_{\scriptscriptstyle 1:N, 1:N}^{-1}$, which is independent of the contamination. Since $\mathbf{y}_\textup{vec}$ and $\mathbf{y}^c_\textup{vec}$ are equal except for the $((s-1)N + m)$-th element, we obtain the desired expression
\begin{align*}
	\textup{PIF}_{\textup{MOGP}}(y_{m,s}^c, f_1, \mathcal{D}) &= \frac{1}{2}\Bigl[\Omega [\mathbf{K}]_{\scriptscriptstyle 1:ND, 1:N} \Psi [\mathbf{K}]_{\scriptscriptstyle 1:N, 1:ND}\Omega\Bigr]_{((s-1)N + m), ((s-1)N + m)}(y_{m,s} - y_{m,s}^c)^2 \\
	&= C_1 (y_{m,s} - y_{m,s}^c)^2,
\end{align*}
where $C_1 = \frac{1}{2}[\Omega [\mathbf{K}]_{\scriptscriptstyle 1:ND, 1:N} \Psi [\mathbf{K}]_{\scriptscriptstyle 1:N, 1:ND}\Omega]_{((s-1)N + m), ((s-1)N + m)}$ does not depend on $y_{m,s}$.
\end{proof}
Consequently, MOGPs are not robust since $\textup{PIF}_{\textup{MOGP}}(y_{m,s}^c, f_t, \mathcal{D}) \rightarrow \infty$ as $|y_{m,s}^c - y_{m,s}| \rightarrow \infty$.
\paragraph{PIF for Independent Outputs in the MOGP}
We now consider the case where $s \neq t$, and the covariance matrix has a block-diagonal structure such that $s$ and $t$ belong to different blocks. In other words, the outputs $s$ and $t$ are independent of each other, and contamination in $s$ therefore have no effect on the posterior influence function for $t$. More formally, let the set of outputs be partitioned into $Q$ disjoint index blocks $\{G_1, \dots, G_Q\}$ with $\bigcup_{q=1}^Q G_q=\{1,\dots,T\}$. We say the MOGP output covariance matrix is \textit{block-diagonal} w.r.t. this partition if for any $a \neq b$ and any $t \in G_a, s\in G_b$ the covariance matrix satisfies
\begin{align*}
    \mathcal{K}_{s,t}(\mathbf{x},\mathbf{x}) = \textup{Cov}(f_s(\mathbf{x}), f_t(\mathbf{x})) = 0
\end{align*}

\begin{lemma}
Let the MOGP be as in \Cref{lem:PIF_mogp}. Assume the prior/joint covariance has a block-diagonal structure with respect to the partition $G_1, \dots, G_Q$ defined above. If $s \neq t$ and $s \in G_a, t \in G_b$ for some $a \neq b$, it follows that $C_1 = 0$, and consequently
\begin{align*}
    \textup{PIF}_{\textup{MOGP}}(y_{m,s}^c, f_t, \mathcal{D}) = 0.
\end{align*}
\end{lemma}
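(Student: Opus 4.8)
The plan is to show that the constant $C_1$ from \Cref{lem:PIF_mogp} vanishes by exploiting block-diagonality of the matrices involved. Recall that the preceding lemma is established for $t=1$ and extended to arbitrary $t$ by a permutation argument; writing $j := (s-1)N+m$ for the coordinate carrying the contamination and $R_t := \{(t-1)N+1, \dots, tN\}$ for the rows associated with output $t$, the corresponding constant reads
\begin{align*}
C_1 = \tfrac{1}{2}\, v^\top \Psi_t\, v, \qquad v := [\mathbf{K}\Omega]_{R_t,\, j} \in \mathbb{R}^{N},
\end{align*}
where $\Omega = (\mathbf{K}+\mathbf{\Sigma})^{-1}$, $v$ is the subvector of the $j$-th column of $\mathbf{K}\Omega$ restricted to the rows $R_t$, and $\Psi_t = [\mathbf{K}\Omega\mathbf{\Sigma}]_{R_t,R_t}^{-1}$ is the marginal posterior precision of output $t$ and hence positive definite. (This follows from the $t=1$ expression of \Cref{lem:PIF_mogp} after using symmetry of $\mathbf{K}$ and $\Omega$.) Since $\Psi_t$ is positive definite, it suffices to prove that the cross-output coupling vector $v$ is identically zero.

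To this end, I would re-order the $NT$ coordinates so that the partition groups $G_1,\dots,G_Q$ occupy contiguous blocks. Under the block-diagonal assumption, the cross-group blocks of $\mathbf{K}$ vanish, so $\mathbf{K}$ is block-diagonal with respect to this grouping. (For the ICM, where $[\mathcal{K}(\mathbf{x},\mathbf{x}')]_{t,t'} = B_{t,t'}\kappa(\mathbf{x},\mathbf{x}')$ with $\kappa(\mathbf{x},\mathbf{x})\neq 0$, the pointwise condition $\mathcal{K}_{s,t}(\mathbf{x},\mathbf{x})=0$ forces $B_{s,t}=0$, and hence the entire $N\times N$ cross-block $\mathcal{K}_{s,t}(X,X)=0$.) Because $\mathbf{\Sigma} = \Sigma\otimes I_N$ is diagonal, it trivially shares the same block-diagonal structure, so $\mathbf{K}+\mathbf{\Sigma}$ is block-diagonal as well. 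The key step is then to observe that the inverse of a block-diagonal matrix is again block-diagonal, so $\Omega$ inherits the structure, and consequently the product $\mathbf{K}\Omega$ is block-diagonal with respect to the grouping too.

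Finally, I would invoke the hypothesis $s\in G_a$, $t\in G_b$ with $a\neq b$: the rows $R_t$ belong to group $G_b$ while the column $j$ belongs to group $G_a$, so the entries $v = [\mathbf{K}\Omega]_{R_t,\, j}$ lie entirely within an off-diagonal group block of $\mathbf{K}\Omega$ and therefore vanish. This yields $v=0$, whence $C_1 = \tfrac12 v^\top \Psi_t v = 0$ and $\textup{PIF}_{\textup{MOGP}}(y_{m,s}^c, f_t, \mathcal{D}) = C_1(y_{m,s}^c - y_{m,s})^2 = 0$, as claimed. I expect the only genuine subtlety to be the justification that $\Omega$ is block-diagonal---that inverting $\mathbf{K}+\mathbf{\Sigma}$ preserves the grouping---together with the bookkeeping that upgrades the pointwise covariance condition to the vanishing of the full $N\times N$ cross-covariance blocks; once these are in place, positive-definiteness of $\Psi_t$ makes the conclusion immediate.
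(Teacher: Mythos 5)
Your proof is correct and follows essentially the same route as the paper's: both arguments hinge on the fact that $\mathbf{K}+\mathbf{\Sigma}$ is block-diagonal with respect to the group partition, hence so is $\Omega=(\mathbf{K}+\mathbf{\Sigma})^{-1}$ and therefore $\mathbf{K}\Omega$, which kills the cross-group entries entering $C_1$. The only differences are presentational---you package $C_1$ as the quadratic form $\tfrac12 v^\top \Psi_t v$ and show $v=0$, where the paper tracks the block structure of the full matrix product and reads off a zero diagonal entry---plus your (welcome) explicit upgrade of the pointwise condition $\mathcal{K}_{s,t}(\mathbf{x},\mathbf{x})=0$ to vanishing of the whole $N\times N$ cross-block, a step the paper glosses over.
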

\begin{proof}
Again, without loss of generality, we prove the result for $t=1$. Additionally, assume that the first block---which includes output $t=1$---contains $R < s$ outputs, while the remaining $T-R$ outputs belong to the second (and final) block. We can then express the matrix $\mathbf{K} + \mathbf{\Sigma}$ as
\begin{align*}
    \mathbf{K} + \mathbf{\Sigma} =
    \begin{bmatrix}
        [\mathbf{K}+\mathbf{\Sigma}]_{\scriptscriptstyle 1:NR, 1:NR} & [\mathbf{K}]_{\scriptscriptstyle 1:NR, NR+1:NT}\\
        [\mathbf{K}]_{\scriptscriptstyle NR+1:NT, 1:NR} & [\mathbf{K}+\mathbf{\Sigma}]_{\scriptscriptstyle NR+1:ND, NR+1:NT}
    \end{bmatrix} = 
    \begin{bmatrix}
        [\mathbf{K}+\mathbf{\Sigma}]_{\scriptscriptstyle 1:NR, 1:NR} & \mathbf{0}\\
        \hspace{-0.7cm}\mathbf{0} & \hspace{-1.22cm}[\mathbf{K}+\mathbf{\Sigma}]_{\scriptscriptstyle NR+1:NT, NR+1:NT}
    \end{bmatrix},
\end{align*}
$[\mathbf{\Sigma}]_{\scriptscriptstyle NR+1:ND, 1:NR} = \mathbf{0}$ since $\mathbf{\Sigma}$ is a diagonal matrix, and $[\mathbf{K}]_{\scriptscriptstyle NR+1:NT, 1:NR} = \mathbf{0}$ since $\mathbf{K}$ is block-diagonal. It is notationally cumbersome but straightforward to show, using block matrix inversion, that
\begin{align*}
    \Omega := (\mathbf{K} + \mathbf{\Sigma})^{-1} =
    \begin{bmatrix}
        \Omega_{ \scriptscriptstyle 1:NR, 1:NR} & \mathbf{0} \\
        \mathbf{0} & \Omega_{\scriptscriptstyle N(T-R):NT, N(T-R):NT}
    \end{bmatrix}
\end{align*}
for some block matrices $\Omega_{ \scriptscriptstyle 1:NR, 1:NR} \in \mathbb{R}^{\scriptscriptstyle NR \times NR}$ and $\Omega_{\scriptscriptstyle N(T-R):NT, N(T-R):NT} \in \mathbb{R}^{\scriptscriptstyle N(T-R) \times N(T-R)}$. Additionally block matrix $[\mathbf{K}]_{\scriptscriptstyle 1:N, 1:NR}$ can be written such that
\begin{align*}
    [\mathbf{K}]_{\scriptscriptstyle 1:N, 1:NR} = 
    \begin{bmatrix}
        [\mathbf{K}]_{\scriptscriptstyle 1:N, 1:NR} & [\mathbf{K}]_{\scriptscriptstyle 1:N, NR+1:NT}
    \end{bmatrix} = 
    \begin{bmatrix}
        [\mathbf{K}]_{\scriptscriptstyle 1:N, 1:NR} & \mathbf{0}
    \end{bmatrix},
\end{align*}
where $[\mathbf{K}]_{\scriptscriptstyle 1:N, NR+1:NT} = \mathbf{0}$ since $\mathbf{K}$ is block-diagonal. Multiplying $[\mathbf{K}]_{\scriptscriptstyle 1:N, 1:NT}$ with $\Omega$, it can be shown that
\begin{align*}
    [\mathbf{K}]_{\scriptscriptstyle 1:N, 1:NT} \Omega = 
    \begin{bmatrix}
        [\mathbf{K}]_{\scriptscriptstyle 1:N, 1:NR} \Omega_{ \scriptscriptstyle 1:NR, 1:NR} & \mathbf{0}
    \end{bmatrix}.
\end{align*}
This results in the following expression:
\begin{align*}
    \Omega [\mathbf{K}]_{\scriptscriptstyle 1:NT, 1:N} \Psi [\mathbf{K}]_{\scriptscriptstyle 1:N, 1:NT}\Omega = 
    \begin{bmatrix}
        \Omega_{ \scriptscriptstyle 1:NR, 1:NR}[\mathbf{K}]_{\scriptscriptstyle 1:NR, 1:N} \Psi [\mathbf{K}]_{\scriptscriptstyle 1:N, 1:NR} \Omega_{ \scriptscriptstyle 1:NR, 1:NR} & \mathbf{0}\\
        \mathbf{0} & \mathbf{0}
    \end{bmatrix}.
\end{align*}
We now note that, since $R < s$, the $((s-1)N + m)$-th diagonal element of this matrix equals zero, i.e. $C_1 = 0$. Therefore, $\textup{PIF}_{\textup{MOGP}}(y_{m,s}^c, f_1, \mathcal{D}) = 0$.
\end{proof}

\paragraph{PIF for MO-RCGP}
\setcounter{proposition}{0} 
\renewcommand{\theproposition}{3.2}
\begin{proposition}
Suppose $\mathbf{f} \sim \mathcal{GP}(\mathbf{m}, \mathcal{K})$, $\textup{vec}(\mathcal{E}) \sim \mathcal{N}(0, \mathbf{\Sigma})$, and $W_i = \textup{diag}(w_1(\mathbf{x}_i, \mathbf{y}_i), \dots, w_T(\mathbf{x}_i, \mathbf{y}_i))$, where $w_{t}$ is defined as in Equation~\eqref{eq:weight}. Then, there exists a  constant $C_2>0$ independent of $y_{m,s}^c$ for which
    \begin{align*}
        \max_{1\leq t \leq T}\sup_{y_{m,s}^c \in \mathbb{R}}
        \textup{PIF}(y_{m,s}^c, f_t, \mathcal{D}) \leq C_2.
    \end{align*}
\end{proposition}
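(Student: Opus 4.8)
The plan is to reduce the PIF to a closed-form Kullback--Leibler divergence between two $N$-dimensional Gaussians and then bound every term as the contamination diverges. By \Cref{prop:MORCGP} (equivalently \Cref{lem:posterior}), both the uncontaminated marginal posterior $p_W(f_t(X)\mid\mathcal D)=\mathcal N(\mu_t,\Sigma_{tt})$ and its contaminated counterpart $p_W(f_t(X)\mid\mathcal D^c_{m,s})=\mathcal N(\mu_t^c,\Sigma_{tt}^c)$ are Gaussian, with $\mu_t,\Sigma_{tt}$ independent of $y_{m,s}^c$. Unlike the MOGP case in \Cref{lem:PIF_mogp}, the contaminated covariance $\Sigma_{tt}^c$ now also varies with $y_{m,s}^c$, since the weight matrix $\mathbf{W}$---and hence $\mathbf{J}_{\mathbf{W}}=\tfrac12\mathbf{\Sigma}\mathbf{W}^{-2}$ and $\mathbf{m}_{\mathbf{W},\mathrm{vec}}$---depend on the data. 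I would therefore write $\mathrm{PIF}(y_{m,s}^c,f_t,\mathcal D)=\tfrac12\big(\mathrm{Tr}((\Sigma_{tt}^c)^{-1}\Sigma_{tt})-N+(\mu_t^c-\mu_t)^\top(\Sigma_{tt}^c)^{-1}(\mu_t^c-\mu_t)+\log(\det\Sigma_{tt}^c/\det\Sigma_{tt})\big)$ and aim to bound each summand uniformly in $y_{m,s}^c$ and over the finitely many channels $t$.

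The crucial mechanism is the decay of the inverse-multiquadratic weights in \Cref{eq:weight}. Writing $j=(s-1)N+m$, as $|y_{m,s}^c|\to\infty$ the centering $\gamma_s(\mathbf{x}_m,\mathbf{y}_m^c)$ is unchanged (it depends only on $\mathbf{y}_{m,-s}$), so $w_s(\mathbf{x}_m,\mathbf{y}_m^c)\to0$ like $|y_{m,s}^c|^{-1}$; for the other channels $t'\neq s$ the centering $\gamma_{t'}$ drifts linearly with $y_{m,s}^c$ through its dependence on $\mathbf{y}_{m,-t'}$, so $w_{t'}(\mathbf{x}_m,\mathbf{y}_m^c)\to0$ as well (or stays fixed and positive when channels $s,t'$ are uncorrelated). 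The key point is that, treating the scale $c_{t'}$ as fixed and bounded away from $0$, no weight ever diverges, the log-derivative entries $\nabla_{\mathbf{y}}\cdot\log(\mathbf{W}^2)$ are bounded (each behaves like $-2(y-\gamma)/(c^2+(y-\gamma)^2)\to0$), and the product $w_s(\mathbf{x}_m,\mathbf{y}_m^c)\,y_{m,s}^c$ remains bounded. Consequently the affected diagonal entries of $\mathbf{\Sigma}\mathbf{J}_{\mathbf{W}}$ diverge to $+\infty$ while every other entry and $\mathbf{m}_{\mathbf{W},\mathrm{vec}}^c$ converge to finite limits.

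Next I would show the contaminated posterior converges. Collecting the indices $\mathcal J\subseteq\{(t'-1)N+m\}_{t'=1}^T$ whose diagonal entry of $\mathbf{\Sigma}\mathbf{J}_{\mathbf{W}}$ diverges, a block (Schur-complement) argument gives that $(\mathbf{K}+\mathbf{\Sigma}\mathbf{J}_{\mathbf{W}})^{-1}$ converges to the inverse of $\mathbf{K}+\mathbf{\Sigma}\mathbf{J}_{\mathbf{W}}$ restricted to the complement of $\mathcal J$, zero-padded on $\mathcal J$; in particular the $j$-th column shrinks like $(y_{m,s}^c)^{-2}$. Since the contaminated residual $\mathbf{y}_{\mathrm{vec}}^c-\mathbf{m}_{\mathbf{W},\mathrm{vec}}^c$ has its $j$-th entry growing only like $y_{m,s}^c$ and all other entries bounded, the product $(\mathbf{K}+\mathbf{\Sigma}\mathbf{J}_{\mathbf{W}})^{-1}(\mathbf{y}_{\mathrm{vec}}^c-\mathbf{m}_{\mathbf{W},\mathrm{vec}}^c)$ converges, so $\mu_t^c$ has a finite limit and $\mu_t^c-\mu_t$ is bounded. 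Likewise, rewriting the posterior covariance as $\mathbf{K}-\mathbf{K}(\mathbf{K}+\mathbf{\Sigma}\mathbf{J}_{\mathbf{W}})^{-1}\mathbf{K}$ shows $\Sigma_{tt}^c$ converges to a finite positive-definite matrix; being a continuous, everywhere positive-definite function of $y_{m,s}^c$ with a finite positive-definite limit, $\det\Sigma_{tt}^c$ and $(\Sigma_{tt}^c)^{-1}$ stay bounded away from $0$ and $\infty$.

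Finally I would conclude by a compactness argument: the map $y_{m,s}^c\mapsto(\mu_t^c,\Sigma_{tt}^c)$ is continuous on $\mathbb R$ and extends continuously to $\mathbb R\cup\{\pm\infty\}$, so its image is compact and $\Sigma_{tt}^c$ remains uniformly positive-definite there; since the four-term KL above is continuous in $(\mu_t^c,\Sigma_{tt}^c)$ on this set, it is bounded, and taking the maximum over the finitely many channels $t$ yields the constant $C_2$. The main obstacle is the third step---the cancellation in the posterior mean between the residual growing like $y_{m,s}^c$ and the inverse column shrinking like $(y_{m,s}^c)^{-2}$---which requires carefully controlling the limit of $(\mathbf{K}+\mathbf{\Sigma}\mathbf{J}_{\mathbf{W}})^{-1}$ as several diagonal entries diverge simultaneously; everything else reduces to continuity and the boundedness of the weights and their logarithmic derivatives.
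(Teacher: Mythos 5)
Your proposal is correct in outline, but it takes a genuinely different route from the paper. The paper's proof never analyses the asymptotics of the weights or of $(\mathbf{K}+\mathbf{\Sigma}\mathbf{J}_{\mathbf{W}})^{-1}$ at all: it (i) vectorises the multi-output data and invokes Proposition 3.2 of \citet{altamirano2024robustGP} to bound the PIF of the \emph{joint} posterior over all channels, (ii) applies the chain rule for KL divergence (marginal KL $\le$ joint KL) to transfer that bound to each marginal $p_W(f_t\mid\mathcal{D})$, and (iii) verifies the two hypotheses of the cited result, namely $\sup_{\mathbf{x},\mathbf{y}} w_t(\mathbf{x},\mathbf{y})<\infty$ (trivial, since $w_t\le\beta_t$) and $\sup_{\mathbf{x},\mathbf{y}}|y_t\,w_t(\mathbf{x},\mathbf{y})^2|<\infty$ (by appeal to Appendix B.3 of \citealp{laplante2025robust}, which needs only $|\gamma_t|<\infty$ and $|c_t|<\infty$). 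In other words, the hard analytic step that you flag as your "main obstacle"---controlling the limit of $(\mathbf{K}+\mathbf{\Sigma}\mathbf{J}_{\mathbf{W}})^{-1}$ as several diagonal entries diverge simultaneously, and the cancellation against the residual growing like $y_{m,s}^c$---is precisely what the paper outsources to the existing scalar result, and the marginalisation you perform by direct computation is handled there by a one-line information-theoretic inequality. What your approach buys is self-containedness and insight: it exhibits the limiting posterior explicitly (contaminated entries are effectively assigned infinite noise and discarded), gives rates, and does not rely on the external propositions; what the paper's approach buys is brevity and a slightly stronger statement (boundedness of the joint PIF, not just the marginals), while avoiding the delicate simultaneous-divergence analysis. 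Two caveats on your sketch: first, your claim that $w_{t'}\to 0$ for $t'\neq s$ requires the regression coefficient $({C^{(m)}_{-t',t'}})^\top({C^{(m)}_{-t',-t'}})^{-1}$ onto channel $s$ to be nonzero---you note the dichotomy, and your index set $\mathcal{J}$ handles it, but the write-up should make explicit that the Schur argument is applied to whichever subset actually diverges; second, you treat $c_{t'}$ as fixed, whereas in the method of \Cref{sec:weight-choice} it is a data-dependent quantile that varies with $y_{m,s}^c$---the paper's proof glosses over the same point (asserting $|c_t|<\infty$), so this is not a gap relative to the paper, but a fully rigorous version of either argument would need the quantile level $\epsilon_t$ to exclude the diverging residual so that $c_t$ stays bounded and bounded away from zero uniformly in $y_{m,s}^c$.
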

\renewcommand{\theproposition}{B.1}
\begin{proof}
To prove robustness of MO-RCGPs, we make use of Proposition 3.2 in \cite{altamirano2024robustGP}, which states that the PIF over the \textit{Gaussian} posterior over \textit{all observations} is bounded under mild conditions for the weights. In our multi-output setting, we vectorise the outputs across all channels into a single observations vector. As a result, we can adapt this proposition to the multi-output setting:

\begingroup
\makeatletter
\def\bfseries{\normalfont}
\begin{proposition}[\citealp{altamirano2024robustGP}]
Suppose $\mathbf{f} \sim \mathcal{GP}(\mathbf{m}, \mathcal{K})$ and $\textup{vec}(\mathcal{E}) \sim \mathcal{N}(\mathbf{0}, \mathbf{\Sigma})$, and let $C_2$ be some constant independent of $y_{m, s}^c$. Additionally, assume that $\sup_{\mathbf{x}, \mathbf{y}} w_d(\mathbf{x}_, \mathbf{y}) < \infty$ and $\sup_{\mathbf{x}, \mathbf{y}} |y_d \cdot w_d(\mathbf{x}_, \mathbf{y})^2| < \infty$. Then the PIF for the \textbf{joint distribution} of MO-RCGPs is bounded as follows:
\begin{align*}
        \textup{PIF}_{\textup{MORCGP}}(y_{m,s}^c, \mathcal{D}) \leq C_2.
\end{align*}
\end{proposition}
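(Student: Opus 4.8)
The plan is to prove the joint bound by first recognising that, after vectorisation, the MO-RCGP posterior is structurally a scalar RCGP posterior over $NT$ pseudo-observations, and then bounding the Kullback--Leibler divergence term by term. Concretely, \Cref{lem:posterior} gives the joint posterior as the $NT$-variate Gaussian $\mathcal{N}(\bm{\mu}_{\textup{vec}}^{\scriptscriptstyle\textup{MORCGP}}, \bm{\Sigma}^{\scriptscriptstyle\textup{MORCGP}})$ with covariance $\bm{\Sigma}^{\scriptscriptstyle\textup{MORCGP}} = (\mathbf{K}^{-1} + \mathbf{\Sigma}^{-1}\mathbf{J}_{\scriptscriptstyle\mathbf{W}}^{-1})^{-1}$ and information vector $\bm{\eta} = (\mathbf{K}^{-1} + \mathbf{\Sigma}^{-1}\mathbf{J}_{\scriptscriptstyle\mathbf{W}}^{-1})\mathbf{m}_{\textup{vec}} + \mathbf{\Sigma}^{-1}\mathbf{J}_{\scriptscriptstyle\mathbf{W}}^{-1}(\mathbf{y}_{\textup{vec}} - \mathbf{m}_{\scriptscriptstyle\mathbf{W},\textup{vec}})$, as read off from the proof of \Cref{lem:posterior}. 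Since $\mathbf{\Sigma}$, $\mathbf{W}$, and hence $\mathbf{J}_{\scriptscriptstyle\mathbf{W}}$ are all diagonal, this matches the scalar RCGP posterior of \citet{altamirano2024robustGP} under the correspondence $K \leftrightarrow \mathbf{K}$, $\sigma^2 J_w \leftrightarrow \mathbf{\Sigma}\mathbf{J}_{\scriptscriptstyle\mathbf{W}}$, $m_w \leftrightarrow \mathbf{m}_{\scriptscriptstyle\mathbf{W},\textup{vec}}$ over $NT$ observations. Contaminating $y_{m,s}$ perturbs the weights through the index $j = (s-1)N+m$ (and, via any shared centering $\gamma_t$, possibly other entries), but every affected weight still obeys the stated uniform conditions, so the argument below is insensitive to how the contamination propagates.

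First I would write the PIF as the closed-form KL between the uncontaminated and contaminated joint posteriors $\mathcal{N}(\bm{\mu},\bm{\Sigma})$ and $\mathcal{N}(\bm{\mu}^c,\bm{\Sigma}^c)$, reusing the Gaussian KL identity already employed in \Cref{lem:PIF_mogp}. Unlike the MOGP case, the contaminated covariance now differs from the uncontaminated one through the weight, so all four KL terms must be controlled. The key observation is that the precision $(\bm{\Sigma}^c)^{-1} = \mathbf{K}^{-1} + 2\mathbf{\Sigma}^{-2}(\mathbf{W}^c)^2$ depends on the contamination only through bounded scalars of the form $w_t(\mathbf{x}_m,\mathbf{y}_m^c)^2$; since $\sup_{\mathbf{x},\mathbf{y}} w_d < \infty$, these range over a compact subset of $[0,\sup w^2]$, so $(\bm{\Sigma}^c)^{-1}$ stays between $\mathbf{K}^{-1}$ and a fixed upper bound. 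Consequently $\bm{\Sigma}^c$ lies in a compact set of positive-definite matrices, and both the trace term $\textup{Tr}((\bm{\Sigma}^c)^{-1}\bm{\Sigma}) - NT$ and the log-determinant term are continuous functions of those bounded scalars, hence uniformly bounded in $y_{m,s}^c$.

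The remaining work is the Mahalanobis term $(\bm{\mu}^c - \bm{\mu})^\top(\bm{\Sigma}^c)^{-1}(\bm{\mu}^c-\bm{\mu})$, for which it suffices to bound $\|\bm{\mu}^c - \bm{\mu}\|$ since $(\bm{\Sigma}^c)^{-1}$ is already controlled. Substituting $\mathbf{\Sigma}^{-1}\mathbf{J}_{\scriptscriptstyle\mathbf{W}}^{-1} = 2\mathbf{\Sigma}^{-2}\mathbf{W}^2$ and $\mathbf{m}_{\scriptscriptstyle\mathbf{W},\textup{vec}} = \mathbf{m}_{\textup{vec}} + \mathbf{\Sigma}(\nabla_\mathbf{y}\cdot\log\mathbf{W}^2)$ into $\bm{\eta}$, and using the entrywise identity $\mathbf{W}^2(\nabla_\mathbf{y}\cdot\log\mathbf{W}^2) = \nabla_\mathbf{y}\cdot\mathbf{W}^2$, the prior-mean contributions $\mathbf{W}^2\mathbf{m}_{\textup{vec}}$ cancel and the information vector reduces to $\bm{\eta} = \mathbf{K}^{-1}\mathbf{m}_{\textup{vec}} + 2\mathbf{\Sigma}^{-2}\mathbf{W}^2\mathbf{y}_{\textup{vec}} - 2\mathbf{\Sigma}^{-1}(\nabla_\mathbf{y}\cdot\mathbf{W}^2)$. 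The only entries of $\bm{\eta}$ that depend on the contamination are those whose weight involves $y_{m,s}^c$: the corresponding entry of $2\mathbf{\Sigma}^{-2}\mathbf{W}^2\mathbf{y}_{\textup{vec}}$ equals $2\sigma_s^{-4}w_s(\mathbf{x}_m,\mathbf{y}_m^c)^2 y_{m,s}^c$, bounded by the hypothesis $\sup|y_d w_d^2| < \infty$, while the entry of $\nabla_\mathbf{y}\cdot\mathbf{W}^2$ is $\partial_y w_s^2$, a bounded function of $(y_s-\gamma_s)/c_s$ for the inverse-multi-quadratic weight of \eqref{eq:weight}. Hence $\bm{\eta}$ and $\bm{\eta}^c$ are both uniformly bounded; together with the uniform bounds on $\bm{\Sigma}$ and $\bm{\Sigma}^c$, both means $\bm{\mu} = \bm{\Sigma}\bm{\eta}$ and $\bm{\mu}^c = \bm{\Sigma}^c\bm{\eta}^c$, and therefore $\|\bm{\mu}^c - \bm{\mu}\|$, are bounded. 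Collecting the four bounds yields a finite $C_2$ independent of $y_{m,s}^c$, which is precisely the vectorised form of Proposition~3.2 of \citet{altamirano2024robustGP}.

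I expect the covariance terms to be the main obstacle: in contrast to the MOGP PIF of \Cref{lem:PIF_mogp}, where the covariance is unaffected by contamination and only a single quadratic mean term arises, here the outlier enters the posterior covariance through the weight, so the trace and log-determinant terms must be shown not to diverge. The decisive fact is that the boundedness of $w$ confines the contaminated precision to a compact set no matter how large $|y_{m,s}^c - y_{m,s}|$ grows; the very mechanism that lets the MOGP PIF blow up---an unbounded effective observation entering the information vector---is neutralised by the down-weighting $w_s(\mathbf{x}_m,\mathbf{y}_m^c) \to 0$, so neither the covariance nor the information vector can degenerate.
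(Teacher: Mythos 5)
Your proposal is correct in substance, but it takes a genuinely different route from the paper, because the paper does not actually prove this statement: it is imported as a black box from Proposition~3.2 of \citet{altamirano2024robustGP}, the only justification offered being that vectorising the $T$ channels turns the MO-RCGP posterior into (structurally) a scalar RCGP posterior over $NT$ pseudo-observations, to which the scalar theorem is then applied. You instead inline a self-contained derivation: the Gaussian KL split into trace, log-determinant and Mahalanobis terms; the contaminated precision $\mathbf{K}^{-1} + 2\mathbf{\Sigma}^{-2}(\mathbf{W}^c)^2$ sandwiched in the Loewner order between $\mathbf{K}^{-1}$ and a fixed matrix via $\sup w < \infty$ (which disposes of the trace and log-determinant terms --- correctly identified by you as the terms absent from the MOGP computation in \Cref{lem:PIF_mogp}); and the information vector $\bm{\eta} = \mathbf{K}^{-1}\mathbf{m}_{\textup{vec}} + 2\mathbf{\Sigma}^{-2}\mathbf{W}^2\mathbf{y}_{\textup{vec}} - 2\mathbf{\Sigma}^{-1}(\nabla_\mathbf{y}\cdot \mathbf{W}^2)$ bounded entrywise via $\sup |y_d\, w_d^2| < \infty$; your cancellation of the prior-mean contributions using $\mathbf{W}^2(\nabla_\mathbf{y}\cdot\log\mathbf{W}^2) = \nabla_\mathbf{y}\cdot\mathbf{W}^2$ checks out against the algebra in the proof of \Cref{lem:posterior}, since all matrices involved are diagonal. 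What your route buys is that it makes explicit a multi-output subtlety the bare citation glosses over: in the vectorised model each weight $w_t(\mathbf{x}_i,\mathbf{y}_i)$ depends on the \emph{whole} observation vector $\mathbf{y}_i$ (through the centering $\gamma_t$ and the scale $c_t$), so contaminating the single entry $y_{m,s}$ perturbs up to $T$ weights simultaneously, whereas the scalar theorem is stated for weights depending only on their own observation; because your bounds are uniform in $\mathbf{y}$, this propagation is handled explicitly rather than by appeal to the citation. What the paper's route buys is brevity and a clean separation of concerns, deferring the analytic work to the prior literature.

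One caveat worth flagging: boundedness of the divergence term $\partial_{y_t} w_t^2$ does not follow from the two stated supremum conditions alone (a bounded weight with bounded $y\,w^2$ can still have an unbounded derivative), and you rightly fall back on the inverse-multi-quadratic form \eqref{eq:weight} at exactly that point. This matches the hypothesis of the enclosing \Cref{prop:robustness}, but it means your argument establishes the proposition under marginally stronger assumptions than those restated in the citation --- a reliance the original scalar proof shares. You also implicitly require $\mathbf{K}$ to be positive definite for the precision lower bound, which is a standing assumption but should be stated.
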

\makeatother
\endgroup

To demonstrate the robustness of the \textit{marginal distributions} of MO-RCGPs, it remains to show that the \textit{marginal} PIFs are bounded and that the assumptions on the weighting functions are satisfied. To show the former, we make use of the chain rule for the KL divergence \citep{cover1999elements}. This property states that the divergence between joint distributions is at least as large as the divergence between their marginal distributions. Formally, let $p(x,y)$ and $q(x,y)$ be joint probability densities over random variables $\tilde{x}$ and $\tilde{y}$. Then the KL divergence between these joint distributions decomposes as:
\begin{align*}
    \textup{KL}(p(x,y) \| q(x,y)) &= \textup{KL}(p(x) \| q(x)) + \mathbb{E}_{\tilde{x}\sim p(\tilde{x})}\left[\textup{KL}(p(y|\tilde{x}),q(y|\tilde{x}))\right] \\
    &\geq \textup{KL}(p(x) \| q(x)),
\end{align*}
since the divergence term is non-negative, i.e. $\textup{KL}(p(y|\tilde{x}),q(y|\tilde{x})) \geq 0$. In other words, the KL divergence of the marginal distributions is less than or equal to the KL divergence of the joint distributions. Applying this result to MO-RCGPs yields
\begin{align*}
    \text{PIF}_{\textup{MORCGP}}(y_{m,s}^c, f_t, \mathcal{D}) \leq \textup{PIF}_{\textup{MORCGP}}(y_{m,k}^c, \mathcal{D}) \leq C_2.
\end{align*}
Thus, the marginal PIFs in MO-RCGPs are bounded, which ensures their robustness. We now verify that the assumptions on the weights are satisfied. Specifically, we show that
\begin{align*}
    \sup_{\mathbf{x}, \mathbf{y}} w_d(\mathbf{x}_, \mathbf{y}) < \infty \qquad \textup{and} \qquad \sup_{\mathbf{x}, \mathbf{y}} |y_d \cdot w_d(\mathbf{x}_, \mathbf{y})^2| < \infty.
\end{align*}
The first condition holds trivially since $w_d(\mathbf{x}_, \mathbf{y}) \leq \beta = \frac{\sigma_d}{\sqrt{2}} < \infty$. To establish the second condition, we refer to the result in Appendix B.3 of \cite{laplante2025robust} adapted to the multi-output setting, which states that $\sup_{\mathbf{x}, \mathbf{y}} |y_d \cdot w_d(\mathbf{x}_, \mathbf{y})^2| < \infty$ provided that $|\gamma_d(\mathbf{x}, \mathbf{y})| < \infty$ and $|c_d(\mathbf{x}, \mathbf{y})| < \infty$. Since $\gamma_d(\mathbf{x}, \mathbf{y})$ represents the mean of a Gaussian distribution, it is bounded. Finally, $c_d(\mathbf{x}, \mathbf{y})$ is also bounded since it is the quantile absolute deviation of a set of finite elements. Thus, the second condition also holds.
\end{proof}

\section{Additional Experiments} \label{app:additional_experiments}

The code to reproduce experiments is available at \url{https://github.com/joshuarooijakkers/robust_conjugate_MOGP}.

\subsection{Implementation Details} \label{app:optimisation_details}

All experiments were run on a 2025 HP ZBook Power G11 Mobile Workstation PC with an Intel(R) Core(TM) Ultra 7 155H CPU with 16GB memory.

For standard MOGPs, we use the marginal likelihood objective, as is most common in the literature (see e.g. \citealp{alvarez2012kernels}). For MO-RCGPs, we use the w-LOO-CV objective as given in \Cref{sec:parameter_optimisation}:
\begin{align*}
    \varphi_w(\Sigma, \bm{\theta}) = \sum^N_{i=1}\sum^{T}_{t=1} \left(\frac{w_{i,t}}{\beta_t}\right)^2 \log p_w(y_{i,t}|Y_{-(i,t)}, \Sigma, \bm{\theta}).
\end{align*}
We note, however, that computing the weights $w_{i,t}$ requires the hyperparameters ${C_{-t,-t}^{(i)}} = [\mathcal{K}(\mathbf{x}_i, \mathbf{x}_i)]_{-t, -t}$ and  $C_{-t,t}^{(i)} = [\mathcal{K}(\mathbf{x}_i, \mathbf{x}_i)]_{-t, t}$, which we aim to learn. In principle, one could estimate the full objective jointly, but we found that doing so destabilises the optimisation. To address this, we adopt a common two-step procedure from the literature \citep{huber2011robust, rousseeuw1986robust}: first, the weights are (robustly) estimated and then held fixed during the optimization of the hyperparameters. Once the optimal parameters are selected, the weights are re-estimated for use during inference. Under the ICM, the hyperparameters ${C_{-t,-t}^{(i)}}$ and $C_{-t,t}^{(i)}$ are components of the covariance matrix $C$ across outputs, and therefore we use the Fast Minimum Covariance Determinant (FastMCD) algorithm \citep{rousseeuw1984least, rousseeuw1999fast} for weight estimation in the first step. This estimator robustly estimates the location and scatter of multivariate data by finding a subset of points whose covariance determinant is minimal.

In summary, to optimise the model parameters, we first estimate a robust covariance matrix using the FastMCD algorithm. Next, we compute the weights as described in \Cref{sec:weight-choice} and keep them fixed throughout the optimisation. For a given set of hyperparameters, we evaluate the LOO-CV objective for each $i \in \{1, \dots, N\}, t\in \{1, \dots, T\}$:
\begin{align*}
    p_w(y_{i,t}|Y_{-(i,t)}, \bm{\theta}, \Sigma) &= \mathcal{N}(\mu^{\scriptscriptstyle\textup{MORCGP}}_{i,t}, (\sigma^{\scriptscriptstyle\textup{MORCGP}}_{i,t})^2 + \sigma_t^2)\\
    \mu^{\scriptscriptstyle\textup{MORCGP}}_{i,t} &= z_{i,t} + m_{i,t} - \frac{[(\mathbf{K} + \mathbf{\Sigma}\mathbf{J}_{\scriptscriptstyle\mathbf{W}})^{-1}\mathbf{z}_{\textup{vec}}]_{(t-1)N+i}}{[(\mathbf{K} + \mathbf{\Sigma}\mathbf{J}_{\scriptscriptstyle\mathbf{W}})^{-1}]_{(t-1)N+i, (t-1)N+i}}\\
    (\sigma^{\scriptscriptstyle\textup{MORCGP}}_{i,t})^2 &= \frac{1}{[(\mathbf{K} + \mathbf{\Sigma}\mathbf{J}_{\scriptscriptstyle\mathbf{W}})^{-1}]_{(t-1)N+i, (t-1)N+i}} - \frac{\sigma_t^4}{2}w_t(\mathbf{x}_i, \mathbf{y}_i).
\end{align*}
We then compute the weighted LOO-CV objective as a sum over all $i,t$ using weights $(w_{i,t}/\beta_t)^2$. The optimisation proceeds by iteratively updating the hyperparameters via the L-BFGS algorithm \citep{byrd1995limited} until convergence. Finally, the weights are re-estimated using the optimised covariance matrix.

For comparability, both MOGP and MO-RCGP are implemented in \texttt{NumPy} \citep{harris2020array}. In contrast, the $t$-MOGP, which uses variational inference for optimisation, is implemented using the \texttt{GPflow} package \citep{matthews2017gpflow}.

\subsection{Performance Metrics} \label{app:issues_RCGPs}

To evaluate and compare model performance, we compute the root mean squared error (RMSE) across all observations and outputs:
\begin{align*}
    \textup{RMSE} = \frac{1}{N} \sum_{i=1}^N \frac{1}{T_i}\sum_{t=1}^{T_i} (y_{i,t} - \hat{y}_{i,t})^2,
\end{align*}
where $T_i \in \{1, \dots, T\}$ denotes the number of output variables for observation $i \in \{1, \dots, N\}$, ${y}_{i,t}$ is the true value, and $\hat{y}_{i,t}$ is the corresponding model prediction. The second performance metric we consider is the negative log predictive distribution (NLPD), which is given as
\begin{align*}
    \textup{NLPD} = -\frac{1}{N} \sum_{i=1}^N \frac{1}{T_i}\sum_{t=1}^{T_i} \log \mathcal{N} (y_{i,t} ; \hat{y}_{i,t}, \hat{\sigma}^2_{i,t}),
\end{align*}
where $\hat{\sigma}^2_{i,t}$ is the model's variance of its prediction.

\subsection{Synthetic MOGP Problem from Section~\ref{sec:experiments}} \label{app:synthetic_MOGP_table}

We first uniformly sample $N = 100$ input locations ($d=1$) over the interval $[-5,5]$. Observations are then generated across $T = 3$ outputs from an ICM with a squared exponential kernel of lengthscale $\ell = 1$, and the coregionalisation matrix
\begin{align*}
    B =
    \begin{bmatrix}
        1.0 & 0.9 & 0.7 \\
        0.9 & 1.0 & 0.8 \\
        0.7 & 0.8 & 1.0 \\
    \end{bmatrix}.
\end{align*}
Gaussian noise $\varepsilon_{i,t} \sim \mathcal{N}(0, 0.1)$ is added to all outputs $t \in \{1, \dots, 4\}$. This results in the dataset we use in the well-specified setting. In the outlier-contaminated setting, we introduce $\epsilon_1 = 10\%$ contamination by randomly selecting observations from the first output ($t = 1$) and perturbing them with outliers. Specifically, for each selected observation, we add a random draw from $U[2, 3]$ if $y_{i,t} < m_t(\mathbf{x}_i)$, and subtract it otherwise.

For the MO-RCGP ($w_{\textup{MORCGP}}$), we use the closed-form expression in \Cref{prop:MORCGP} with the weights proposed in \Cref{sec:weight-choice}. For the MO-RCGP ($w_{\textup{RCGP}}$), we use the closed-form expression in \Cref{prop:MORCGP} where we use the weight equation defined in \ref{eq:weight}. However, we now use that $\gamma_t(\mathbf{x}, \mathbf{y}) = m_t(\mathbf{x})$, and $c_t(\mathbf{x}, \mathbf{y})$ is chosen as the $(1-\epsilon_t)$-th quantile of $\{|y_{i,t} - m_t(\mathbf{x}_i)|\}^N_{i=1}$. For all methods, the prior mean is taken as $m_t(\mathbf{x}) = 0$. Both MO-RCGP variants use the optimisation objective described in \Cref{sec:parameter_optimisation}, each with their respective weights. We repeat the experiment 20 times, each with a unique random seed.

\subsection{Choice of the Weight Function} \label{app:choice_weight_function}

\begin{figure*}[htbp]
  \centering
  \includegraphics[width=\textwidth]{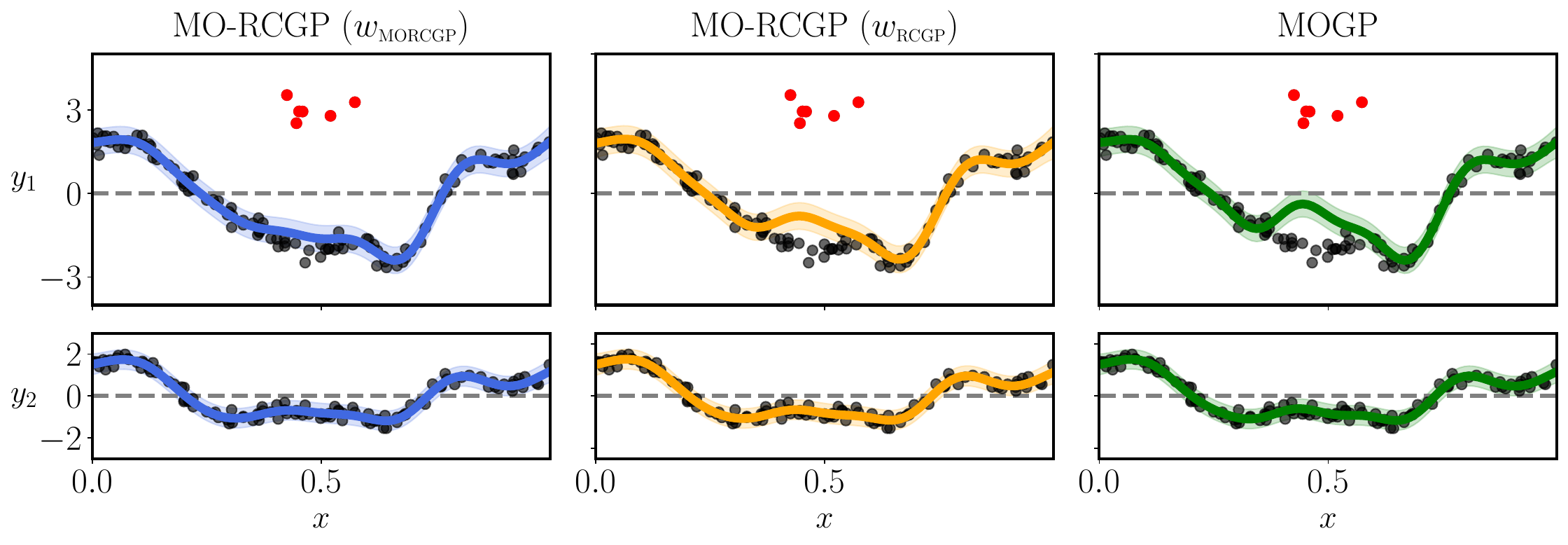}
  \caption{\textit{Simulated Data with Focused Outliers.} MO-RCGP~($w_{\scriptscriptstyle \textup{MORCGP}}$) uses the weight function described in this paper, while the MO-RCGP ($w_{\scriptscriptstyle \textup{RCGP}}$) use the original RCGP weight with $\gamma_t(\mathbf{x}, \mathbf{y}) = m_t(\mathbf{x}) = \frac{1}{N}\sum^N_{i=1}y_{i,t}$. Among all methods, only MO-RCGP~($w_{\scriptscriptstyle \textup{MORCGP}}$) is robust to the contamination.}
  
  \label{fig:choice_weight_function}
  \vspace*{-0.1cm}
\end{figure*}  

In \Cref{sec:weight-choice} and \Cref{tab:synthetic_imputation}, we discussed how the naive RCGP weight used in \cite{altamirano2024robustGP} is suboptimal, as it is sensitive to prior mean misspecification. To illustrate this and demonstrate the improvement afforded by our proposed weight, we consider the setup in \Cref{fig:choice_weight_function}. We first uniformly generate input values over the interval $[0,1]$ and then sample $N=120$ data points across $T=2$ outputs from an ICM with $B_{11}=2$, $B_{22}=1$, $B_{12}=1.25$, and a squared exponential kernel with lengthscale $\ell=0.1$. Gaussian noise $\mathcal{N}(0,0.05)$ is added to the observations. These hyperparameters are used for predictions across all models. For the first output ($t=1$), we introduce outliers corresponding to $\epsilon_1 = 10\%$ of $N$: the input values are sampled uniformly from the interval $[0.42,0.58]$, and the corresponding observations are replaced by samples from $\mathcal{N}(2.5,0.5)$.

We generate predictions from three models: the standard MOGP, the MO-RCGP with the RCGP weight $w_{\textup{RCGP}}$ from \cite{altamirano2024robustGP}, and the MO-RCGP with our proposed weight $w_{\textup{MORCGP}}$ (see Appendix~\ref{sec:weight-choice}). Note that MO-RCGP ($w_{\textup{RCGP}}$) uses the same multi-output $w_\textup{RCGP}$ as detailed in Appendix~\ref{app:synthetic_MOGP_table}. Across all models, we use the prior mean $m_t(\mathbf{x}) = \frac{1}{N}\sum^N_{i=1}y_{i,t}$.

The plotted predictives show that both the standard MOGP and the MO-RCGP with $w_{\textup{RCGP}}$ lack robustness: the former due to its inability to downweight observations, and the latter because outliers and data are at the same distance from the chosen prior, so both receive the same weight. In contrast, the MO-RCGP with the proposed weight $w_{\textup{MORCGP}}$ shows robustness by effectively sharing information across outputs, since the centering function is taken to be the conditional expectation given the observations in the other output channel.

\subsection{Performance Comparison for Multivariate Outliers}

In many settings, outliers occur jointly, i.e. multiple observed values for a given input are contaminated (see, e.g. \citealp{leys2018detecting}). In this section, we analyse how the performance of both MOGP and MO-RCGP changes as the number of (jointly) outlying observations in the observed vector increases. To ensure the outliers occur jointly, we contaminate the observations such that their Mahalanobis distance is large.

Specifically, we first uniformly at random generate $N = 80$ input values over the interval $[0,1]$. We then sample $T=10$ outputs from an ICM with coregionalisation matrix $B$, where $B_{ij} = 1 - 0.1 |i-j|$, and a squared exponential kernel with $\ell = 0.1$. Gaussian noise $\mathcal{N}(0, 0.05)$ is then added to each data point. These parameters are used for predictions in both models.

To generate the outliers, we uniformly at random select 10\% of the input values for contamination. For a specified number of outputs $S \in \{1, \dots, T\}$ to be jointly outlying, we add perturbations with large Mahalanobis distance, following \citep{mardia2024multivariate}. Specifically, for a given target Mahalanobis distance $\textup{MD}_{\textup{target}}$, and a base vector $\mathbf{v} \in \mathbb{R}^S$, we compute the Mahalanobis norm $q = \mathbf{v}^\top B^{-1} \mathbf{v}$. We then determine the scaling factor $\alpha = \textup{MD}_{\textup{target}} / \sqrt{q}$, which yields the final outlier vector $\alpha \mathbf{v}$ with desired Mahalanobis distance $\textup{MD}_{\textup{target}}$. This perturbation is multiplied by either -1 or 1 with equal probability, before being added to the first $S$ elements of the selected output vector. For each input, $\textup{MD}_{\textup{target}}$ is sampled from the univariate uniform distribution $U[10, 15]$, and the base vector is drawn from an $S$-variate uniform distribution $U[0.5, 1.5]$. We set the prior mean as $m_t(\mathbf{x}) = \frac{1}{N}\sum^N_{i=1}y_{i,t}$ for $t \in \{1, \dots, T\}$. The experiment is repeated for $S \in \{1, \dots, T\}$, with 20 repetitions per $S$, each using a unique seed.

The results are displayed in \Cref{fig:increasing_outliers}. As the dimensionality of the outliers increases, both RMSE and NLPD grow much faster for MOGP than for MO-RCGP. Thus, MO-RCGP remains more robust even when outliers occur in higher dimensions.

\begin{figure}[htbp]
    \centering
    \begin{subfigure}[b]{0.45\textwidth}
        \centering
        \includegraphics[width=\textwidth]{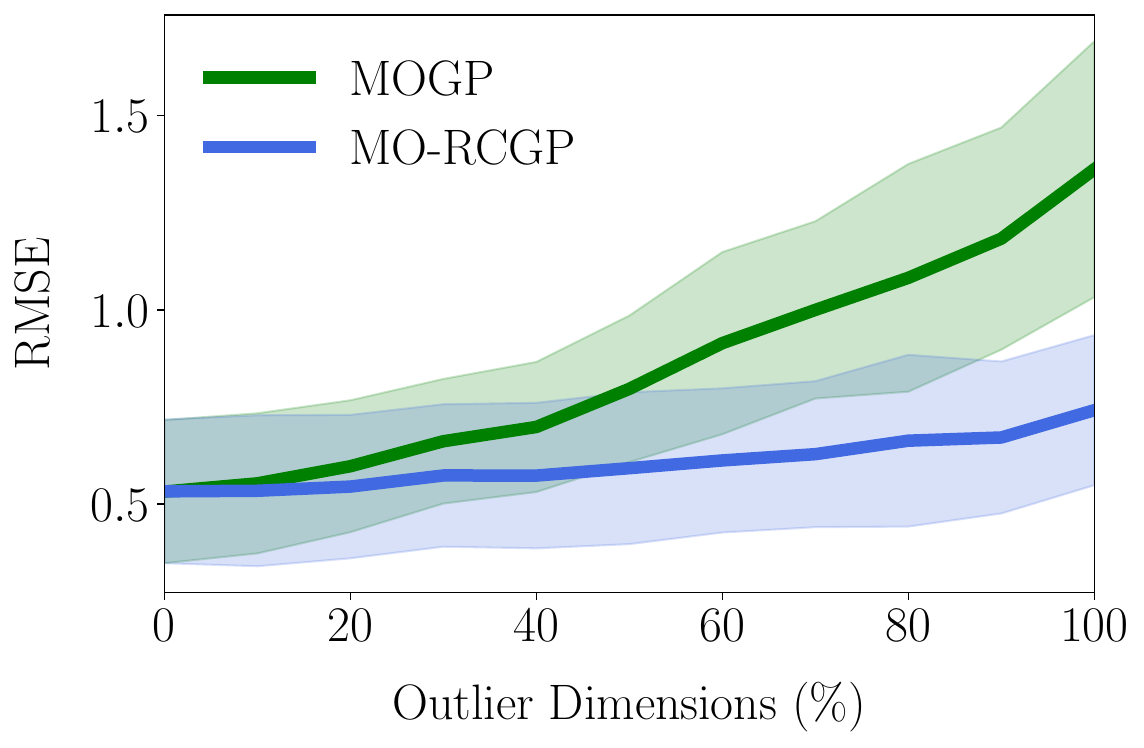}
        \caption{Mean RMSE and one standard deviation across 20 random seeds versus the percentage of dimensions in which outliers occur.}
        \label{fig:RMSE_increasing_outliers}
    \end{subfigure}
    \hfill
    \begin{subfigure}[b]{0.45\textwidth}
        \centering
        \includegraphics[width=\textwidth]{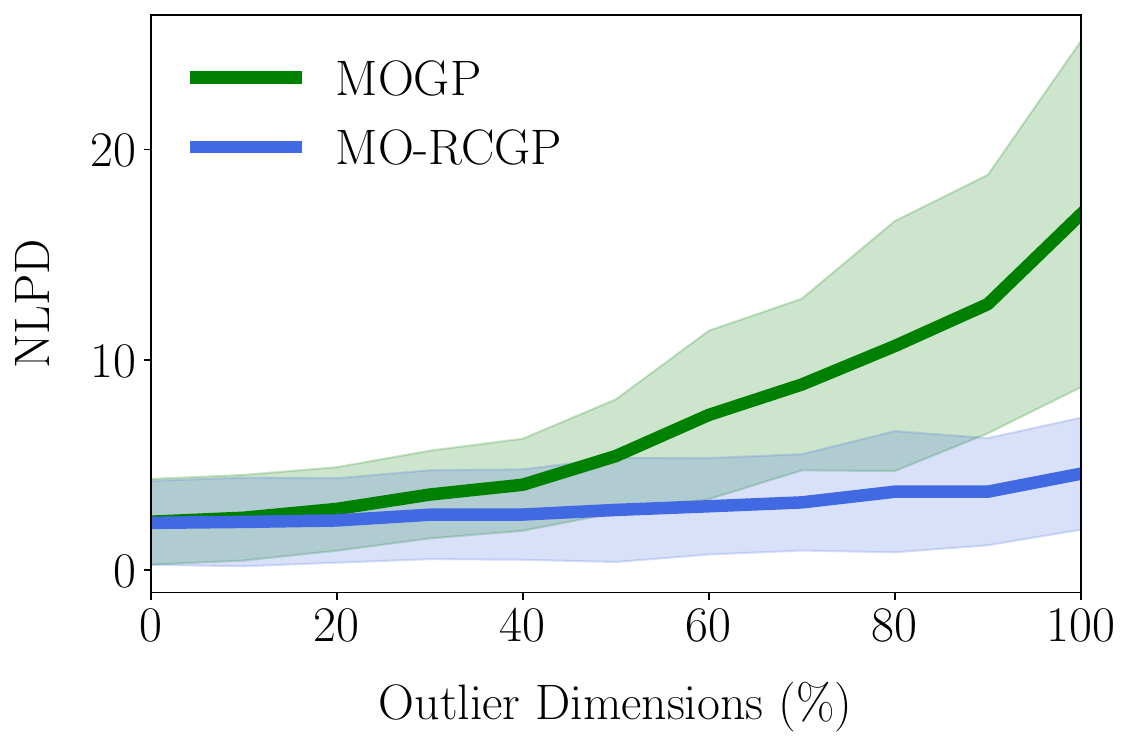}
        \caption{Mean NLPD and one standard deviation across 20 random seeds versus the percentage of dimensions in which outliers occur.}
        \label{fig:NLPD_increasing_outliers}
    \end{subfigure}
    \caption{\textit{Performance of \textcolor{Green}{\textbf{MOGP}} and \textcolor{RoyalBlue}{\textbf{MO-RCGP}} as the number of outliers in the observed vector increases.} We generate $N=80$ data points from an ICM across $T=10$ outputs. We then contaminate $\epsilon_s = 10\%$ of the observations with multivariate outliers for $s \in \{1, \dots, S\}$ with $S \leq T$ increasing incrementally. The prior mean is $m_t(\mathbf{x}) = \frac{1}{N_t}\sum_{i=1}^{N_t} y_{i,t}$. As the percentage of outliers grows, the performance of the MOGP drops substantially faster than that of the MO-RCGP.}
    \label{fig:increasing_outliers}
\end{figure}

\subsection{Energy Efficiency Prediction}

\paragraph{Dataset}

We obtain data from the Energy Efficiency dataset from the \href{https://archive.ics.uci.edu/}{UCI repository}, which can be downloaded at \url{https://archive.ics.uci.edu/dataset/242/energy+efficiency}. The dataset was generated using building-energy simulation in ECOTECT for 12 prototype residential building shapes, systematically varied design parameters, and was designed to assess the energy efficiency of buildings based on various architectural characteristics. Each data point represents a building and consists of $d=8$ covariates: relative compactness, surface area, wall area, roof area, overall height, orientation, glazing area and glazing area distribution. The goal is to predict $T=2$ outputs: heating load ($t=1$) and cooling load ($t=2$). 

We standardise both the input and output variables by subtracting their own mean and then dividing by their own standard deviation. The dataset comprises $N=768$ data points, which we randomly split into a training set (75\%) and a test set (25\%) for model evaluation. This comprises the setup for the scenario with no outliers. 

\paragraph{Outlier Scenarios}

We uniformly at random introduce outliers into $\epsilon_1 = 10\%$ of the heating load observations using the following contamination strategies, adapted from \cite{altamirano2024robustGP}:
\begin{itemize}[label=\scriptsize$\bullet$]
\item \textit{Uniform outliers}: A specified proportion of data points (after standardisation) is uniformly at random chosen for contamination. This subset is then split evenly: for half the points, we add noise sampled from $z \sim U(6, 9)$; for the other half, we subtract it. Here, $U$ represents the uniform distribution.
\item \textit{Asymmetric outliers}: We uniformly sample a fixed proportion of the dataset (after standardisation), as with uniform outliers. However, instead of splitting, we contaminate all selected points by adding $z \sim U(6, 9)$, introducing a directional bias in the outliers.
\item \textit{Focused outliers}: We select a proportion of data points (after standardisation) uniformly at random which we replace with the outliers. First, we calculate the median $m_j$ for each covariate $j = 1, \dots, d$. We then replace the covariates of the selected proportion by $[m_1 + \delta_1, \dots, m_d + \delta_d]^\top$, where $\delta_j \sim U(0,0.1\alpha_j)$, where $\alpha_j$ is the median absolute deviation of the $j$-th input. Simultaneously, the corresponding values for the heating load are replaced by $6$ plus a small perturbation $\delta_{y_1} \sim U({0, 0.1 \alpha_{y_1}})$. This creates a localised cluster of outliers in both input and output space.
\end{itemize}

\paragraph{Implementation}

We compare the MOGP and MO-RCGP models against the $t$-MOGP, which extends the MOGP by adopting a Student-$t$ likelihood:
\begin{align*} 
    p(y_t|f_t(\mathbf{x}), \nu_t, \sigma_t^2) = \frac{\Gamma(\frac{\nu_t + 1}{2})}{\sqrt{\nu_t \pi \sigma_t^2}\Gamma(\frac{\nu_t}{2})}\bigg(1+\frac{(y_t-f_t(\mathbf{x}))^2}{\nu_t \sigma_t^2}\bigg)^{-\frac{(\nu_t+1)}{2}}, 
\end{align*}
where $\nu_t > 0$ and $\sigma_t$ denote the degrees of freedom and scale parameter for output $t \in \{1, \dots, T\}$, respectively.
This likelihood breaks conjugacy, so the posterior and predictive distributions are no longer available in closed form and must instead be approximated. Our implementation in \texttt{GPFlow} uses the variational inference method of \citet{opper2009variational} to estimate the posterior, predictive distribution, and hyperparameters. However, the package only provides a built-in implementation for the shared-parameter case $\nu_t = \nu$ and $\sigma_t = \sigma$, which we adopt. To ensure fairness, we also fix $\sigma_t = \sigma$ across the MOGP and MO-RCGP models.
For all outlier scenarios (including the no-outlier case), we set $\nu = 10$, except in the focused-outlier scenario, where $\nu = 9$. Parameter selection for the MOGP is performed via maximisation of the marginal likelihood, while for the MO-RCGP, we maximise the w-LOO-CV objective described in \Cref{sec:parameter_optimisation}.
We use a prior mean function with $m_t(\mathbf{x}) = \frac{1}{N}\sum_{i=1}^{N} y_{i,t}$ for $t \in \{1,2\}$.

Experiments are repeated 20 times with different train/test splits. Reported computation times and standard deviations (\Cref{tab:UCI_runtimes}) are based on these 80 runs (20 per outlier scenario). 

\subsection{Navitoclax Dose-Response Modelling}

We obtain data for the experimental drug Navitoclax from the \href{https://www.cancerrxgene.org/}{Genomics of Drug Sensitivity in Cancer Database} \citep{yang2012genomics}. Specifically, we used raw bulk data from the GDSC2 screening campaign (conducted around 2016–2017), available at \url{https://www.cancerrxgene.org/downloads/bulk_download}. Each record in the dataset contains the cell line identifier, drug name, dose, and corresponding viability measurement. For our analysis, we selected the cell lines P12-ICHIKAWA and ARCH-77.

To ensure comparability across experiments, the viability scores were computed by normalising treated wells to plate-specific positive and negative controls using the protocol described in \cite{Vis01052016} and implementation in the R package \texttt{gdscIC50} \citep{gdscIC50}. No batch correction was applied, and viability scores outside the interval $[0,1]$ were retained without truncation. Technical replicates were averaged to yield a single \texttt{(cell line, drug, dose, viability)} observation for each unique combination.

For the MOGP, hyperparameters are selected by maximising the marginal likelihood, whereas for the MO-RCGP, we use the w-LOO-CV objective described in \Cref{sec:parameter_optimisation}. In addition, in both models, we set the prior mean to $m_t(\mathbf{x}) = 0.5$ for $t \in \{1,2\}$, reflecting a neutral prior belief halfway between cells being alive (1.0) and dead (0.0).

\subsection{Robustness to Financial Anomalies}

We use transaction-level Fannie Mae Conventional Loan-backed MBS trading data from \href{hhttps://www.finra.org/finra-data/fixed-income/tba/trade}{FINRA}. The dataset comprises $N=139$ trades recorded over September 8 and 9, 2025, for $T=3$ correlated uniform MBS coupons: \href{https://www.finra.org/finra-data/fixed-income/trade-history?symbol=UMBS3519250&bondType=TBA}{5\% MBS}, \href{https://www.finra.org/finra-data/fixed-income/trade-history?symbol=UMBS3519260&bondType=TBA}{5.5\% MBS}, and \href{https://www.finra.org/finra-data/fixed-income/trade-history?symbol=UMBS3519270&bondType=TBA}{6\% MBS}. Each data point records a trade timestamp and its transaction price. MBS prices are quoted in increments of 1/64, corresponding to half-tick units.

We discretise the series at a 1-minute granularity and, when multiple trades occur within the same minute, we retain the last trade observed in that minute. In addition, we standardise both the input and output variables by subtracting their own mean and then dividing by their own standard deviation. In addition, we set $\epsilon_t = 5/N$, expecting 5 contaminated data point per output.

We fit both the MOGP and MO-RCGP with a prior mean of $m_t(\mathbf{x}) = \frac{1}{N}\sum_{i=1}^{N} y_{i,t}$ for $t \in \{1,2,3\}$. The MOGP uses marginal likelihood for optimisation, and the MO-RCGP uses the w-LOO-CV objective defined in \Cref{sec:parameter_optimisation}. To reflect realistic trading conditions, we cap the noise variance in all models at a quarter tick (half the minimum price increment), following common practice in liquid markets to avoid overestimating noise and to ensure stable, realistic predictions \citep{gonzalvez2019financial}.

We fit the models on the full two-day dataset and plot the predictions in \Cref{fig:FNCL_2days}. Except for the outlier at 4 pm on September 8 in the 5.5\% MBS—which is also highlighted in \Cref{fig:FNCL}—the MOGP and MO-RCGP predictions are nearly identical. No additional clear outliers are observed, confirming that, in this setting, the MO-RCGP effectively recovers standard MOGP behavior. One exception occurs for the final two \textit{outlying} observations in the 5\% MBS at 4:30 pm on September 9, where the MO-RCGP predictions are influenced by these outliers. This arises because MO-RCGP reduces the impact of the September 8 outlier, thereby capturing a stronger correlation between the 5\% and 5.5\% MBS outputs and modeling a shared latent function.

\begin{figure*}[h]
  \centering
  \includegraphics[width=0.6\textwidth]{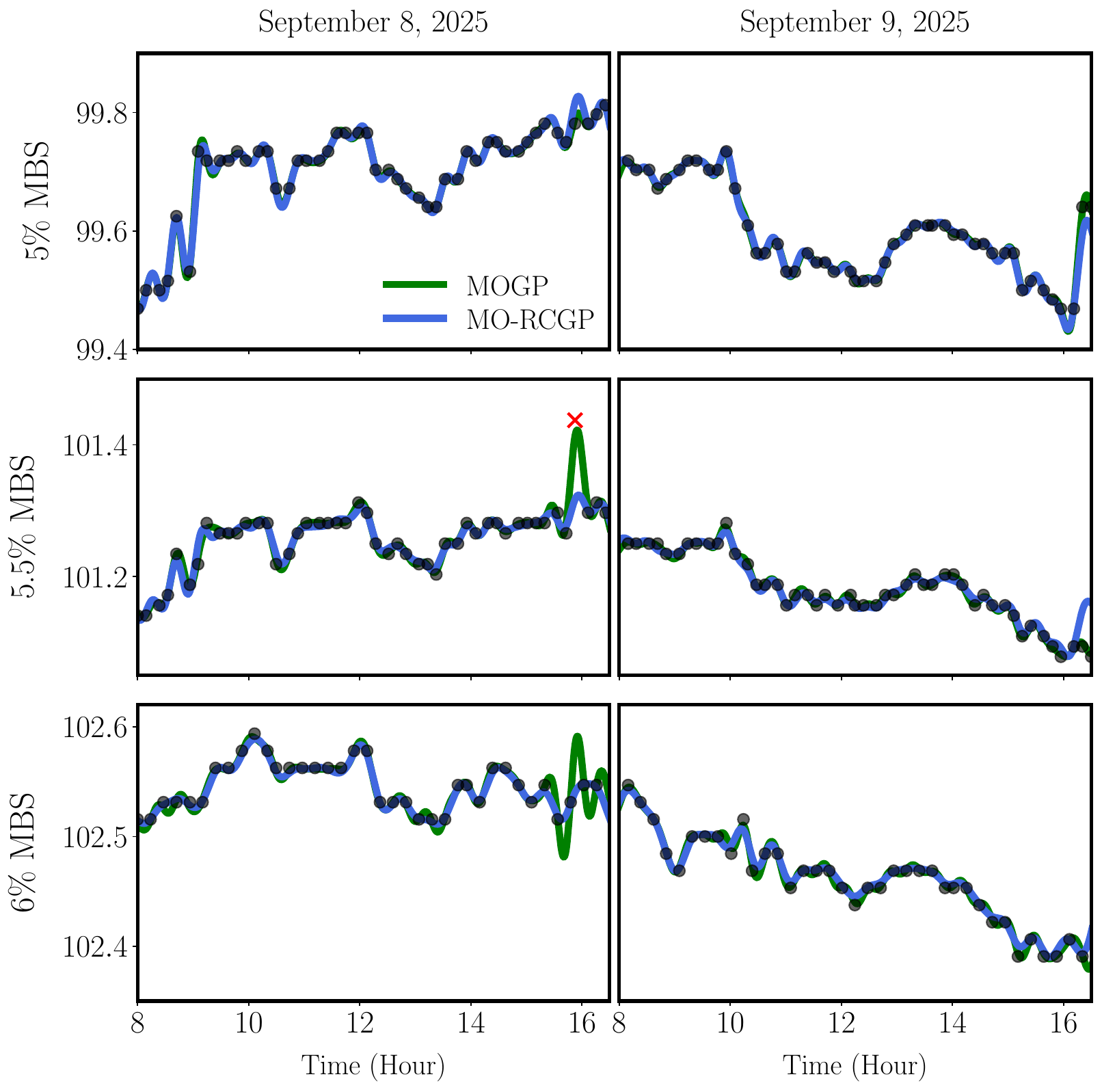}
  \caption{\textit{MOGP and MO-RCGP Predicted MBS Trade Values over Two Days.} We apply \textcolor{Green}{\textbf{MOGP}} and \textcolor{RoyalBlue}{\textbf{MO-RCGP}} regression to $T=3$ MBS using an ICM with $m_t(\mathbf{x}) = \frac{1}{N}\sum_{i=1}^{N} y_{i,t}$ for $t \in \{1,2,3\}$.}
  
  \label{fig:FNCL_2days}
  \vspace*{-0.3cm}
\end{figure*}  

\end{document}